\PassOptionsToPackage{table}{xcolor}
\PassOptionsToPackage{section}{placeins}
\documentclass{fairmeta}

\usepackage{amsmath,amssymb}
\usepackage{mathtools}
\usepackage{algorithm}
\usepackage{algpseudocode}
\usepackage{enumitem}

\DeclareMathOperator*{\argmin}{arg\,min\,}
\DeclareMathOperator{\tr}{tr}
\DeclareMathOperator{\softmax}{softmax}

\usepackage{tikz}
\usetikzlibrary{positioning, arrows.meta, fit, backgrounds, calc,
                decorations.pathreplacing}

\definecolor{brickred}{HTML}{C1121F}
\definecolor{molten}{HTML}{780000}
\definecolor{papaya}{HTML}{FDF0D5}
\definecolor{deepspace}{HTML}{003049}
\definecolor{steel}{HTML}{669BBC}
\definecolor{brickfill}{HTML}{F3E4E4}
\definecolor{oursred}{RGB}{252,228,228}
\colorlet{slate}{deepspace}
\colorlet{brick}{brickred}
\colorlet{grayline}{deepspace}
\colorlet{grayfill}{steel!18!white}
\colorlet{oursrow}{brickred!12}

\colorlet{metabg}{brickred!8}

\AtBeginDocument{\let\cite\citep}

\hypersetup{
  colorlinks,
  linkcolor=deepspace,
  citecolor=deepspace,
  urlcolor=deepspace,
  pdfborder={0 0 0},
}

\usepackage{titletoc}

\newtheorem{theorem}{Theorem}

\newtheorem{proposition}{Proposition}
\newtheorem{corollary}{Corollary}
\newtheorem{remark}{Remark}

\newtheorem{propositionS}{Proposition}

\newtheorem{lemmaS}{Lemma}

\newtheorem*{thmIrestated}{Theorem 1 (restated)}
\newtheorem*{thmIIrestated}{Theorem 2 (restated)}
\newtheorem*{propIrestated}{Proposition 1 (restated)}
\newtheorem*{corIrestated}{Corollary 1 (restated)}

\newcommand{\method}{NOVA-KV}
\newcommand{\ci}[2]{#1\,{\scriptsize$\pm$#2}}
\newcommand{\cib}[2]{\textbf{#1}\,{\scriptsize$\pm$#2}}

\renewcommand{\titlelogo}{\includegraphics[width=1.6cm]{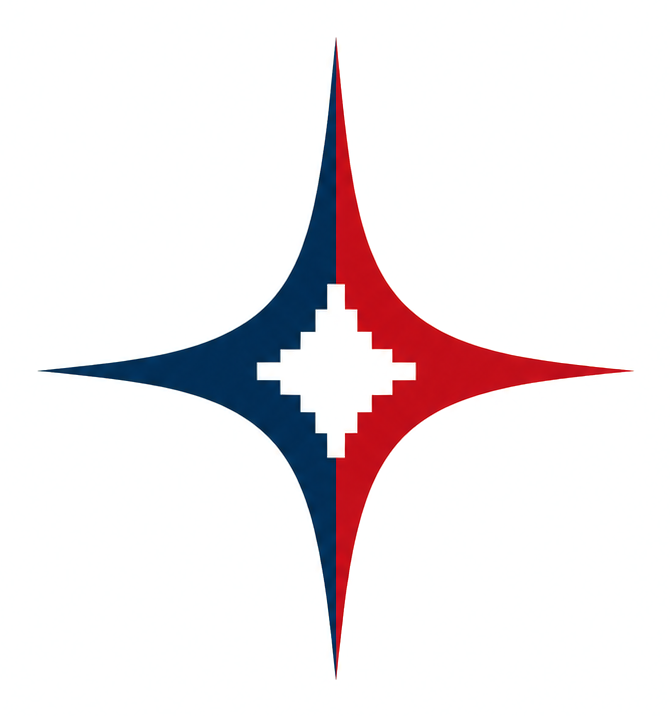}}

\title{Spend Bits Where Queries Look: KV Cache Vector Quantization with
Attention-Preserving Transforms}

\author[*]{Samuel Fern\'andez-Mendui\~na}
\author[*]{Amir Ziashahabi}
\author{Eduardo Pavez}
\author{Antonio Ortega}
\author{Salman Avestimehr}

\affiliation{Department of Electrical and Computer Engineering,
University of Southern California}

\contribution[*]{Equal contribution}

\abstract{Long-context LLM decoding reads the key-value (KV) cache at every step. Loading it takes longer than computing attention over it, so throughput is bandwidth-bound. Hence, reducing the cache size can raise both decoding speed and serving capacity. 
The challenge is to reduce cache size while preserving the attention products, keeping reconstruction cheap, and using a fixed per-token bit count. 
At two bits per element, the most competitive methods rely on orthogonal transforms. However, existing techniques are either data-oblivious or use the query statistics without deriving the transform from a distortion criterion. 
Moreover, they rely on transforms built on top of random or Hadamard rotations, which equalize variances across entries rather than compacting energy, and fixed-width scalar quantizers, which are suboptimal at low rates. In this paper, we formulate KV cache quantization as a transform coding problem in which distortion is the error in the attention products. 
We derive closed-form optimal transforms for keys and values from calibration statistics, under a high-resolution model. We show that the optimal key transform is not orthogonal and satisfies a generalized Parseval relation: the attention-aware distortion becomes mean-squared error (MSE) in the transform domain. Thus, we can use  MSE-optimal vector quantizers applied directly to the transformed key coefficients. 
To meet the fixed-width layout requirement, we show that grouping coefficients into equal-volume partitions makes equal-size codebooks attain the variable-rate optimum under the same high-resolution model. At two bits per element, our method, termed \method{}, recovers most of the long-context retrieval accuracy lost by scalar quantization methods at comparable throughput. The margin is widest on hybrid-attention mixture-of-experts models, an increasingly common design: on GPT-OSS-20B, prior two-bit transforms collapse at every context length, while \method{} remains effective.}

\website{https://amir-zsh.github.io/nova-kv}
\code{https://github.com/Amir-zsh/nova-kv}
\correspondence{\email{\{samuelf9, ziashaha\}@usc.edu}}

\begin{document}

\maketitle

\section{Introduction}

The key-value (KV) cache of a large language model (LLM) stores, for each past token and attention head, a key vector and a value vector, to
avoid recomputation during decoding
\cite{pope2023efficiently}. The size of the cache is a central bottleneck in inference \cite{shazeer2019fast}: it grows with context length and batch size, yet every decoding step reads it from memory in full \cite{sadhukhan2025magicdec}
(Fig.~\ref{fig:memory}). Since kernels compute attention products faster than the KV cache can be loaded from memory \cite{dao2022flashattention}, decoding throughput is bound by memory bandwidth rather than by compute \cite{kwon2023efficient}. Thus, \emph{KV cache compression} can reduce both per-step memory traffic and footprint, raising decoding speed and serving capacity.

\begin{figure}
\centering\includegraphics[width=0.665\linewidth]{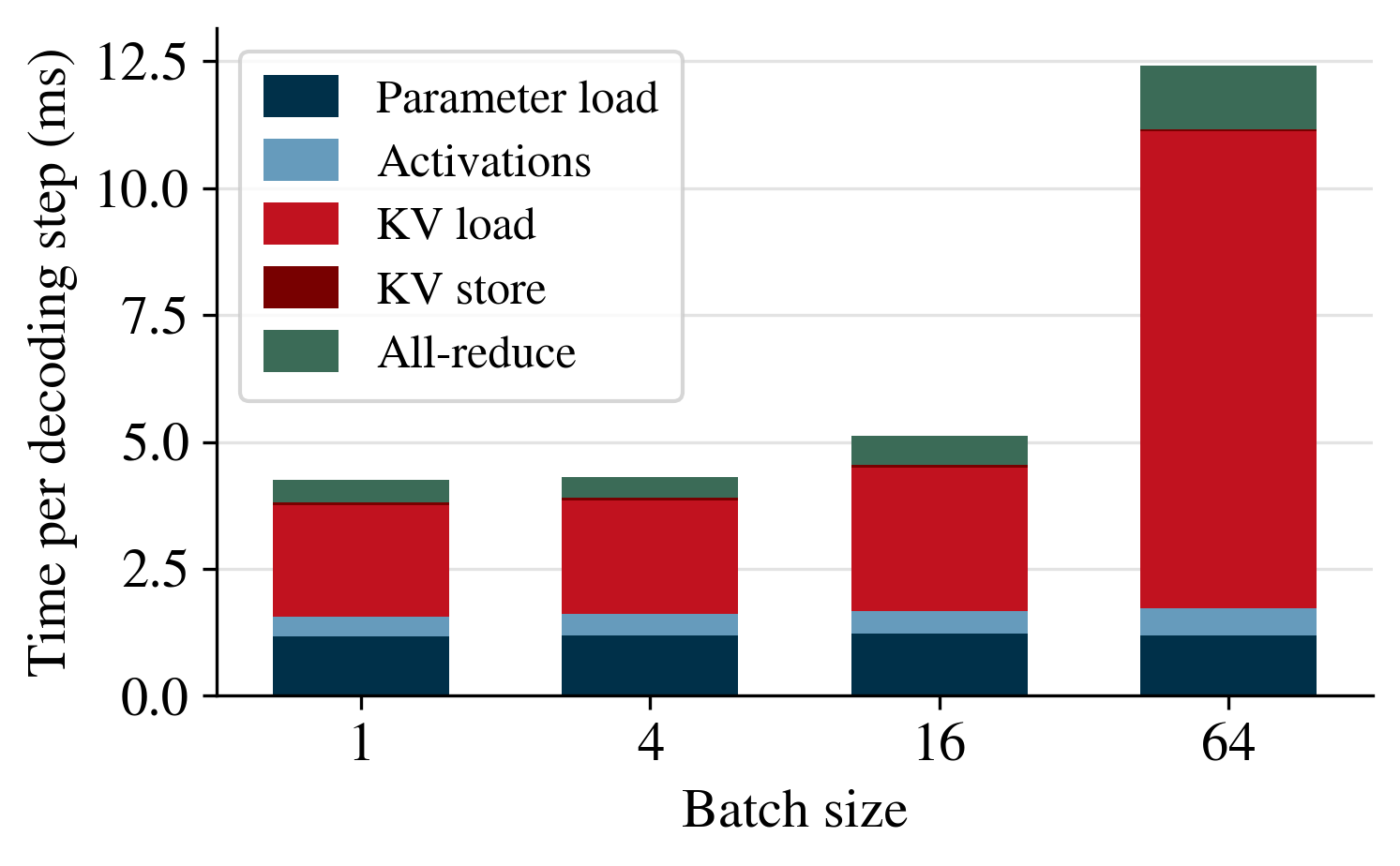}
    \caption{Measured per-step decode time for Qwen3-8B  ($8\times$H100, BF16, $S{=}16384$). At large batch sizes, KV cache loading dominates over parameter loading and compute.}    \label{fig:memory}
\end{figure}

Three requirements constrain the design of KV cache compressors. First, task accuracy must be preserved. Since capturing the effect of quantization on task accuracy is not straightforward,  we can model the problem using classical rate-distortion (RD) theory \cite{berger1971rate}, with the mean squared error (MSE) of the quantized values as the distortion metric. 
However, the cache contents are used to compute attention products, so we should minimize errors in the products rather than MSE in the factors  
\cite{zandieh2025turboquant, zhou2026oscar}.
Second, complexity on the read path \cite{williams2009roofline} has to be kept low, because the cache is read at every decoding step, so any per-element cost is incurred across the whole cache at every step. 
Third, a fixed-width layout must be used, so that tokens occupy a fixed number of bits and can be efficiently retrieved by serving engines \cite{kwon2023efficient}.

\begin{figure}[!htbp]
\centering
\resizebox{\textwidth}{!}{%
\begin{tikzpicture}[
  font=\large,
  std/.style={draw=grayline, fill=grayfill, rounded corners=3pt,
              minimum height=3.4em, inner sep=4pt, align=center, thick},
  new/.style={draw=brick, fill=brickfill, rounded corners=3pt,
              minimum height=3.4em, inner sep=4pt, align=center, thick},
  wide/.style={minimum width=7.6em},
  colT/.style={minimum width=76pt}, colQ/.style={minimum width=60pt},
  colL/.style={minimum width=56pt}, colI/.style={minimum width=88pt},
  plain/.style={align=center, inner sep=2pt},
  arr/.style={-{Stealth[length=5pt]}, thick, draw=slate},
  boxlbl/.style={font=\large, text=black, fill=white, draw=grayline,
                 rounded corners=3pt, inner sep=3pt},
  costlbl/.style={font=\large, text=black, align=center},
  sep/.style={dashed, semithick, draw=grayline},
  cell/.style={draw=grayline, thin}, pt/.style={fill=slate, draw=none},
  ctr/.style={fill=brick, draw=none},
  refc/.style={draw=grayline, dashed, semithick},
  gax/.style={-{Stealth[length=3pt]}, draw=grayline, semithick, opacity=0.7},
  tarr/.style={-{Stealth[length=4pt]}, draw=brick, line width=1pt},
  rarr/.style={-{Stealth[length=4pt]}, draw=slate, line width=1pt},
  inslbl/.style={font=\footnotesize, text=black, align=center},  
]
\def\gp{16pt}\def\gpa{34pt}
\def\ytop{30pt}\def\ybot{-260pt}

\coordinate (atop) at (0,\ytop); \coordinate (abot) at (0,\ybot);
\node[std, wide] (data) at (0,-11.8pt) {calibration data};
\node[new, wide, below=\gpa of data]  (stats)
  {statistics\\ $\mathbf{M}_q,\ \widetilde{\mathbf{S}}_k,\ \bar{\mathbf{k}},\ \mathbf{M}_s$};
\node[new, wide, below=\gpa of stats] (tfm)
  {transforms $\mathbf{R}_K,\ \mathbf{R}_V$\\ {\footnotesize(Thm.~1, Cor.~1)}};
\node[new, wide, below=\gpa of tfm]   (books)
  {codebooks $\mathcal{C}$,\\ grouping $\pi$ {\footnotesize(Thm.~2)}};
\draw[arr] (data) -- (stats); \draw[arr] (stats) -- (tfm); \draw[arr] (tfm) -- (books);
\begin{scope}[on background layer]
  \node[draw=grayline, rounded corners=5pt, inner sep=7pt, semithick,
        fit=(atop)(abot)(data)(stats)(tfm)(books)] (abox) {};
\end{scope}
\node[boxlbl, anchor=west] at ($(abox.north west)+(7pt,0)$) {(a) Offline calibration};

\begin{scope}[shift={(80pt,0)}]
\coordinate (btop) at (250pt,\ytop); \coordinate (bbot) at (250pt,\ybot);
\def\yK{-14pt}\def\yV{-212pt}\def\yC{-113pt}

\node[plain] (kin) at (9pt,\yK) {$\mathbf{k}_t$};
\node[plain] (vin) at (9pt,\yV) {$\mathbf{v}_t$};
\node[new, colT] (kt) at (71pt,\yK) {transform\\ $(\mathbf{k}_t-\bar{\mathbf{k}})\,\mathbf{R}_K$};
\node[new, colT] (vt) at (71pt,\yV) {transform\\ $\mathbf{v}_t\mathbf{R}_V$};
\node[new, colQ] (kq) at (157pt,\yK) {VQ encode\\ $\mathcal{Q}_K^{+}$};
\node[new, colQ] (vq) at (157pt,\yV) {encode\\ $\mathcal{Q}_V^{+}$};

\draw[sep] (217pt,\ytop) -- (217pt,\ybot);
\node[std, rotate=90, minimum width=232pt, minimum height=2.4em]
  (cache) at (217pt,\yC) {paged cache (fixed-width)};
\node[new, colL] (klu) at (275pt,\yK) {lookup\\ $\mathcal{Q}_K^{-}$};
\node[new, colL] (vlu) at (275pt,\yV) {lookup\\ $\mathcal{Q}_V^{-}$};
\node[new, colI] (kinv) at (363pt,\yK)
  {inverse transform\\ $(\mathbf{q}_t\mathbf{R}_K^{-\top})\,\widehat{\mathbf{K}}^{\top}$};
\node[new, colI] (vinv) at (363pt,\yV)
  {inverse transform\\ $\widehat{\mathbf{V}}\mathbf{R}_V^{\top}$};
\node[std, minimum height=4.2em] (att) at (448pt,\yC) {attend\\ (softmax)};
\node[plain] (outp) at (497pt,\yC) {$\mathbf{o}_t$};

\begin{scope}[shift={(71pt,-66pt)}]
  \draw[refc] (0,0) circle (20pt);
  \draw[gax] (0,0) -- (20pt,0);  \draw[gax] (0,0) -- (0,20pt);
  \draw[tarr] (0,0) -- (30:29pt); \draw[tarr] (0,0) -- (120:12pt);
\end{scope}
\node[costlbl] at (71pt,-95pt) {rotation $+$ stretch};
\draw[grayline, dotted, semithick] (kt.south) -- (71pt,-46pt);

\begin{scope}[shift={(71pt,-164pt)}]
  \draw[refc] (0,0) circle (20pt);
  \draw[gax] (0,0) -- (20pt,0);  \draw[gax] (0,0) -- (0,20pt);
  \draw[tarr] (0,0) -- (30:20pt); \draw[tarr] (0,0) -- (120:20pt);
\end{scope}
\node[costlbl] at (71pt,-135pt) {rotation};
\draw[grayline, dotted, semithick] (vt.north) -- (71pt,-184pt);

\begin{scope}[shift={(363pt,-66pt)}]
  \draw[refc] (0,0) circle (20pt);
  \draw[tarr, opacity=0.4] (0,0) -- (30:29pt);
  \draw[tarr, opacity=0.4] (0,0) -- (120:12pt);
  \draw[rarr] (0,0) -- (0:20pt); \draw[rarr] (0,0) -- (90:20pt);
\end{scope}
\node[costlbl] at (363pt,-95pt) {inverse};
\draw[grayline, dotted, semithick] (kinv.south) -- (363pt,-46pt);

\begin{scope}[shift={(363pt,-164pt)}]
  \draw[refc] (0,0) circle (20pt);
  \draw[tarr, opacity=0.4] (0,0) -- (30:20pt);
  \draw[tarr, opacity=0.4] (0,0) -- (120:20pt);
  \draw[rarr] (0,0) -- (0:20pt); \draw[rarr] (0,0) -- (90:20pt);
\end{scope}
\node[costlbl] at (363pt,-135pt) {inverse};
\draw[grayline, dotted, semithick] (vinv.north) -- (363pt,-184pt);

\begin{scope}[yshift=-86pt]
\begin{scope}
  \clip (126pt,2pt) rectangle (188pt,-42pt);
  \begin{scope}[shift={(157pt,-19pt)}]
    \fill[brickfill] (0:9pt) -- (60:9pt) -- (120:9pt) -- (180:9pt)
                   -- (240:9pt) -- (300:9pt) -- cycle;
  \end{scope}
  \foreach \i in {-2,...,2}{\foreach \j in {-2,...,2}{
    \pgfmathsetmacro{\cx}{157 + \i*13.5}
    \pgfmathsetmacro{\cy}{-19 + \j*15.6 + mod(abs(\i),2)*7.8}
    \begin{scope}[shift={(\cx pt,\cy pt)}]
      \draw[cell] (0:9pt) -- (60:9pt) -- (120:9pt) -- (180:9pt)
                -- (240:9pt) -- (300:9pt) -- cycle;
      \fill[ctr] (0,0) circle (0.9pt);
    \end{scope}}}
  \foreach \p in {(150,-13),(154,-24),(163,-25),(147,-20),(159,-10),
                  (167,-16),(151,-28),(169,-24)}{
    \fill[pt, opacity=0.7] \p circle (1.1pt);}
  \fill[slate] (163pt,-14pt) circle (1.9pt);
  \draw[-{Stealth[length=3pt]}, slate, line width=0.9pt]
        (163pt,-14pt) -- (158.2pt,-18.2pt);
  \fill[brick] (157pt,-19pt) circle (2.2pt);
\end{scope}
\node[costlbl] at (157pt,-56pt) {cells in $\mathbb{R}^{g}$};

\begin{scope}
  \clip (244pt,2pt) rectangle (306pt,-42pt);
  \foreach \i in {-2,...,2}{\foreach \j in {-2,...,2}{
    \pgfmathsetmacro{\dx}{275 + \i*13.5}
    \pgfmathsetmacro{\dy}{-19 + \j*15.6 + mod(abs(\i),2)*7.8}
    \fill[ctr, opacity=0.45] (\dx pt,\dy pt) circle (1.1pt);}}
  \begin{scope}[shift={(275pt,-19pt)}]
    \draw[cell] (0:9pt) -- (60:9pt) -- (120:9pt) -- (180:9pt)
              -- (240:9pt) -- (300:9pt) -- cycle;
    \fill[brick] (0,0) circle (2.4pt);
  \end{scope}
  \draw[-{Stealth[length=5pt]}, slate, line width=0.9pt] (258pt,-19pt) -- (275pt,-19pt);
\end{scope}
\node[costlbl] at (275pt,-56pt) {pick centroid};
\end{scope}

\draw[arr] (kin) -- (kt);  \draw[arr] (vin) -- (vt);
\draw[arr] (kt) -- (kq);   \draw[arr] (vt) -- (vq);
\draw[arr] (kq.east) -- (kq.east -| cache.north);
\draw[arr] (vq.east) -- (vq.east -| cache.north);
\draw[arr] (cache.south |- klu.west) -- (klu.west);
\draw[arr] (cache.south |- vlu.west) -- (vlu.west);
\draw[arr] (klu) -- (kinv); \draw[arr] (vlu) -- (vinv);
\draw[arr] (kinv.east) -| (att.north);
\draw[arr] (vinv.east) -| (att.south);
\draw[arr] (att) -- (outp);

\node[costlbl] (wcost) at (110pt,-250pt) {write: once per token};
\node[costlbl] (rcost) at (330pt,-250pt) {read: every decoding step};

\begin{scope}[on background layer]
  \node[draw=grayline, rounded corners=5pt, inner sep=7pt, semithick,
        fit=(btop)(bbot)(kin)(vin)(kq)(vq)(cache)(klu)(vlu)(kinv)(vinv)
            (att)(outp)(wcost)(rcost)] (bbox) {};
\end{scope}
\node[boxlbl, anchor=west] at ($(bbox.north west)+(7pt,0)$) {(b) Online inference};
\end{scope}
\begin{scope}[shift={(706pt,0)}]
\coordinate (ctop) at (0,\ytop); \coordinate (cbot) at (0,\ybot);
\coordinate (cl) at (-90pt,\ytop);
\coordinate (cr) at ( 90pt,\ytop);
\node[inner sep=0pt] (sc1) at (-8pt,-60pt)
  {\includegraphics[width=150pt,height=132pt]{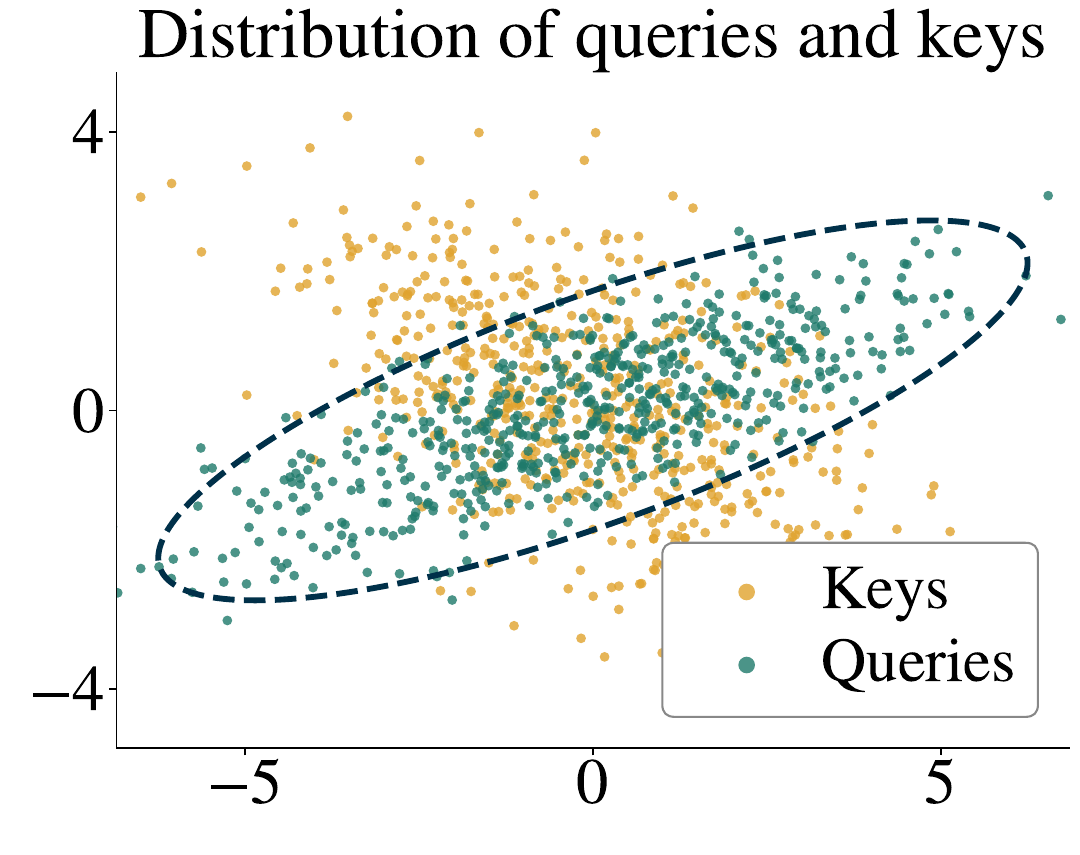}};
\node[inner sep=0pt] (sc2) at (-8pt,-200pt)
  {\includegraphics[width=150pt,height=132pt]{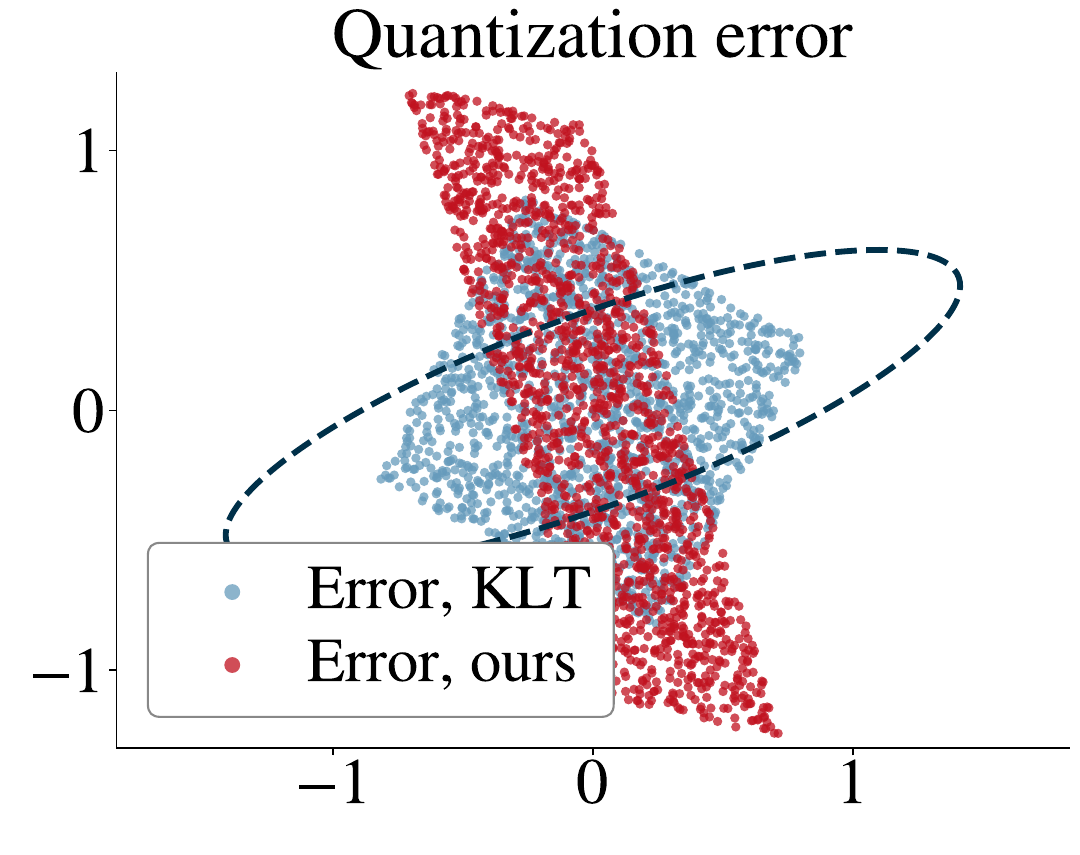}};
\node[draw=grayline, rounded corners=5pt, inner sep=7pt, semithick,
      fit=(ctop)(cbot)(cl)(cr)] (cbox) {};
\node[boxlbl, anchor=west] at ($(cbox.north west)+(7pt,0)$) {(c) Non-orthogonality};
\end{scope}
\end{tikzpicture}}%
\caption{\method{}. (a)~Calibration yields the transforms, the volume-equalized grouping~$\pi$, and the codebooks. (b)~Online, each new key and value is
transformed and encoded once on the write path; the read path, paid at every decoding step, is a table lookup. 
The dashed line separates the two paths. 
(c)~Top: keys, queries, and
query covariance (dashed). Bottom: quantization error at equal rate; our
transform steers the error away from the high-energy query directions, preserving
the attention logits.}
\label{fig:overview}
\end{figure}
Fixed-width quantization meets the last two requirements: decoding is a scaling or table lookup \cite{gray2002quantization}, and each token occupies a fixed number of bits. 
As the context grows, the KV cache dominates the memory footprint, so the reduction in inference time approaches the KV compression ratio itself. However, to preserve task accuracy, direct quantization of KV entries requires at least 4 bits per element \cite{liu2024kivi, sheng2023flexgen}; at 2 bits, accuracy degrades \cite{zhou2026oscar}. 
Transform coding \cite{goyal2001theoretical} has been proposed as an alternative to address this problem \cite{zandieh2025turboquant, zhou2026oscar}.

In classical transform coding, a \textit{data-dependent} linear operator (the transform) decorrelates the inputs;  scalar quantization is applied to the outputs, followed by entropy coding. 
Because these transforms achieve energy compaction, bits can be allocated proportionally to the coefficient variances, achieving lower overall distortion for a given average rate compared to directly encoding without a transform \cite{goyal2001theoretical}. 
Existing transform-based KV cache methods depart from this classical scheme. 
In some cases, they use transforms, such as random or Hadamard rotations \cite{ashkboos2024quarot, zandieh2025turboquant}, that are not optimized for the data. 
While data-driven transforms have been proposed, these are assembled heuristically from second-order statistics rather than as solutions to a specific rate-distortion optimization problem  \cite{zhou2026oscar}. 
Finally, existing designs use fixed-rate scalar quantization, with transforms that  \emph{flatten} the variances so that the same rate can be applied for every vector entry; flattening removes the variance spread (energy compaction) that bit allocation exploits, so the fixed-width layout can be achieved but at the expense of reduced coding efficiency. Scalar quantization with variable-length entropy coding is effective (even at low rates) \cite{goyal2001theoretical}, but requires sequential decoding,
breaking the fixed-width layout and the parallel read path requirements.

In this paper, we formulate KV cache compression as a transform coding
problem \cite{goyal2001theoretical} where distortion is set to be the attention product error. This error separates into two terms, one for
the keys and one for the values, recovering the two criteria that prior
work adopts separately \cite{zandieh2025turboquant, zhou2026oscar}. 
We derive, in closed form, the optimal transform for each distortion term in the high-resolution regime. 
For \textit{values}, the optimal transform is orthogonal and is obtained from the same covariance that prior work introduced as a heuristic \cite{zhou2026oscar}. 
For \textit{keys}, our focus in this work, we show that the optimal transform is not orthogonal.  
All prior designs used orthogonal transforms, which we show are suboptimal in practice when the goal is to minimize the key-query inner product.

The transform we derive for the keys satisfies a generalized Parseval relation: the MSE in the transform domain equals the key-query inner-product distortion in the original domain. This relation reduces the design of the quantizer to a standard MSE problem, so any MSE-optimal quantizer can act on the transform coefficients. 
Based on this reduction, we propose vector quantization of the key coefficients with an optimized grouping of entries into vectors. 
Transform coding would normally allocate more bits to high-variance entries, resulting in codebooks of different sizes, which breaks the fixed-width layout that serving engines require. 
Instead, our grouping approximately equalizes group volumes, allowing equal-size codebooks to approach the variable-rate distortion bound (cf.~Thm.~\ref{thm:grouping}). This combines the advantage of 
flattening (fixed bits per quantized input), while benefiting from energy compaction (fewer bits per input to achieve same distortion).

We integrate our method, termed \method{} (Non-Orthogonal Vector-quantized Attention for KV cache), into a production serving stack \cite{zheng2024sglang} with a fused decoding kernel \cite{tillet2019triton} (Fig.~\ref{fig:overview}). 
At 2 bits per element, \method{} recovers most of the accuracy lost by scalar methods in long-context retrieval. In terms of speed, decoding is on par with the scalar baseline \cite{zhou2026oscar} at long
contexts.

Our contributions are: 
1) data-driven transforms that minimize a high-resolution bound on the attention product distortion (Sec.~\ref{sec:transform}), 
2) an analysis of vector quantization for the keys in the KV cache, including the optimal fixed-rate grouping of transform coefficients under an independent-Gaussian high-resolution model (Sec.~\ref{sec:vq}), 
and 3) a serving-compatible implementation with
a fused decoding kernel, which we test against state-of-the-art 2-bit
methods (Sec.~\ref{sec:experiments}).

\section{Preliminaries}
\label{sec:preliminaries}

\textbf{Notation.} Uppercase and lowercase bold letters, such as $\mathbf A$ and $\mathbf a$, denote matrices and vectors, respectively. The $n$th entry of $\mathbf a$ is $a_n$, and the $(i, j)$th entry of $\mathbf A$ is $A_{ij}$. Regular letters denote scalar values. We use row-vector notation and assume the eigenvalues are sorted in decreasing order. We use $(\cdot)^+$ to denote the pseudoinverse.

\subsection{Attention and the KV Cache}
\label{sec:attn-cache}
Consider a single attention head \cite{vaswani2017attention} acting on a
sequence of hidden states $\{\mathbf{x}_t\}_{t=1}^{T}$, $\mathbf{x}_t \in
\mathbb{R}^{1\times d}$, with projection weights $\mathbf{W}_Q,
\mathbf{W}_K, \mathbf{W}_V \in \mathbb{R}^{d\times d}$, where $d$ denotes
the head dimension. The \textit{query}, \textit{key}, and \textit{value} vectors are
$\tilde{\mathbf{q}}_t = \mathbf{x}_t \mathbf{W}_Q$,
$\tilde{\mathbf{k}}_t = \mathbf{x}_t \mathbf{W}_K$,
$\mathbf{v}_t = \mathbf{x}_t \mathbf{W}_V$. We use rotary position
embeddings (RoPE) \cite{su2024roformer}: queries and keys are rotated by a
position-dependent block-diagonal matrix $\boldsymbol{\Phi}_t \in
\mathbb{R}^{d\times d}$ before the inner product,
$\mathbf{q}_t = \tilde{\mathbf{q}}_t \boldsymbol{\Phi}_t$ and
$\mathbf{k}_t = \tilde{\mathbf{k}}_t \boldsymbol{\Phi}_t$, while values
carry no positional encoding. Symbols without a tilde denote post-RoPE vectors. 
These are stacked into
$\mathbf{Q} = [\mathbf{q}_1; \dots; \mathbf{q}_T]$,
$\mathbf{K} = [\mathbf{k}_1; \dots; \mathbf{k}_T]$,
$\mathbf{V} = [\mathbf{v}_1; \dots; \mathbf{v}_T] \in \mathbb{R}^{T\times d}$.
The scores and outputs are:
\begin{equation}
\label{eq:scores}
  \mathbf{S} = \operatorname{softmax}_{\mathrm{row}}\!\left(
      {\mathbf{Q}\mathbf{K}^\top}/{\sqrt{d}} \right)
      \in \mathbb{R}^{T\times T}, \quad 
  \mathbf{O} = \mathbf{S}\mathbf{V}.
\end{equation}
During autoregressive decoding, generating token $t+1$ requires the query
$\mathbf{q}_{t+1}$ to attend over all previous positions. To avoid
recomputing past projections, the \emph{KV cache} stores in memory
$\mathbf{K}_{1:t} = [\mathbf{k}_1; \dots; \mathbf{k}_t]$ and
$\mathbf{V}_{1:t} = [\mathbf{v}_1; \dots; \mathbf{v}_t]$.

\subsection{Transform coding}
\label{ssec:tc}
Classical transform coding \cite{goyal2001theoretical} consists of three
stages: a transform that produces uncorrelated coefficients, scalar quantization of each coefficient, and entropy coding of the indices. 
Entropy decoding is sequential and cannot be easily parallelized, so it is not well-suited for our problem, where the cache is read at every decoding step, and parallel decoding is needed to maintain high throughput. 

\textbf{Transform.} By Parseval's relation, an orthogonal $\mathbf R$ preserves the MSE,
$\|\mathbf{x} - \widehat{\mathbf{x}}\|_2 = \|\mathbf{x}\mathbf{R} -
\mathcal{Q}(\mathbf{x}\mathbf{R})\|_2$, so the quantizer can be designed in
the transform domain; for Gaussian sources in the high-resolution regime
the Karhunen-Lo\`{e}ve transform (KLT, the eigenbasis of the covariance), is optimal
\cite{goyal2001theoretical,gersho1992vector}. 
A transform achieves \emph{energy compaction} when only a few coefficients in the transform have high variance. The KLT is optimal in the sense that its leading $p$ coefficients capture the most
variance. Thus, we can find the KLT, $\mathbf{R}^\star$, by optimizing the rank-$p$ reconstruction error: 
given centered $\mathbf{x}_j$, $j = 1,\dots,M$, let
\begin{equation}
\label{eq:klt_optimal}
  \mathbf{R}^\star_p
  = \operatorname*{arg\,min}_{\mathbf{R}\,\in\,\mathbb{R}^{d\times p}} \, 
  \sum_{j=1}^{M} \, 
  \bigl\|\,\mathbf{x}_j - \mathbf{x}_j\,\mathbf{R}\mathbf{R}^{+}\bigr\|_2^{2}.
\end{equation}
The KLT can be computed via eigendecomposition of the covariance matrix, and $\mathbf{R}^\star_p$ is the KLT truncated to the $p$
eigenvectors corresponding to the dominant eigenvalues of the data covariance. We will show in Sec.~\ref{sec:key_trans} that this is not optimal in our setting.

\textbf{Quantization.} A vector quantizer operates on vectors of dimension $g$ and with an average rate $b$ bits per entry, spending $gb$ bits per
vector. An encoder $\mathcal{Q}^{+}:\mathbb{R}^{1\times g}\to
\{1,\dots,2^{gb}\}$ assigns each subvector the index of its nearest
codeword, and a decoder $\mathcal{Q}^{-}$ returns that codeword from a
codebook $\mathcal{C}\subset\mathbb{R}^{1\times g}$ of $2^{gb}$ entries;
their composition is $\mathcal{Q}$ 
\cite{lloyd1982least,gersho1992vector}. \emph{Scalar quantization} (SQ), 
$g=1$, acts on each coordinate independently. For a given rate, 
\emph{vector quantization} (VQ), $g>1$, achieves distortion no larger than
SQ, since the products of scalar codebooks are valid vector
codebooks \cite{lookabaugh1989high}. 
Classical transform coding pairs SQ with variable-length entropy coding; however, as discussed earlier, entropy coding prevents parallel decoding, reducing the efficiency of the read path and impacting throughput. 

As a summary, given an invertible $\mathbf{R} \in
\mathbb{R}^{d\times d}$ and a vector $\mathbf x$, transform coding encodes it as
$\mathcal{Q}^{+}(\mathbf{x}\mathbf{R})$ and reconstructs it as
$\widehat{\mathbf{x}} =
\mathcal{Q}(\mathbf{x}\mathbf{R})\,\mathbf{R}^{-1}$.

\subsection{Related work}

\begin{table}[!htbp]
\small
\centering
\caption{KV cache compressors. \emph{Transform}: linear map applied before quantization. \emph{Criterion}: the objective the transform
is derived from.
\emph{Target}: the resource reduced; bandwidth-oriented methods (bw) keep the read
path to a scaling or a lookup, whereas storage-oriented methods (sto) pay a penalty in the read path. Had. stands for Hadamard, orth. stands for orthogonal.}
\label{tab:related}
\setlength{\tabcolsep}{2.5pt}
\renewcommand{\arraystretch}{0.9}
\begin{tabular}{lcccc}
\toprule
\textbf{Method} & \textbf{Transform} & \textbf{Criterion} & \textbf{Quant.}
  & \textbf{Target} \\
\midrule
KIVI        & None              & Heuristic                 & SQ & sto \\
KVQuant     & None             & Heuristic                 & SQ & sto \\
Kitty       & None              & Heuristic                 & SQ & sto \\
CommVQ      & None   & key MSE            & VQ & sto \\
\midrule
QuaRot      & rand. Had. orth.    & Heuristic                 & SQ & bw \\
TurboQuant  & rand. orth.    & inner-prod   & SQ & bw \\
RotateKV    & Had.+calib.\ orth.   & Heuristic                 & SQ & bw \\
OSCAR       & calib.+Had.\ orth.   & Heuristic                & SQ & bw \\
\rowcolor{oursrow}
\textbf{\method{}}   & \textbf{calib.\ non-orth.} & attn.\ products & VQ & bw \\
\bottomrule
\end{tabular}
\end{table}
\textbf{KV cache reduction} can be achieved by combining several complementary approaches:  
(i) token eviction, where cached tokens are dropped based on their
estimated importance to future attention \cite{zhang2023h2o, li2024snapkv,
xiao2024efficient, devoto2025expected}; (ii) low-rank designs, which shrink
the head dimension \cite{liu2024deepseek}; and (iii) quantization (our approach). 

\textbf{Scalar KV quantization} methods quantize entries independently and do not consider the attention products. 
Some designs perform per-channel quantization of keys and per-token quantization of values without accounting for input data statistics \cite{liu2024kivi}. Others fit the quantizers to the data, typically learning per-channel codebooks from the cache statistics \cite{hooper2024kvquant, cai2025nqkv}, or boosting sensitive channels at the cost of per-channel metadata and non-uniform layouts \cite{xia2025kitty}. 

\textbf{Transform coding.} QuaRot \cite{ashkboos2024quarot} uses random or Hadamard
rotations, and TurboQuant \cite{zandieh2025turboquant} debiases inner products. SpinQuant \cite{liu2025spinquant}
and RotateKV \cite{su2025rotatekv} learn the rotation on
calibration data. \citet{zhou2026oscar} reweight the key error by
query statistics. All are orthogonal and paired with SQ. We derive the transform from the RD problem using the attention-product distortion and show that the optimal key transform is
non-orthogonal.

\textbf{Vector quantization.} Unlike model weights \cite{chee2023quip}, KV caches are generated online and read at every decoding step, which changes the VQ requirements. \citet{zhang2024kv} explores joint codebooks across channels. 
CommVQ \cite{li2025commvq} targets memory \emph{storage} using VQ on the keys with large codebooks whose reconstruction requires a matrix product paid over the whole cache at every step. 
We target bandwidth-bound serving by applying VQ to small groups of transform-domain entries and restricting reconstruction to fixed-width codebook lookups that can be fused into the attention kernel. Thus, our design jointly provides an
attention-derived transform, fixed-width VQ, and a lookup-based reconstruction. 
In summary, NOVA-KV is the only method that targets attention products and optimizes both the transform and quantization for this purpose (see Table~\ref{tab:related}).

\section{Attention-preserving transforms}
\label{sec:transform}
We aim to design transforms ($\mathbf{R}_K, \mathbf{R}_V$)  and quantizers ($\mathcal{Q}_K(\cdot), \mathcal{Q}_V(\cdot)$) to compress keys and values
\begin{equation}
  \widehat{\mathbf{K}} = \mathcal{Q}_K(\mathbf{K}\mathbf{R}_K)\,\mathbf{R}_K^{-1},
  \qquad
  \widehat{\mathbf{V}} = \mathcal{Q}_V(\mathbf{V}\mathbf{R}_V)\,\mathbf{R}_V^{-1}, 
\end{equation}
while  
minimizing the attention output error 
$\|\mathbf{S}\mathbf{V} - \widehat{\mathbf{S}}\widehat{\mathbf{V}}\|_F^2$,
where $\widehat{\mathbf{S}}$ is obtained by replacing $\mathbf{K}$ by 
$\widehat{\mathbf{K}}$ in \eqref{eq:scores}. 
In the high-resolution regime, second order terms on the quantization errors vanish, and the target error can be upper-bounded by the sum of two terms (Appendix~\ref{sec:decoupling}):
\begin{equation}
\label{eq:separation}
    \|\mathbf{S}\mathbf{V} - \widehat{\mathbf{S}}\widehat{\mathbf{V}}\|_F \lesssim   
  \|\mathbf{S} - \widehat{\mathbf{S}}\|_F\|\mathbf{V}\|_2
  + \|\mathbf{S}(\mathbf{V} - \widehat{\mathbf{V}})\|_F.
\end{equation}
We use the score error $\|\mathbf{S} - \widehat{\mathbf{S}}\|_F$ as a criterion to derive $\mathbf{R}_K$  and the value error weighted by the scores $\|\mathbf{S}(\mathbf{V} - \widehat{\mathbf{V}})\|_F$ as a criterion to derive $\mathbf{R}_V$. 
$\mathcal{Q}_K$ consists of VQs 
with fixed codebook sizes applied to subvectors (Sec.~\ref{sec:vq}), while 
$\mathcal{Q}_V$ applies SQ to each transformed entry \cite{zhou2026oscar}.

\subsection{Key transform}
\label{sec:key_trans}
Since row-wise
softmax is $1/2$-Lipschitz \cite{gao2017properties}, the key-dependent term in \eqref{eq:separation}, 
$\|\mathbf{S}-\widehat{\mathbf{S}}\|_F$, 
can be replaced by the logit error: 
$\|\mathbf{S}-\widehat{\mathbf{S}}\|_F \le {1}/(2\sqrt{d})
\|\mathbf{Q}\mathbf{K}^\top - \mathbf{Q}\widehat{\mathbf{K}}^\top\|_F$. 
Then, defining $\mathbf{M}_q = \mathbf{Q}^\top\mathbf{Q}$, our goal is to design a transform and a quantizer that minimize:
\begin{equation}
\bigl\| \mathbf{Q}\mathbf{K}^\top -
      \mathbf{Q}\widehat{\mathbf{K}}^{\top} \bigr\|_F^2
    \\ = \sum_{j=1}^{M} \, (\mathbf{k}_j - \widehat{\mathbf{k}}_j)\,
      \mathbf{M}_q\, (\mathbf{k}_j - \widehat{\mathbf{k}}_j)^\top, 
  \label{eq:qmse}
\end{equation}
for a given rate. 
Denoting 
$\|\mathbf{a}\|_{\mathbf{M}}^2 = \mathbf{a}\,\mathbf{M}\,\mathbf{a}^\top$, the right side of 
 \eqref{eq:qmse} can be rewritten as 
  $\sum_j \|\mathbf{k}_j-\hat{\mathbf{k}}_j\|^2_{\mathbf{M}_q}$, i.e.,    
      an input-weighted quadratic
distortion with \emph{constant} sensitivity matrix $\mathbf{M}_q$
\cite{linder1999high}. 

To understand how a $\mathbf{M}_q$-based distortion affects the design, consider quantization based on companding. In the SQ case, this involves defining an invertible mapping $h(x)$, so that given $\mathcal{Q}_u$, a uniform SQ, $\hat{x} = h^{-1}(\mathcal{Q}_u(h(x)))$. For vectors, this can be generalized by selecting an invertible matrix $\mathbf U$, so that $h_U(\mathbf x) = h(\mathbf{x}\mathbf{U})$. Then, companding can be applied directly by using $\mathcal{Q}_u$ for each entry (e.g., entry-wise SQ) and defining $\hat{\mathbf{x}} = h_U^{-1}(\mathcal{Q}_u(h_U(\mathbf{x})))$. High-resolution quantization theory shows that we can minimize the $\mathbf{M}_q$ distortion in \eqref{eq:qmse} via companding and an entry-wise SQ by selecting the optimal $h_U(\cdot)$, i.e., the one that minimizes \eqref{eq:qmse}, satisfying
$h_U'(\mathbf x)h_U'(\mathbf x)^{\top} =  c\,\mathbf{M}_q$, which, for linear companding characterized by a matrix $\mathbf U$ implies $ \mathbf U\mathbf U^\top = c\,\mathbf{M}_q$ \cite{linder1999high}. 
Orthogonal transforms are
suboptimal for the high-resolution regime unless $\mathbf{M}_q \propto \mathbf{I}$. 

The condition $\mathbf{U}\mathbf{U}^\top = c\,\mathbf{M}_q$ determines the
transform only up to an orthogonal factor: if $\mathbf{U}$ satisfies it, so does
$\mathbf{U}\mathbf{W}$ for any orthogonal $\mathbf{W}$, since
$\mathbf{U}\mathbf{W}\mathbf{W}^\top\mathbf{U}^\top =
\mathbf{U}\mathbf{U}^\top$. We fix this factor by optimizing energy compaction,
as the KLT does in \eqref{eq:klt_optimal}, but under the $\mathbf{M}_q$-weighted
cost of \eqref{eq:qmse} rather than the Euclidean one. Assume $M$ tokens in the calibration set and
let $\bar{\mathbf{k}} = {1}/{M}\sum_j \mathbf{k}_j$; centering the keys has
no cost to attention (Appendix~\ref{sec:lemma-offset}). We write $\mathbf{R}^{\dagger} = (\mathbf{R}^\top\mathbf{M}_q^{-1}\mathbf{R})^{-1}
\mathbf{R}^\top\mathbf{M}_q^{-1}$ for the $\mathbf{M}_q$-weighted pseudoinverse.

\begin{theorem}
\label{thm:transform}
Let $\widetilde{\mathbf{k}}_j = \mathbf{k}_j - \bar{\mathbf{k}}$,
$\widetilde{\mathbf{S}}_k = \sum_j \widetilde{\mathbf{k}}_j^\top\widetilde{\mathbf{k}}_j$,
and let
\begin{equation}
\label{eq:key_objective}
\mathbf{R}^\star_{K,p}
  = \operatorname*{arg\,min}_{\mathbf{R}\,\in\,\mathbb{R}^{d\times p}}\, 
  \sum_{j=1}^{M} \, 
  \bigl\|\,\widetilde{\mathbf{k}}_j - \widetilde{\mathbf{k}}_j\,\mathbf{R}\mathbf{R}^{\dagger}\bigr\|_{\mathbf{M}_q}^{2}.\end{equation}
Then $\mathbf{R}^\star_{K,p} = \mathbf{M}_q^{1/2}\,\mathbf{E}_{1:p}$, where
$\mathbf{E}\boldsymbol{\Lambda}\mathbf{E}^\top$ is the eigendecomposition of
$\mathbf{M}_q^{1/2}\,\widetilde{\mathbf{S}}_k\,\mathbf{M}_q^{1/2}$.
\end{theorem}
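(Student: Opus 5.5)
The plan is to reduce \eqref{eq:key_objective} to the Euclidean rank-$p$ problem \eqref{eq:klt_optimal} with a whitening change of variables. We then invoke the classical KLT/Eckart--Young result for that problem. Throughout we assume $\mathbf{M}_q=\mathbf{Q}^\top\mathbf{Q}$ is positive definite, so that $\mathbf{M}_q^{\pm1/2}$ exist. We also restrict to $\mathbf{R}$ of full column rank $p$, which is needed for $\mathbf{R}^\dagger$ to be defined.

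\textbf{Step 1: rewrite the weighted norm.} Since $\|\mathbf{a}\|_{\mathbf{M}_q}^2=\|\mathbf{a}\mathbf{M}_q^{1/2}\|_2^2$, we set $\mathbf{y}_j=\widetilde{\mathbf{k}}_j\mathbf{M}_q^{1/2}$ and reparametrize $\mathbf{G}=\mathbf{M}_q^{-1/2}\mathbf{R}\in\mathbb{R}^{d\times p}$. This map is a bijection on full-column-rank matrices. Substituting $\mathbf{R}=\mathbf{M}_q^{1/2}\mathbf{G}$ into the definition of $\mathbf{R}^\dagger$ gives $\mathbf{R}^\top\mathbf{M}_q^{-1}\mathbf{R}=\mathbf{G}^\top\mathbf{G}$ and $\mathbf{R}^\top\mathbf{M}_q^{-1}=\mathbf{G}^\top\mathbf{M}_q^{-1/2}$. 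Hence
\begin{equation*}
\mathbf{R}\mathbf{R}^\dagger\mathbf{M}_q^{1/2}
=\mathbf{M}_q^{1/2}\mathbf{G}(\mathbf{G}^\top\mathbf{G})^{-1}\mathbf{G}^\top
=\mathbf{M}_q^{1/2}\boldsymbol{\Pi}_{\mathbf{G}},
\end{equation*}
where $\boldsymbol{\Pi}_{\mathbf{G}}$ is the Euclidean orthogonal projector onto the column space of $\mathbf{G}$. Therefore each summand becomes $\|(\widetilde{\mathbf{k}}_j-\widetilde{\mathbf{k}}_j\mathbf{R}\mathbf{R}^\dagger)\mathbf{M}_q^{1/2}\|_2^2=\|\mathbf{y}_j-\mathbf{y}_j\boldsymbol{\Pi}_{\mathbf{G}}\|_2^2$. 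The objective is exactly the unweighted problem \eqref{eq:klt_optimal} in the whitened variables $\mathbf{y}_j$, because $\mathbf{G}\mathbf{G}^+=\boldsymbol{\Pi}_{\mathbf{G}}$. This identity is the point of choosing the $\mathbf{M}_q^{-1}$-weighted pseudoinverse: it makes $\mathbf{R}\mathbf{R}^\dagger$ an $\mathbf{M}_q$-orthogonal projector.

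\textbf{Step 2: solve the Euclidean problem.} We have $\sum_j\|\mathbf{y}_j-\mathbf{y}_j\boldsymbol{\Pi}\|^2=\tr(\mathbf{C})-\tr(\boldsymbol{\Pi}\mathbf{C})$ with $\mathbf{C}=\sum_j\mathbf{y}_j^\top\mathbf{y}_j=\mathbf{M}_q^{1/2}\widetilde{\mathbf{S}}_k\mathbf{M}_q^{1/2}$. So we must maximize $\tr(\boldsymbol{\Pi}\mathbf{C})$ over rank-$p$ orthogonal projectors. By Ky Fan's maximum principle (equivalently, the KLT optimality quoted after \eqref{eq:klt_optimal}), the maximum is $\sum_{i\le p}\lambda_i$. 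It is attained when the range of $\boldsymbol{\Pi}$ is the span of $\mathbf{E}_{1:p}$, the top-$p$ eigenvectors of $\mathbf{C}$. We may take $\mathbf{G}=\mathbf{E}_{1:p}$, and mapping back gives $\mathbf{R}^\star_{K,p}=\mathbf{M}_q^{1/2}\mathbf{E}_{1:p}$.

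\textbf{Main obstacle: uniqueness.} The objective depends on $\mathbf{R}$ only through its column space. Any $\mathbf{R}^\star\mathbf{A}$ with $\mathbf{A}$ invertible, and any choice of basis within a repeated eigenvalue at $\lambda_p=\lambda_{p+1}$, gives the same value. The statement should therefore be read as saying that $\mathbf{M}_q^{1/2}\mathbf{E}_{1:p}$ is a minimizer. It is the canonical one, with $\mathbf{E}$ orthonormal and sorted, so the coefficients $\mathbf{y}_j\mathbf{E}_{1:p}$ are uncorrelated with variances $\lambda_i$. The minimizing subspace is unique whenever $\lambda_p>\lambda_{p+1}$. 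I would state this caveat explicitly. Beyond it, the only genuine check is the projector identity in Step~1; everything else is standard.
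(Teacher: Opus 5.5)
Your proof is correct. Its last step matches the paper's Step~5: whiten by $\mathbf{M}_q^{1/2}$ and apply Ky Fan to $\mathbf{A}=\mathbf{M}_q^{1/2}\widetilde{\mathbf{S}}_k\mathbf{M}_q^{1/2}$. The routes to that step differ.

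You take the theorem as stated, with the reconstruction already fixed to the centered projection $\widetilde{\mathbf{k}}_j\mathbf{R}\mathbf{R}^\dagger$. The identity $\mathbf{R}\mathbf{R}^\dagger\mathbf{M}_q^{1/2}=\mathbf{M}_q^{1/2}\boldsymbol{\Pi}_{\mathbf{G}}$, with $\mathbf{G}=\mathbf{M}_q^{-1/2}\mathbf{R}$, then reduces the problem directly to \eqref{eq:klt_optimal}.

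The paper proves a stronger statement:
\begin{enumerate}
\item It leaves the codes $\mathbf{r}_j$, a synthesis matrix $\mathbf{D}$, and an offset $\mathbf{c}$ free in $\widehat{\mathbf{k}}_j=\mathbf{r}_j\mathbf{D}+\mathbf{c}$.
\item It derives the generalized least-squares codes and $\mathbf{c}^*=\bar{\mathbf{k}}$ from first-order conditions, which yields the $\mathbf{M}_q$-orthogonal projector.
\item Only then does it change variables, on the synthesis side, via $\mathbf{B}=\mathbf{D}\mathbf{M}_q^{1/2}$.
\end{enumerate}
With $\mathbf{D}=\mathbf{R}^\dagger$ one has $\boldsymbol{\Pi}_{\mathbf{B}}=\boldsymbol{\Pi}_{\mathbf{G}}$, so your analysis-side whitening and the paper's synthesis-side whitening are dual.

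What each route buys: yours is shorter and reuses the KLT result as a black box. The paper's shows that the form you take as given (mean centering, linear codes, and the $\mathbf{M}_q^{-1}$-weighted pseudoinverse as decoder) is itself optimal among all rank-$p$ affine reconstructions. So the restricted problem in the theorem loses nothing.

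Your uniqueness caveat is right, and it applies equally to the paper. The paper simply picks $\mathbf{B}^*=\mathbf{E}_{1:p}^\top$ among all matrices with the same row space, and resolves the free offset by centering the codes.
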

Proof: Appendix~\ref{sec:proof-thm1}.   We construct $\mathbf R_K$ by computing the eigendecomposition of $\mathbf{M}_q^{1/2}\,\widetilde{\mathbf{S}}_k\,\mathbf{M}_q^{1/2}$ and then multiplying by $\mathbf{M}_q^{1/2}$; $\mathbf{R}^\star_{K,p}$ corresponds to the matrix truncated by keeping the eigenvectors corresponding to the dominant eigenvalues of $\mathbf{M}_q^{1/2}\,\widetilde{\mathbf{S}}_k\,\mathbf{M}_q^{1/2}$, i.e., $\mathbf R_K = \mathbf{M}_q^{1/2}\,\mathbf{E}$. The centered key is
encoded as $\mathbf{r}_j = (\mathbf{k}_j - \bar{\mathbf{k}})\,\mathbf{R}_K$
and reconstructed as $\widehat{\mathbf{k}}_j = \mathbf{r}_j\,
\mathbf{R}^{-1}_K + \bar{\mathbf{k}}$. We have
$\mathbf{R}_K\mathbf{R}_K^\top = \mathbf{M}_q$, as demanded by \cite{linder1999high}, and the transform satisfies a generalized Parseval relation \cite{girault2018irregularity}: 
\begin{proposition}
\label{prop:parseval}
Let $\mathbf r$ and $\hat{\mathbf{r}}$ be any two vectors, and $\mathbf k = \mathbf{r}\mathbf{R}_K^{-1}$ and $\hat{\mathbf k} = \hat{\mathbf{r}}\mathbf{R}_K^{-1}$. Then, $\bigl\| \mathbf{r} - \hat{\mathbf{r}} \bigr\|_2^2
  = \bigl\| \mathbf k - \hat{\mathbf{k}} \bigr\|_{\mathbf M_q}^2$.
\end{proposition}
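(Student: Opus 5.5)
The plan is a direct algebraic verification that rests on the single identity $\mathbf{R}_K\mathbf{R}_K^\top = \mathbf{M}_q$. I take the full (untruncated) transform $\mathbf{R}_K = \mathbf{M}_q^{1/2}\mathbf{E}$ from Theorem~\ref{thm:transform}, where $\mathbf{E}$ is the orthogonal eigenvector matrix of $\mathbf{M}_q^{1/2}\widetilde{\mathbf{S}}_k\mathbf{M}_q^{1/2}$. I assume $\mathbf{M}_q$ is positive definite, which is needed for $\mathbf{R}_K^{-1}$ to exist. Then the symmetric square root satisfies $(\mathbf{M}_q^{1/2})^\top = \mathbf{M}_q^{1/2}$, and orthogonality gives $\mathbf{E}\mathbf{E}^\top=\mathbf{I}$. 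Hence
\begin{equation*}
\mathbf{R}_K\mathbf{R}_K^\top = \mathbf{M}_q^{1/2}\mathbf{E}\mathbf{E}^\top\mathbf{M}_q^{1/2} = \mathbf{M}_q .
\end{equation*}

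Next I reduce everything to the error vector. Let $\boldsymbol{\delta} = \mathbf{r}-\hat{\mathbf{r}}$. By linearity, $\mathbf{k}-\hat{\mathbf{k}} = \boldsymbol{\delta}\mathbf{R}_K^{-1}$. Substituting into the weighted norm gives
\begin{equation*}
\|\mathbf{k}-\hat{\mathbf{k}}\|_{\mathbf{M}_q}^2 = \boldsymbol{\delta}\,\mathbf{R}_K^{-1}\mathbf{M}_q\mathbf{R}_K^{-\top}\boldsymbol{\delta}^\top
= \boldsymbol{\delta}\,\mathbf{R}_K^{-1}\mathbf{R}_K\mathbf{R}_K^\top\mathbf{R}_K^{-\top}\boldsymbol{\delta}^\top = \boldsymbol{\delta}\boldsymbol{\delta}^\top = \|\mathbf{r}-\hat{\mathbf{r}}\|_2^2 ,
\end{equation*}
which is the claimed relation.

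There is no real obstacle here; the only points needing care are bookkeeping. First, the result applies to the square, invertible transform ($p=d$), not the truncated $\mathbf{R}^\star_{K,p}$. Second, invertibility requires $\mathbf{M}_q \succ 0$; if $\mathbf{M}_q$ were only positive semidefinite, I would regularize it as $\mathbf{M}_q+\epsilon\mathbf{I}$. Third, the mean offset $\bar{\mathbf{k}}$ cancels in $\mathbf{k}-\hat{\mathbf{k}}$, so the identity also holds for the centered encoding $\widehat{\mathbf{k}}_j = \mathbf{r}_j\mathbf{R}_K^{-1}+\bar{\mathbf{k}}$ used in practice.
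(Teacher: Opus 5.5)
Your proof is correct and follows the paper's argument: both rest on the single identity $\mathbf{R}_K\mathbf{R}_K^\top = \mathbf{M}_q^{1/2}\mathbf{E}\mathbf{E}^\top\mathbf{M}_q^{1/2} = \mathbf{M}_q$ for the full-rank ($p=d$) transform. The paper expands $\|\mathbf{r}-\hat{\mathbf{r}}\|_2^2$ using $\mathbf{r}-\hat{\mathbf{r}} = (\mathbf{k}-\hat{\mathbf{k}})\mathbf{R}_K$, whereas you expand the weighted norm through $\mathbf{R}_K^{-1}$; this difference is purely cosmetic.
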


Prop.~\ref{prop:parseval} is analogous to the Parseval relation in
Sec.~\ref{ssec:tc} for the $\mathbf{M}_q$-MSE:
the transform converts the weighted objective into ordinary MSE. Thus, we can run any off-the-shelf MSE-optimal quantizer on the transform coefficients. Since $\mathbf{R}_K$ satisfies the linear compander condition, it is optimal for the associated high-resolution attention-weighted companding problem. Its orthogonal factor is chosen to optimize the weighted low-rank approximation in Theorem 1.

\subsection{Value transform}
We derive the value transform from
$\| \mathbf{S}\mathbf{V} - \mathbf{S}\widehat{\mathbf{V}} \|_F^2
= \sum_{j,j'} (\mathbf{M}_s)_{jj'} \, (\mathbf{v}_j - \widehat{\mathbf{v}}_j)
(\mathbf{v}_{j'} - \widehat{\mathbf{v}}_{j'})^\top$, and
$\mathbf{M}_s = \mathbf{S}^\top\mathbf{S}$, the second moment of the attention scores. 
\begin{corollary}
\label{cor:values}
Let $\mathbf{o}_i = \mathbf{s}_i \mathbf{V}$ denote the attention
outputs on calibration data, and
$\mathbf{E}\boldsymbol{\Lambda}\mathbf{E}^\top$ the eigendecomposition
of $\mathbf{M}_o =
\mathbf{V}^\top \mathbf{M}_s \mathbf{V}$. The minimizer of $\| \mathbf{S}\mathbf{V} - \mathbf{S}\widehat{\mathbf{V}} \|_F^2$ over low-rank approximations is
$\mathbf{R}_V = \mathbf{E}_{1:p}$, with codes
$\mathbf{s}_j = \mathbf{v}_j \mathbf{R}_V$ and reconstruction
$\widehat{\mathbf{v}}_j = \mathbf{s}_j \mathbf{R}_V^\top$.
\end{corollary}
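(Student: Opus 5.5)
The plan is to reduce Corollary~\ref{cor:values} to the classical KLT statement \eqref{eq:klt_optimal}, in the same way Theorem~\ref{thm:transform} reduces the key problem to a KLT in a reweighted domain. The difference is that the weighting now acts across tokens, through $\mathbf{M}_s$, rather than across dimensions. First I make the low-rank model precise. A rank-$p$ approximation of the values uses some $\mathbf{R}\in\mathbb{R}^{d\times p}$ with reconstruction $\widehat{\mathbf{V}} = \mathbf{V}\mathbf{R}\mathbf{R}^{+}$, i.e., $\widehat{\mathbf{V}} = \mathbf{V}\mathbf{P}$ where $\mathbf{P}$ is a projector onto a $p$-dimensional subspace. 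The objective is then
\begin{equation*}
\|\mathbf{S}\mathbf{V}-\mathbf{S}\mathbf{V}\mathbf{P}\|_F^2 = \|\mathbf{O}(\mathbf{I}-\mathbf{P})\|_F^2 = \sum_i \|\mathbf{o}_i - \mathbf{o}_i\mathbf{P}\|_2^2,
\end{equation*}
where $\mathbf{O}=\mathbf{S}\mathbf{V}$ has rows $\mathbf{o}_i$. The key observation is that the right multiplication by $\mathbf{P}$ commutes with the left multiplication by $\mathbf{S}$. As a result, the score-weighted value error is exactly the plain Euclidean rank-$p$ reconstruction error of the calibration outputs $\mathbf{o}_i$.

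The second step is to identify the relevant second-moment matrix, $\mathbf{O}^\top\mathbf{O} = \mathbf{V}^\top\mathbf{S}^\top\mathbf{S}\mathbf{V} = \mathbf{V}^\top\mathbf{M}_s\mathbf{V} = \mathbf{M}_o$. I then restrict to orthogonal projectors $\mathbf{P}=\mathbf{R}\mathbf{R}^\top$ with $\mathbf{R}^\top\mathbf{R}=\mathbf{I}_p$, and expand:
\begin{equation*}
\|\mathbf{O}(\mathbf{I}-\mathbf{R}\mathbf{R}^\top)\|_F^2 = \tr(\mathbf{M}_o) - \tr(\mathbf{R}^\top\mathbf{M}_o\mathbf{R}).
\end{equation*}
By Ky Fan's maximum principle, the trace term is maximized by $\mathbf{R}=\mathbf{E}_{1:p}$, the leading eigenvectors of $\mathbf{M}_o$. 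This is exactly the KLT characterization \eqref{eq:klt_optimal} applied to $\{\mathbf{o}_i\}$, without centering, since the Eckart--Young argument only needs a second-moment matrix. The codes $\mathbf{s}_j=\mathbf{v}_j\mathbf{R}_V$ and the reconstruction $\widehat{\mathbf{v}}_j=\mathbf{s}_j\mathbf{R}_V^\top$ follow directly, because $\mathbf{R}_V^{+}=\mathbf{R}_V^\top$ for orthonormal columns.

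The main obstacle is justifying the restriction to orthogonal projectors, since a general $\mathbf{R}\mathbf{R}^{+}$ is an oblique or Moore--Penrose projection and the argument above does not cover it directly. I would handle this as follows. For any fixed column space, $\mathbf{R}\mathbf{R}^{+}$ is the orthogonal projector onto $\operatorname{col}(\mathbf{R})$, because the pseudoinverse gives the Euclidean projection. If a more general linear reconstruction $\mathbf{V}\mathbf{R}\mathbf{B}$ is allowed instead, then for fixed $\mathbf{R}$ the optimal $\mathbf{B}$ is a least-squares solution in $\mathbf{O}$-geometry. The optimal value equals the error of orthogonally projecting $\mathbf{O}$ onto a rank-$\le p$ matrix of the form $\mathbf{O}\mathbf{R}\mathbf{B}$. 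By Eckart--Young this error is bounded below by $\sum_{i>p}\lambda_i$, and $\mathbf{R}=\mathbf{E}_{1:p}$ attains that bound. This last argument closes both cases and also shows that the optimal value transform is orthogonal, in contrast with the key case.
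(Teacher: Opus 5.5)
Your proof is correct. Its core matches the paper's: both reduce the objective to $\tr[\mathbf{M}_o]-\tr[\mathbf{P}\mathbf{M}_o]$ and then apply Ky Fan. Where you differ is in how you justify restricting to reconstructions $\widehat{\mathbf{V}}=\mathbf{V}\mathbf{P}$.

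The paper lets the codes be completely free: $\widehat{\mathbf{V}}=\mathbf{S}_c\mathbf{D}_V$ with arbitrary $\mathbf{S}_c\in\mathbb{R}^{M\times p}$, so a token's code could a priori depend on other tokens, since $\mathbf{M}_s$ couples them. It writes the normal equations $\mathbf{M}_s\mathbf{S}_c\mathbf{D}_V\mathbf{D}_V^\top=\mathbf{M}_s\mathbf{V}\mathbf{D}_V^\top$ and exhibits the $\mathbf{M}_s$-independent solution $\mathbf{S}_c^*=\mathbf{V}\mathbf{D}_V^\top(\mathbf{D}_V\mathbf{D}_V^\top)^{-1}$. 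Only after that does it arrive at $\mathbf{V}\mathbf{P}$ with $\mathbf{P}$ an orthogonal projector.

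You instead start from $\mathbf{V}\mathbf{P}$, use $\mathbf{S}(\mathbf{V}\mathbf{P})=\mathbf{O}\mathbf{P}$, and handle other reconstructions with an Eckart--Young bound on $\mathbf{O}=\mathbf{S}\mathbf{V}$. Your stated general family $\mathbf{V}\mathbf{R}\mathbf{B}$ forces codes to be linear in $\mathbf{v}_j$, so it is narrower than the paper's. However, your bound applies verbatim to any $\widehat{\mathbf{V}}$ of rank at most $p$. Then $\mathbf{S}\widehat{\mathbf{V}}$ has rank at most $p$, so $\|\mathbf{O}-\mathbf{S}\widehat{\mathbf{V}}\|_F^2\ge\sum_{i>p}\lambda_i(\mathbf{M}_o)$. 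Equality holds at $\mathbf{O}\mathbf{E}_{1:p}\mathbf{E}_{1:p}^\top$, which is the rank-$p$ truncated SVD of $\mathbf{O}$ because the right singular vectors of $\mathbf{O}$ are the eigenvectors of $\mathbf{O}^\top\mathbf{O}=\mathbf{M}_o$.

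Stated that way, your route covers the paper's full class in one inequality and gives the optimal value directly. It needs no normal equations and no footnote on non-invertible $\mathbf{M}_s$. What the paper's route buys in exchange is structural: for every fixed synthesis $\mathbf{D}_V$, not only the optimal one, the optimal codes are the plain Euclidean projection. It also keeps the same codes-then-synthesis template as the proof of Theorem~1.

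One small correction: with the Moore--Penrose pseudoinverse, $\mathbf{R}\mathbf{R}^{+}$ is never oblique. It is always the orthogonal projector onto the column space of $\mathbf{R}$. So your first parameterization is already covered by the Ky Fan step, and the only extension actually needed is to reconstructions not of the form $\mathbf{V}\mathbf{P}$.
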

The proof is in Appendix~\ref{sec:proof-cor1}. Unlike $\mathbf{R}_K$, the value transform \emph{is}
orthogonal. The term
$\mathbf{V}^\top\mathbf{M}_s\mathbf{V}$ is the covariance used in \cite{zhou2026oscar}.

\section{KV cache vector quantization}
\label{sec:vq}
Our goal is to design quantizers to minimize distortion at a fixed rate, because serving imposes a fixed-width constraint: tokens must occupy a fixed number of bits, and rate cannot be reallocated across entries. 
We achieve this via VQ, in which groups of entries are jointly quantized. 
This leaves the choice of partition, i.e., how to group the entries,  as the only degree of freedom. We show that under standard information-theoretic assumptions, there is a criterion that identifies which groupings are rate-distortion optimal when per-group rate is fixed, and we show how to find them. 

Let $\pi$ denote a partition of $\{1,\dots,d\}$ into $L = d/g$ groups
$G_1,\dots,G_L$ of equal size $g$, and let
$\mathbf{r}_{G_\ell}$ be the transform
coefficients indexed by $G_\ell$. 
We first assume that each group has its own
codebook $\mathcal{C}_\ell$ with $b_\ell$ bits, subject to 
$\sum_{\ell} g b_\ell = d b$. Later, we enforce the constraint that $b_\ell=b$.  
By Proposition~\ref{prop:parseval}, we can work with the MSE, which satisfies $D(\pi, \{b_\ell\}) = \sum_\ell D_\ell(b_\ell)$. We model the coefficients as independent zero-mean Gaussians with
variances $\sigma_1^2 \geq \cdots \geq \sigma_d^2$ (independence
is assumed for tractability.) Under the high-resolution regime,
the 
distortion-rate function of the optimal $g$-dimensional quantizer for
group $G_\ell$ is \cite{zador1982asymptotic}:
\begin{equation}
  D_\ell(b_\ell)
  =C_g \,
    2^{-2 b_\ell}
    \Bigl(\prod_{i\in G_\ell}\sigma_i^2\Bigr)^{1/g}
    \bigl(1+o(1)\bigr),
  \label{eq:zador}
\end{equation}
where $C_g$ is common to all groups. The $o(1)$ term in \eqref{eq:zador} vanishes for high rates, and we assume continuous rates $b_\ell \in \mathbb{R}$. The partition enters \eqref{eq:zador} only
through the \emph{volume}
$v_\ell(\pi) \doteq \prod_{i\in G_\ell}\sigma_i^{2}$. We seek the partition and per-group rates minimizing  $D(\pi, \{b_\ell\})$, assuming: 
\emph{variable rates}, where $\{b_\ell\}$ can take any value subject to the
budget $\sum_\ell b_\ell = Lb$, and \emph{fixed rates}, where
$b_\ell \equiv b$ as required by fixed-width packing. Note that $\prod_\ell v_\ell(\pi) = \prod_{i=1}^d \sigma_i^2$ for every $\pi$.

\begin{theorem}
\label{thm:grouping}
Under the model \eqref{eq:zador}, for any partition $\pi$ the allocation
minimizing $D(\pi,\{b_\ell\})$ subject to $\sum_\ell b_\ell = Lb$ is
\begin{equation}
\label{eq:alloc}
  b^{*}_\ell(\pi) \;=\; b \;+\; \frac{1}{2g}\,
  \log_2\Big({v_\ell(\pi)}\big/{\prod_m v_m(\pi)^{1/L}}\Big),
\end{equation}
which spends more bits on groups of larger volume. The optimal distortion, 
$D^{*}(b) = C_gL\,2^{-2b}\bigl(\prod_{i=1}^d\sigma_i^2\bigr)^{1/d}$, is the same
for every $\pi$,  assuming that rates can be the arbitrary real values from \eqref{eq:alloc}.
\end{theorem}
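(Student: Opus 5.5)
The plan is to treat this as a classical reverse water-filling / bit-allocation problem in the high-resolution regime: drop the $(1+o(1))$ factor in \eqref{eq:zador} and minimize
\[
D(\pi,\{b_\ell\}) = C_g \sum_{\ell=1}^{L} 2^{-2b_\ell}\, v_\ell(\pi)^{1/g}
\]
over real $b_\ell$ subject to $\sum_\ell b_\ell = Lb$. Since $b_\ell\mapsto 2^{-2b_\ell}$ is strictly convex and the constraint is affine, the problem is convex. Any stationary point of the Lagrangian is therefore the global minimizer, and no sign or integrality constraints arise because rates are allowed to be arbitrary reals.

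\textbf{Step 1: stationarity.} Form $\mathcal{L} = C_g\sum_\ell 2^{-2b_\ell} v_\ell^{1/g} + \mu(\sum_\ell b_\ell - Lb)$. Setting $\partial\mathcal{L}/\partial b_\ell = 0$ gives $2\ln 2\, C_g\, 2^{-2b_\ell} v_\ell^{1/g} = \mu$. Hence every group receives the same distortion $D_\ell = C_g 2^{-2b_\ell}v_\ell^{1/g} = \kappa$, which is the familiar equal-distortion condition. Equivalently, $b_\ell = \tfrac{1}{2g}\log_2 v_\ell + c$ for a constant $c$.

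\textbf{Step 2: fix the constant.} Summing over $\ell$ and using $\sum_\ell b_\ell = Lb$ gives $c = b - \tfrac{1}{2gL}\sum_m \log_2 v_m$. Substituting back yields exactly \eqref{eq:alloc}, with the geometric mean $\prod_m v_m^{1/L}$ in the denominator. As a check, this formula increases in $v_\ell$, so groups of larger volume receive more bits.

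\textbf{Step 3: evaluate the optimum.} Plug \eqref{eq:alloc} into $D_\ell$. The $v_\ell^{1/g}$ factor cancels against $2^{-2(b^*_\ell-b)} = (v_\ell/\bar v)^{-1/g}$, where $\bar v = \prod_m v_m^{1/L}$. This leaves $D_\ell = C_g 2^{-2b}\bar v^{1/g}$ for every $\ell$, so $D^* = C_g L\, 2^{-2b}\bar v^{1/g}$. Finally, apply the identity noted before the statement, $\prod_\ell v_\ell(\pi) = \prod_{i=1}^d\sigma_i^2$, which holds because the groups partition $\{1,\dots,d\}$. With $Lg=d$ this gives $\bar v^{1/g} = (\prod_i\sigma_i^2)^{1/(Lg)} = (\prod_i\sigma_i^2)^{1/d}$, which is independent of $\pi$.

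An alternative to the Lagrangian route is to apply AM–GM directly: $\sum_\ell 2^{-2b_\ell}v_\ell^{1/g} \ge L\bigl(\prod_\ell 2^{-2b_\ell}v_\ell^{1/g}\bigr)^{1/L} = L\,2^{-2b}\bar v^{1/g}$, with equality iff all terms are equal. This proves optimality and the value of $D^*$ in one line, and it avoids having to argue convexity separately.

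The only delicate point is justifying the treatment of the $o(1)$ term, i.e., that minimizing the leading-order expression is what the theorem asserts. I will state the result explicitly at the level of the model \eqref{eq:zador} without the $o(1)$ factor, as the theorem's phrase ``under the model'' permits. I do not expect a genuine obstacle; the rest is routine algebra.
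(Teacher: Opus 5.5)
Your proposal is correct and follows essentially the same route as the paper. You use convexity plus Lagrangian stationarity to get the equal-distortion condition, enforce the rate budget to recover \eqref{eq:alloc}, and substitute back with $\prod_\ell v_\ell(\pi)=\prod_i\sigma_i^2$ and $Lg=d$ to show that $D^*$ does not depend on $\pi$. Your AM--GM alternative is also valid; the paper uses the same inequality, but for the companion fixed-rate claim ($b_\ell\equiv b$) rather than for this theorem.
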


The proof is in Appendix~\ref{sec:proof-thm2}. The allocation \eqref{eq:alloc} depends on the partition only through the
volumes, and it is uniform, $b^{*}_\ell = b$ for all $\ell$, only when
$v_1(\pi) = \cdots = v_L(\pi)$. 
Hence, for a volume-equalizing partition, the optimal allocation is already fixed-width. Instead, if the volumes are not equal and we apply the same number of bits,  $b_\ell \equiv b$, the distortion will be suboptimal, 
$2^{-2b}\sum_\ell v_\ell(\pi)^{1/g} \ge D^{*}(b)$, with equality iff
$v_1(\pi) = \cdots = v_L(\pi)$. We therefore group entries into equal-volume groups and spend the same number of
bits per group; within each group, the codebook places more resolution on the
entries of higher variance in the query-weighted transform domain.

The distortion of a partition is proportional to the average of the $v_\ell(\pi)^{1/g}$,
while $D^*(b)$ is proportional to their geometric mean, so the penalty grows with the spread of the
group volumes. Sorting the entries and grouping them consecutively maximizes spread, while a random partition
averages log-variances per group (closer to balanced). Balancing $\sum_{i\in G_\ell}\log\sigma_i^2$ across groups 
is NP-hard (reduces to $3$-Partition \cite{garey2002computers}); we use a heuristic. 
Since each vector has $g$ entries, we will have $L=d/g$ vectors to quantize. Then, we sort the entries by decreasing order of $\log\sigma_i^2$ and, to approximately equalize their volume, group $G_\ell$ contains  entries with index $i \equiv \ell-1 \mod L$. The partition is folded into the
transform, so it costs nothing at inference.

We apply VQ to the keys and keep
SQ for the values since they are well
approximated at this rate: replacing the value quantizer with VQ
leaves accuracy almost unchanged (Appendix~\ref{app:experiments}). \textbf{Complexity.} Write, once per token: centering, and a nearest-neighbor search over centroids per group. Read: a lookup into a codebook of $2^{gb}g$ scalars per group, and one multiply to restore per-token scale.

\section{Experiments}
\label{sec:experiments}
We use $12$ A100 GPUs (40GB); for throughput, we use a single H100 (details in Appendix~\ref{app:experiments}). The high-resolution model might not hold at $2$
bits, we use the theory to guide our design rather than as a guarantee. As models, we use Llama-3.1-8B \cite{grattafiori2024llama}, Qwen3, both 8B and 4B-Thinking \cite{yang2025qwen3}, and GPT-OSS \cite{agarwal2025gpt}. For GPT-OSS, we quantize the KV cache only in its full-attention layers, leaving sliding-window layers (128-token window) unchanged.

\paragraph{Transform analysis.}
\label{ssec:transform-analysis} We fix the quantizer: uniform SQ
per coordinate with per-coordinate entropy coding, fit on the calibration
split and evaluated on held-out data. 
Entropy coding removes rate allocation as a confound, so the comparison reflects the basis alone; OSCAR's transforms target fixed-rate SQ and gain no compaction here. We compare four
bases: the KLT of the keys (MSE-optimal), the eigenbasis of the query second
moment used by OSCAR, the full OSCAR transform, and ours.
Fig.~\ref{fig:transform-analysis} shows the MSE, the query-weighted
distortion \eqref{eq:qmse}, and the top-1 attention agreement, i.e., the
fraction of queries whose highest-scoring key survives quantization. 
MSE does not determine attention fidelity: the KLT attains the lowest MSE at
every rate (a), yet ours, with a higher MSE, preserves the top-1 key more often
(c). 

\begin{figure}
    \centering
    \includegraphics[width=0.7\linewidth]{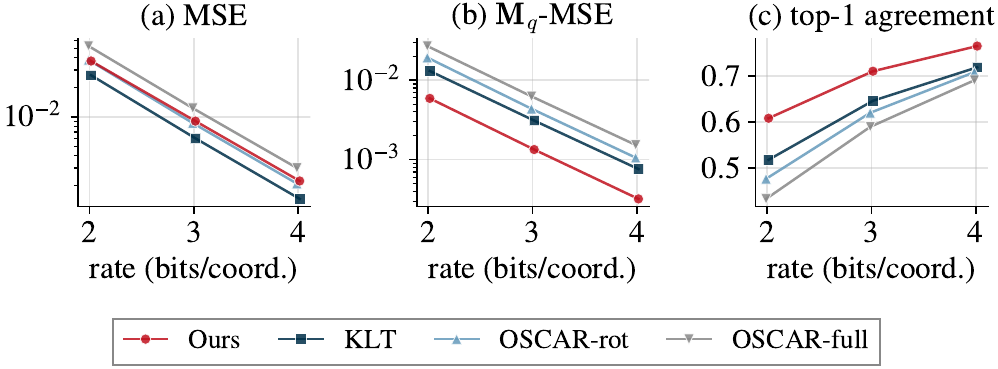}
\caption{Bases under a fixed quantizer. The KLT minimizes MSE (a) but not top-1
agreement (c); the $\mathbf{M}_q$-MSE (b) orders the bases like (c); our transform is best in (b-c).}
\label{fig:transform-analysis}
\end{figure}
\begin{table}[!htbp]
\small
\centering
\caption{RULER NIAH mean accuracy (\%) across context lengths, mean\,$\pm$\,95\% CI over 3 evaluation rollouts. BPE is the effective bits per KV element at 128K context. Best compressed method per column in \textbf{bold}.}
\label{tab:niah}
\setlength{\tabcolsep}{6pt}
\renewcommand{\arraystretch}{0.9}
\begin{tabular}{llcccccc}
\toprule
\textbf{Model} & \textbf{Method} & \textbf{BPE} & \textbf{8K} & \textbf{16K} & \textbf{32K} & \textbf{64K} & \textbf{128K} \\
\midrule
\multirow{4}{*}{\shortstack[l]{Qwen3-4B\\-Thinking-2507}}
 & BF16 & 16 & \ci{99.5}{0.4} & \ci{98.4}{0.4} & \ci{98.7}{0.2} & \ci{93.1}{0.8} & \ci{89.9}{2.1} \\
 & QuaRot & 2.25 & \ci{0.0}{0.0} & \ci{0.0}{0.0} & \ci{13.4}{0.8} & \ci{0.0}{0.0} & \ci{0.0}{0.0} \\
 & OSCAR & 2.28 & \cib{99.7}{0.0} & \ci{94.0}{0.7} & \ci{81.6}{0.8} & \ci{63.0}{0.8} & \ci{34.7}{0.5} \\
 & \cellcolor{oursrow}\textbf{\method{}} (ours) & \cellcolor{oursrow}2.22 & \cellcolor{oursrow}\ci{99.2}{0.4} & \cellcolor{oursrow}\cib{97.1}{0.0} & \cellcolor{oursrow}\cib{97.8}{0.1} & \cellcolor{oursrow}\cib{82.9}{0.8} & \cellcolor{oursrow}\cib{79.2}{0.5} \\
\midrule
\multirow{4}{*}{Qwen3-8B}
 & BF16 & 16 & \ci{99.9}{0.0} & \ci{99.4}{0.0} & \ci{98.5}{0.1} & \ci{84.3}{0.6} & \ci{83.4}{0.6} \\
 & QuaRot & 2.25 & \ci{54.8}{0.2} & \ci{31.3}{0.2} & \ci{23.5}{0.5} & \ci{0.0}{0.0} & \ci{0.0}{0.0} \\
 & OSCAR & 2.28 & \ci{96.7}{0.3} & \ci{94.6}{0.2} & \ci{86.9}{0.2} & \ci{60.6}{0.6} & \ci{25.3}{0.8} \\
 & \cellcolor{oursrow}\textbf{\method{}} (ours) & \cellcolor{oursrow}2.22 & \cellcolor{oursrow}\cib{99.4}{0.1} & \cellcolor{oursrow}\cib{98.7}{0.0} & \cellcolor{oursrow}\cib{96.0}{0.2} & \cellcolor{oursrow}\cib{76.4}{0.4} & \cellcolor{oursrow}\cib{75.4}{0.1} \\
\midrule
\multirow{4}{*}{Llama-3.1-8B}
 & BF16 & 16 & \ci{97.9}{3.0} & \ci{98.3}{1.8} & \ci{97.3}{3.3} & \ci{97.3}{2.4} & \ci{84.2}{10.7} \\
 & QuaRot & 2.25 & \ci{37.2}{25.2} & \ci{33.8}{26.5} & \ci{35.2}{26.6} & \ci{34.7}{27.1} & \ci{4.8}{5.5} \\
 & OSCAR & 2.28 & \ci{86.1}{12.2} & \ci{81.6}{17.7} & \ci{82.4}{19.4} & \ci{76.9}{21.9} & \ci{36.7}{25.2} \\
 & \cellcolor{oursrow}\textbf{\method{}} (ours) & \cellcolor{oursrow}2.22 & \cellcolor{oursrow}\cib{94.5}{3.4} & \cellcolor{oursrow}\cib{91.9}{5.9} & \cellcolor{oursrow}\cib{92.1}{7.2} & \cellcolor{oursrow}\cib{92.4}{5.6} & \cellcolor{oursrow}\cib{63.3}{24.6} \\
\midrule
\multirow{4}{*}{GPT-OSS-20B}
 & BF16 & 16 & \ci{95.8}{4.4} & \ci{95.7}{3.4} & \ci{94.6}{3.2} & \ci{92.2}{5.5} & \ci{80.4}{2.2} \\
& QuaRot & 2.50 & \ci{0.0}{0.0} & \ci{0.0}{0.0} & \ci{0.0}{0.0} & \ci{0.0}{0.0} & \ci{0.0}{0.0} \\
& OSCAR & 2.53 & \ci{0.5}{0.2} & \ci{0.0}{0.0} & \ci{0.0}{0.0} & \ci{0.0}{0.0} & \ci{0.0}{0.0} \\
 & \cellcolor{oursrow}\textbf{\method{}} (ours) & \cellcolor{oursrow}2.41 & \cellcolor{oursrow}\cib{89.6}{1.1} & \cellcolor{oursrow}\cib{81.0}{2.6} & \cellcolor{oursrow}\cib{79.2}{3.8} & \cellcolor{oursrow}\cib{70.6}{4.1} & \cellcolor{oursrow}\cib{54.0}{4.4} \\
\bottomrule
\end{tabular}
\end{table}

\begin{table}[!htbp]
\small
\centering
\caption{KV-cache quantization compared across models and benchmarks.
Entries are mean\,$\pm$\,95\% CI with 5 samples per prompt. BPE denotes effective bits per KV element at 128K
context. ``Drop'' is the gap in the Mean column to the BF16 reference; less negative is better.
Best compressed method per column in \textbf{bold}. Naive INT2 yields $0.0$ for all models and tasks.}
\label{tab:main}
\setlength{\tabcolsep}{3pt}
\renewcommand{\arraystretch}{0.8}
\begin{tabular}{llccccccrr}
\toprule
\textbf{Model} & \textbf{Method} & \textbf{BPE}
 & \textbf{GPQA} & \textbf{HumanE} & \textbf{LCB v6} & \textbf{AIME25} & \textbf{MATH500}
 & \textbf{Mean} & \textbf{Drop} \\
\midrule
\multirow{6}{*}{\shortstack[l]{Qwen3-4B\\-Thinking-2507}}
 & BF16 & 16.00 & \ci{64.9}{0.8} & \ci{84.8}{0.5} & \ci{59.2}{0.9} & \ci{70.0}{1.5} & \ci{97.5}{0.2} & 75.3 & -- \\
 
 & TurboQuant & 3.25 & \ci{44.3}{1.5} & \ci{31.2}{0.4} & \ci{1.3}{1.5} & \ci{2.2}{3.8} & \ci{56.6}{0.7} & 27.1 & $-$48.2 \\
 & QuaRot & 2.25 & \ci{12.6}{1.7} & \ci{6.1}{1.3} &\ci{3.0}{1.2} & \ci{0.0}{0.0} & \ci{32.5}{1.3} & 10.8 & $-$64.5 \\
 & OSCAR & 2.28 & \ci{62.4}{0.9} & \ci{83.8}{0.2} & \cib{57.4}{0.8} & \ci{63.3}{2.1} & \ci{96.8}{0.2} & 72.7 & $-$2.6 \\ 
 & \cellcolor{oursrow}\textbf{\method{}} (ours) & \cellcolor{oursrow}2.22 & \cellcolor{oursrow}\cib{62.9}{1.3} & \cellcolor{oursrow}\cib{84.3}{0.4} & \cellcolor{oursrow}\ci{56.6}{0.6} & \cellcolor{oursrow}\cib{67.3}{1.2} & \cellcolor{oursrow}\cib{97.4}{0.2} & \cellcolor{oursrow}\textbf{73.7} & \cellcolor{oursrow}\textbf{$-$1.6} \\ 
\midrule
\multirow{6}{*}{Qwen3-8B}
 & BF16 & 16.00 & \ci{57.2}{0.9} & \ci{91.7}{0.5} & \ci{57.6}{0.7} & \ci{70.0}{1.7} & \ci{97.0}{0.1} & 74.7 & -- \\
 & TurboQuant & 3.25 & \ci{45.6}{6.2} & \ci{76.0}{1.9} & \ci{32.7}{2.5} & \ci{45.6}{5.1} & \ci{94.3}{0.1} & 58.8 & $-$16.9 \\
 & QuaRot & 2.25 & \ci{44.2}{1.3} & \ci{39.0}{3.7} & \ci{9.0}{2.0} & \ci{20.0}{3.3} & \ci{75.2}{0.4} & 37.5 & $-$37.2 \\
 & OSCAR & 2.28 & \ci{53.5}{1.2} & \ci{90.9}{0.7} & \ci{53.6}{0.2} & \cib{66.2}{1.7} & \ci{96.7}{0.2} & 72.2 & $-$2.5 \\
 & \cellcolor{oursrow}\textbf{\method{}} (ours) & \cellcolor{oursrow}2.22 & \cellcolor{oursrow}\cib{55.1}{1.5} & \cellcolor{oursrow}\cib{92.1}{0.2} & \cellcolor{oursrow}\cib{55.6}{0.6} & \cellcolor{oursrow}\ci{64.6}{2.5} & \cellcolor{oursrow}\cib{96.9}{0.2} & \cellcolor{oursrow}\textbf{72.9} & \cellcolor{oursrow}\textbf{$-$1.8} \\
\midrule
\multirow{6}{*}{Llama-3.1-8B}
 & BF16 & 16.00 & \ci{25.5}{2.7} & \ci{63.5}{2.3} & \ci{10.5}{1.7} & \ci{0.0}{0.0} & \ci{44.2}{1.9} & 28.7 & -- \\
 & TurboQuant & 3.25 & \ci{22.7}{2.3} & \ci{56.7}{2.8} & \cib{11.2}{1.7} & \ci{0.0}{0.0} & \ci{37.3}{1.2} & 25.6 & $-$3.2 \\
 & QuaRot & 2.25 & \ci{2.8}{1.0} & \ci{6.7}{1.4} & \ci{0.7}{0.5} & \ci{0.7}{1.3} & \ci{6.0}{0.8} & 3.4 & $-$25.4 \\
 & OSCAR & 2.28 & \cib{23.1}{2.6} & \ci{64.4}{2.5} & \ci{9.4}{1.4} & \cib{3.3}{4.1} & \ci{44.8}{1.1} & 29.0 & $+$0.3 \\
 & \cellcolor{oursrow}\textbf{\method{}} (ours) & \cellcolor{oursrow}2.22 & \cellcolor{oursrow}\ci{21.3}{2.6} & \cellcolor{oursrow}\cib{64.5}{1.6} & \cellcolor{oursrow}\ci{10.2}{1.7} & \cellcolor{oursrow}\cib{3.3}{2.9} & \cellcolor{oursrow}\cib{45.8}{1.3} & \cellcolor{oursrow}\textbf{29.0} & \cellcolor{oursrow}\textbf{$+$0.3} \\
 \midrule
 \multirow{6}{*}{GPT-OSS-20B}
 & BF16 & 16.00 & \ci{50.4}{3.1} & \ci{90.2}{2.9} & \ci{76.2}{2.3} & \ci{74.0}{7.0} & \ci{91.8}{1.1} & 76.5 & -- \\
 & TurboQuant & 3.50 & \ci{35.2}{3.0} & \ci{33.4}{3.8} & \ci{30.2}{4.1} & \ci{17.3}{3.5} & \ci{70.9}{1.8} & 37.4 & $-$39.1 \\
& QuaRot & 2.50 & \ci{0.0}{0.0} & \ci{0.0}{0.0} & \ci{0.0}{0.0} & \ci{0.0}{0.0} & \ci{0.0}{0.0} & 0.0 & $-$76.5 \\
& OSCAR & 2.53 & \ci{25.1}{2.7} & \ci{5.4}{1.6} & \ci{0.0}{0.0} & \ci{0.0}{0.0} & \ci{39.0}{1.9} & 13.9 & $-$62.6 \\
 & \cellcolor{oursrow}\textbf{\method{}} (ours) & \cellcolor{oursrow}2.41 & \cellcolor{oursrow}\cib{46.5}{3.1} & \cellcolor{oursrow}\cib{84.6}{3.6} & \cellcolor{oursrow}\cib{71.9}{2.5} & \cellcolor{oursrow}\cib{67.3}{7.5} & \cellcolor{oursrow}\cib{91.8}{1.1} & \cellcolor{oursrow}\textbf{72.4} & \cellcolor{oursrow}\textbf{$-$4.1} \\

\bottomrule
\end{tabular}
\end{table}
\paragraph{Calibration.} As in  \cite{zhou2026oscar}, the key transform and codebooks are fit offline from $198$ GPQA-Diamond prompts, run in prefill
only, with per-(layer, KV-head) statistics. For GQA \cite{ainslie2023gqa}, we calibrate across heads sharing the same KV cache. For the keys, we fit codebooks with $256$ centroids (vector size $4$, average $2$ bits/coordinate) by $k$-means. We concatenate
these prompts into long-context sequences of $128$K so the data spans the RoPE positions at which the transform and codebooks are evaluated. No label, gold answer, model generation, or task metric enters the fit. For values, we use the SQ from \cite{zhou2026oscar}. Refitting the transform, grouping, and codebooks on MMLU instead, a disjoint domain, leaves RULER NIAH within $2.0$ points and does not lower GPQA, so calibrating on the evaluation domain confers no advantage (Appendix~\ref{app:calib}).

\paragraph{Baselines.}  As in \cite{xiao2024efficient, zhou2026oscar}, we keep the first
$64$ tokens and the most recent $256$ in BF16 and quantize the
rest. We run QuaRot, TurboQuant, and OSCAR under the configuration of \cite{zhou2026oscar}: matching group sizes, per-channel/per-token axes, and BF16 sink-plus-recent band. For OSCAR we evaluated both the released transform and one refit on our calibration split; the two are within $1.5$ points on RULER-NIAH, and we report the better of the two. TurboQuant is included at 3.25 BPE, following \cite{zhou2026oscar}; \citet{zandieh2025turboquant} report quality neutrality at 3.5 bits per channel and marginal quality degradation at 2.5.

\paragraph{Long context.}
We use RULER NIAH \cite{hsieh2024ruler}, a benchmark for long context robustness, and sweep the context
from $8$K to $128$K. We run under chunked prefill, as serving engines do to bound iteration
latency: the prompt is processed in fixed-size chunks and each chunk is
quantized on write, so every subsequent chunk attends to an already
quantized history. Deferring quantization until the whole prompt is
prefilled avoids this accumulation, but holds the cache in BF16 at its
peak. All methods run in the same
harness with the same chunk size. Numbers for prior methods might be lower than those reported in the original papers, which do not specify a chunked prefill. Yet, our implementations remain comparable to their reported behavior on the reasoning and coding benchmarks (cf.~ Table \ref{tab:main}). We report the mean over 3 rollouts with a 95\% CI in Table~\ref{tab:niah}. The baselines degrade with
context while \method{} tracks BF16 more closely: on Qwen3-8B at $128$K,
QuaRot and OSCAR reach $0.0$ and $25.3$ against $83.4$ for BF16,
where \method{} reaches $75.4$. At $128$K, \method{} exceeds OSCAR by $50.1$ points (McNemar $p < 10^{-37}$). The same ordering holds on Llama-3.1-8B, where \method{} reaches $92.4$ at $64$K and $63.3$ at $128$K against $76.9$ and $36.7$ for OSCAR.

The separation is starkest on GPT-OSS-20B, where both baselines fail rather than degrading with context: QuaRot scores $0.0$ at every length, and OSCAR reaches $0.5$ at $8$K and $0.0$ from $16$K on, so neither retrieves anything at any context we test. \method{} instead retains $89.6$ at $8$K and $54.0$ at $128$K, against $95.8$ and $80.4$ for BF16. This is the only model on which the orthogonal-transform baselines lose the task entirely, and it is also the one whose architecture is closest to current practice: GPT-OSS-20B interleaves sliding-window and full-attention layers and routes its feed-forward blocks through a mixture of experts, a combination that recent open-weight models have increasingly adopted. 

\paragraph{Accuracy comparisons.}
We test on GPQA-Diamond \cite{rein2023gpqa}, HumanEval
\cite{chen2021evaluating}, LiveCodeBench v6 \cite{jain2025livecodebench},
AIME25 \cite{maa2025aime}, and MATH500 \cite{lightman2024let, hendrycks2021measuring}. We draw $5$ samples per prompt (Table~\ref{tab:main}). Against BF16, \method{} shows no detectable degradation on either Qwen model
(McNemar $p = 0.17$, $0.24$), while OSCAR is significantly below BF16 on
Qwen3-4B ($p = 0.002$). The two compressed methods are not separable there ($p = 0.46$), and on these two models the significant separation appears only on long-context retrieval. The remaining two models separate them directly. On Llama-3.1-8B every arm tracks BF16 within the confidence intervals, so the methods are hard to tell apart: the uncompressed model already scores low on these tasks. On GPT-OSS-20B, by contrast, \method{} preserves most of the BF16 accuracy on every benchmark (mean $72.4$ against $76.5$), whereas OSCAR collapses, scoring $5.4$ on HumanEval and $0.0$ on both LiveCodeBench v6 and AIME25 for a mean of $13.9$, and TurboQuant falls to $37.4$. \method{} therefore stays close to BF16 on all four models, while the robustness of the orthogonal-transform baselines varies considerably across architectures.

\paragraph{Bit rate.} At $128$K context, the K cache stores one $8$ bit index per group of $g=4$ entries ($2$ BPE, as
$|\mathcal{C}|{=}256$) together with a per-token BF16 scale  ($16/128 = 0.125$). The V cache is INT2 with a BF16 scale and offset (zero-point), giving $2.000 + 32/128 = 2.250$.
Averaging over the $256$ elements gives $2.1875$, and the BF16 sink-plus-recent band ($64$, $256$) adds $ 0.034$, for $2.22$ BPE. OSCAR stores a zero-point on both caches, giving $2.28$ BPE; the $0.06$ gap is the K-side zero-point. Transforms and codebooks are stored per
model.

\paragraph{Throughput.} We measure decode throughput in SGLang
\citep{zheng2024sglang} at input lengths of 30K, 60K and 90K (Fig.~\ref{fig:throughput_bars}). We include a warm-up
request, and throughput is computed over the decode window alone, excluding
time to first token. Cross-request prefix sharing is disabled. We report
aggregate tokens per second across the batch. At batch 1 and batch 4 the
comparison is bandwidth-dominated: NOVA-KV reaches $1.6$--$3.1\times$ BF16 on
Qwen3-8B and $1.7$--$3.4\times$ on Qwen3-4B, with the larger factors at the
longer inputs. The factor is smaller on GPT-OSS-20B, $1.1$--$1.5\times$, since
sliding-window reduces the share of the cache being compressed. Against OSCAR at the same rate, the two arms are within a few percent of
each other throughout. At 90K the BF16 KV pool admits at most $\mathtt{b4}$ on either Qwen model, so
it has no batch-16 configuration, whereas both 2-bit arms serve one.

\begin{figure}[!htbp]
    \centering
    \includegraphics[width=0.7\linewidth]{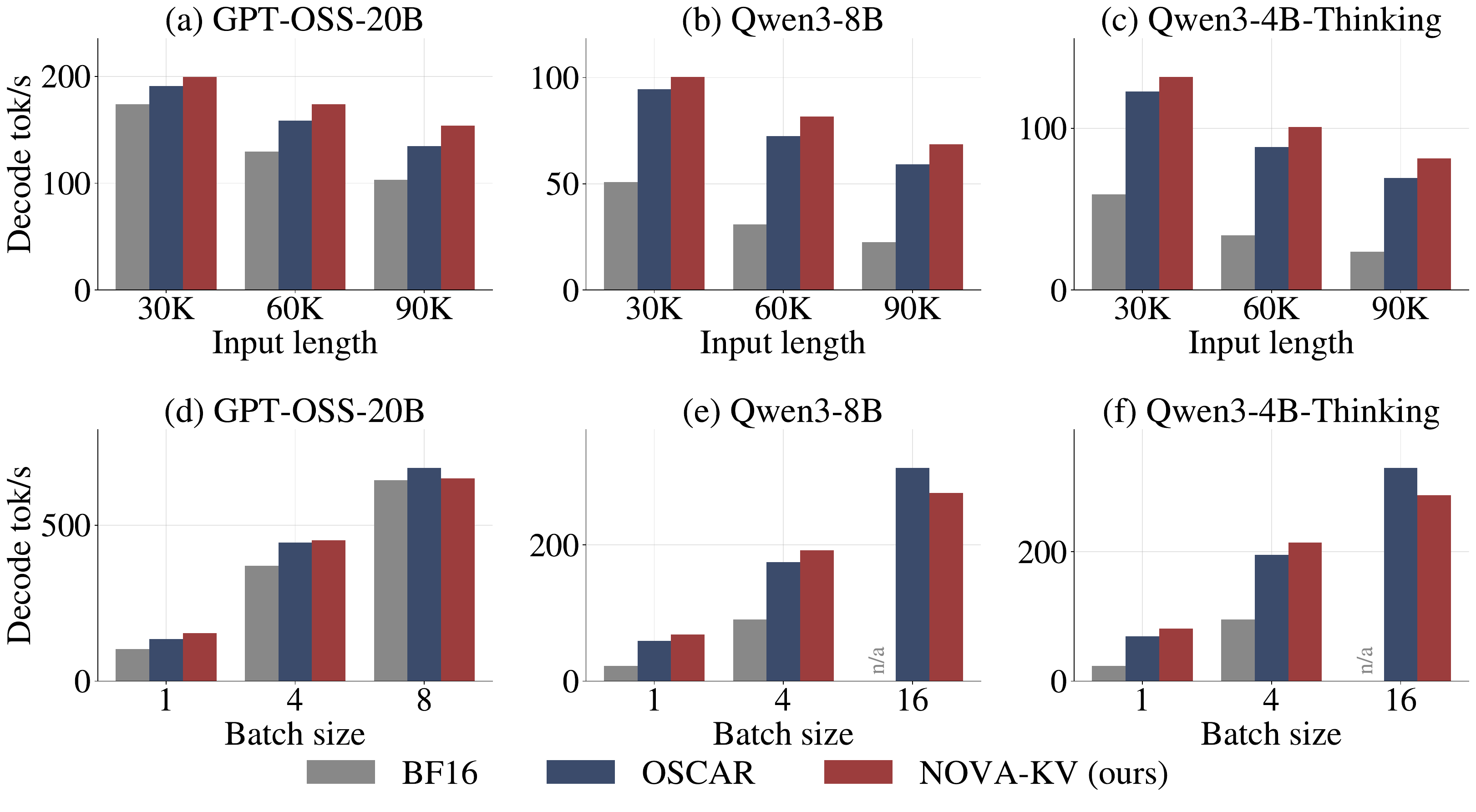}
\caption{Decode throughput, prefill excluded. Top row: batch 1 against input
length. Bottom row: batch scaling at 90K input. BF16+Qwen does not admit the largest batch at 90K.}
    \label{fig:throughput_bars}
\end{figure}

\paragraph{Transform-quantizer.} Table~\ref{tab:ablation} crosses transform (OSCAR, \method{}) and quantizer. Under SQ (same number of bits to each coordinate), our transform collapses: it compacts energy rather than flattening it, so a single rate cannot serve
every coordinate. Replacing SQ by VQ is worth $7.5$ and $37.4$ points under OSCAR's basis; replacing OSCAR's basis by ours, under VQ, is worth a further $8.3$ and $12.7$. Neither substitution alone reaches the full result.

\begin{table}[!htbp]
\small
\centering
\caption{Transform-quantizer ablation on RULER NIAH,
Qwen3-8B. Mean accuracy (\%) $\pm$ 95\% CI over 3 rollouts.}
\label{tab:ablation}
\setlength{\tabcolsep}{6pt}
\renewcommand{\arraystretch}{0.7}
\begin{tabular}{llccc}
\toprule
\textbf{Transform} & \textbf{Quantizer} & \textbf{BPE} & \textbf{64K} & \textbf{128K} \\
\midrule
OSCAR & SQ & 2.28 & \ci{60.6}{0.6} & \ci{25.3}{0.8}\\
OSCAR & VQ & 2.22 & \ci{68.1}{0.6} & \ci{62.7}{0.4} \\
\method{}  & SQ & 2.28 & \ci{0.0}{0.0} & \ci{0.0}{0.0}\\
\rowcolor{oursrow} \method{} & VQ & 2.22 & \cib{76.4}{0.4} & \cib{75.4}{0.1} \\
\bottomrule
\end{tabular}
\end{table}

\paragraph{Volume equalization.}
\label{ssec:grouping}
We fix the
transform, the codebook size, and the rate (2.22 BPE), varying only the
partition: \emph{variance-sorted} groups consecutive entries in decreasing
variance, the configuration with the most unbalanced volumes; \emph{random}
draws one fixed partition at random; and \emph{equalizing} is the
partition of Sec.~\ref{sec:vq}. Over 3 evaluation rollouts for Qwen3-8B, variance-sorted is the worst (\ci{53.2}{0.7} at 64K, \ci{37.2}{0.1} at 128K),
random recovers most of the gap (\ci{74.2}{0.5}, \ci{74.4}{0.2}), and the
equalizing partition is the highest (\ci{76.4}{0.4},
\ci{75.4}{0.1}), matching Theorem~\ref{thm:grouping}.

\section{Conclusion}
We formulated KV cache quantization as a transform coding problem whose
distortion is the attention product error. Under a high-resolution
model, the optimal key transform is not orthogonal, and satisfies a
generalized Parseval relation, turning the attention-aware criterion into MSE
in the transform domain.
Under the independent-Gaussian high-resolution model, equal-volume grouping makes fixed-rate VQ attain the variable-rate optimum, allowing a fixed-width cache without an asymptotic distortion penalty. At two bits per element, \method{} reduces the gap to BF16 relative to the
2-bit state-of-the-art at comparable decoding speed. Joint quantization of
keys and values is left for future work.

\clearpage
\beginappendix
\startcontents[appendix]
\printcontents[appendix]{}{1}{\setcounter{tocdepth}{2}}
\clearpage

\section{Notation and Definitions}
\label{app:notation}

This appendix collects the notation and definitions used throughout the paper.

\subsection{Conventions}

Uppercase bold letters ($\mathbf{A}$) denote matrices, lowercase bold letters
($\mathbf{a}$) denote row vectors, and regular letters denote scalars. The
$n$th entry of $\mathbf{a}$ is $a_n$ and the $(i,j)$th entry of $\mathbf{A}$
is $A_{ij}$. All vectors are row vectors, so a linear map $\mathbf R$ acts on the right:
$\mathbf{x}\mathbf{R}$. Eigenvalues are indexed in decreasing order,
$\lambda_1 \ge \cdots \ge \lambda_d$, and $\mathbf{E}_{1:p}$ collects the
eigenvectors associated with the $p$ largest eigenvalues. For a symmetric
positive definite (p.d.) $\mathbf{M}$ we let
$\|\mathbf{a}\|_{\mathbf{M}}^2 = \mathbf{a}\mathbf{M}\mathbf{a}^\top$, and
$\mathbf{M}^{1/2}$ denotes the symmetric square root, i.e.,\ the unique
symmetric p.d. $\mathbf{B}$ with $\mathbf{B}\mathbf{B} =
\mathbf{M}$. We write $\|\cdot\|_F$ for the Frobenius norm and
$\|\cdot\|_2$ for the spectral norm of a matrix and the $\ell_2$ norm of a vector.

\subsection{Attention and the cache}
Notation for attention, RoPE, and the cache follows Sec.~\ref{sec:attn-cache} of the main text. We let $\hat{\mathbf{K}}$, $\hat{\mathbf{V}}$ be the reconstructed keys and values after quantization, and $\hat{\mathbf{S}}$ be the scores computed from $\hat{\mathbf{K}}$. Under grouped-query attention
(GQA)~\cite{ainslie2023gqa}, the query heads are partitioned into groups, and each group shares a single key-value projection pair $(\mathbf{W}_K, \mathbf{W}_V)$; we call the shared projection pair, together with the cache entries it produces, a \emph{KV head}. Quantities are then formed per (layer, KV head); the query statistic $\mathbf{M}_q$ is accumulated over the query heads sharing the KV head (Appendix~\ref{sec:gqa}), and the transform, grouping, and codebooks are fit per KV head.

\subsection{Calibration statistics}
\label{app:calib-stats}

Let $M$ be the number of cached keys collected on calibration. The calibration statistics are the key mean
$\bar{\mathbf{k}} = {1}/{M}\, \sum_{j} \mathbf{k}_j$, the key covariance
$\widetilde{\mathbf{S}}_k = \sum_j
(\mathbf{k}_j-\bar{\mathbf{k}})^\top(\mathbf{k}_j-\bar{\mathbf{k}})$, the
query second moment $\mathbf{M}_q = \mathbf{Q}^\top\mathbf{Q}$, the score
second moment $\mathbf{M}_s = \mathbf{S}^\top\mathbf{S}$, and the output
second moment $\mathbf{M}_o = \mathbf{V}^\top\mathbf{M}_s\mathbf{V}$.
$\mathbf{M}_q$ is positive semidefinite by construction; we assume it is
positive definite for the analysis. Numerically, $\mathbf{M}_q^{1/2}$ and
$\mathbf{M}_q^{-1/2}$ are formed from its symmetric eigendecomposition
after flooring the eigenvalues at $10^{-30}$; no additive ridge is applied. For values, we use the default rotation matrix provided in \cite{zhou2026oscar}.

\subsection{Transforms, quantization, and grouping}
For the keys, the transform is $\mathbf{R}_K = \mathbf{M}_q^{1/2}\mathbf{E}$, with
$\mathbf{E}$ the eigenvectors of $\mathbf{M}_q^{1/2}\widetilde{\mathbf{S}}_k
\mathbf{M}_q^{1/2}$ (Theorem~1); the transform coefficients of key $j$ are
$\mathbf{r}_j = (\mathbf{k}_j-\bar{\mathbf{k}})\mathbf{R}_K$, and
$\sigma_1^2 \ge \cdots \ge \sigma_d^2$ denote their per-coordinate variances.
The query-weighted distortion, or \emph{$\mathbf{M}_q$-MSE}, of a reconstruction
is $\|\mathbf{k}-\hat{\mathbf{k}}\|_{\mathbf{M}_q}^2$; summed over cached keys it
equals the key-query inner-product error (cf.~Eq.~\eqref{eq:supp_kq}). 

A quantizer of group size $g$ maps subvectors of $g$
entries jointly; a partition $\pi$ of $\{1,\dots,d\}$ into $L = d/g$ groups
$G_1,\dots,G_L$ of size $g$ has group \emph{volume}
$v_\ell(\pi) = \prod_{i\in G_\ell}\sigma_i^2$, and is \emph{volume-equalizing}
when $v_1(\pi)=\cdots=v_L(\pi)$. Under \emph{fixed rates} every group gets the
same rate $b_\ell \equiv b$; under
\emph{variable rates} $\{b_\ell\}$ is free subject to $\sum_\ell b_\ell = Lb$.

In \eqref{eq:zador}, we let $C_g = g\,Z_g\,\gamma_g$, with $Z_g$ the normalized moment of inertia of the optimal
$g$-dimensional cell shape and $\gamma_g$ the source-density functional
of Zador's theorem; under
the independent-Gaussian model $C_g$ is common to all groups \cite{zador1982asymptotic} (no effect on the choice of partition).

The groupings compared in this paper are: \emph{variance-sorted}, $G_\ell = \{(\ell-1)g+1,\dots,\ell g\}$ after sorting entries by decreasing $\sigma_i^2$; \emph{random}, one partition drawn uniformly at random and then held fixed; and \emph{equalizing}, the partition of Sec.~\ref{sec:vq}, $G_\ell = \{\,i \in \{1,\dots,d\} : i-1 \equiv \ell-1 \ (\mathrm{mod}\ L)\,\}$ after the same sort. Fig.~\ref{fig:groupings} illustrates the three rules.

\begin{figure}[!htbp]
\centering
\definecolor{slate}{HTML}{3B4B6B}
\definecolor{brick}{HTML}{9C3D3D}
\definecolor{paergreen}{HTML}{3B6B57}
\begin{tikzpicture}[
  cell/.style={draw=gray!60, minimum width=0.42cm, minimum height=0.42cm,
               inner sep=0pt, font=\scriptsize, text=white},
  lab/.style={font=\small, anchor=east},
  g1/.style={cell, fill=slate!95},
  g2/.style={cell, fill=brick!95},
  g3/.style={cell, fill=paergreen!95},
  x=0.44cm, y=0.85cm]

\node[lab, opacity=0] at (12.0, 1) {(a)};

\node[lab] at (-0.5, 2) {(a)};
\foreach \i in {0,...,3}  \node[g1] at (\i, 2) {\i};
\foreach \i in {4,...,7}  \node[g2] at (\i, 2) {\i};
\foreach \i in {8,...,11} \node[g3] at (\i, 2) {\i};

\node[lab] at (-0.5, 1) {(b)};
\foreach \i/\s in {0/g2,1/g1,2/g3,3/g1,4/g3,5/g2,
                   6/g1,7/g3,8/g2,9/g3,10/g1,11/g2}
  \node[\s] at (\i, 1) {\i};

\node[lab] at (-0.5, 0) {(c)};
\foreach \i in {0,...,11}{
  \pgfmathtruncatemacro{\m}{mod(\i,3)+1}
  \node[g\m] at (\i, 0) {\i};
}
\end{tikzpicture}
\caption{The three partitions in a zero-indexed toy example ($d=12$, $g=4$, $L=3$);
color denotes group membership, entries ordered by decreasing
variance $\sigma_i^2$ from left to right. (a)~\emph{Variance-sorted}
groups consecutive spectrum entries. (b)~\emph{Random} mixes levels by chance. (c)~\emph{Equalizing}
deals coordinate $i$ to group $i \bmod L$, so each group receives one
coordinate per level (Theorem~2).}
\label{fig:groupings}
\end{figure}

\subsection{Symbol table}
We summarize important notations in Table~\ref{tab:notation}.

\begin{table}[!htbp]
\small
\setlength{\tabcolsep}{5pt}
\centering
\renewcommand{\arraystretch}{1.15}
\caption{Summary of notations used in this work.}
\label{tab:notation}
\begin{tabular}{ll}
\toprule
Symbol & Meaning \\
\midrule
$d$ & head dimension \\
$T$ & sequence length \\
$M$ & number of calibration keys \\
$\mathbf{q}_t,\mathbf{k}_t$ & post-RoPE query and key at position $t$ \\
$\mathbf{v}_t$ & value at position $t$ \\
$\mathbf{S}$, $\hat{\mathbf{S}}$ & attention scores from $\mathbf{K}$, from $\hat{\mathbf{K}}$ \\
$\bar{\mathbf{k}}$ & calibration key mean \\
$\mathbf{M}_q,\widetilde{\mathbf{S}}_k,\mathbf{M}_s,\mathbf{M}_o$ & calibration second-order statistics \\
$\mathbf{R}_K,\mathbf{R}_V$ & key and value transforms \\
$\mathbf{r}_j$ & transform coefficients of key $j$ \\
$\sigma_i^2$ & variance of transform coordinate $i$ \\
$\mathcal{Q}^{+},\mathcal{Q}^{-},\mathcal{Q}$ & encoder, decoder, their composition \\
$\mathcal{C}_\ell$ & codebook of group $\ell$ \\
$g$, $L$ & group size, number of groups \\
$b$, $b_\ell$ & rate, per-group rate \\
$\pi$, $G_\ell$, $v_\ell(\pi)$ & partition, group, group volume \\
$n_{\mathrm{sink}}$, $n_{\mathrm{rec}}$ & sink and recent band sizes \\
SQ, VQ & scalar quantization, vector quantization\\
\bottomrule
\end{tabular}
\end{table}

\subsection{High-level algorithm}

In Algorithm~\ref{alg:pipeline}, we detail our pipeline both during calibration and inference.

\begin{algorithm}[t]
\caption{Query-aware key transform and quantizer}
\label{alg:pipeline}
\begin{algorithmic}[1]
\item[] \hspace{-2em} \colorbox{deepspace!8}{\makebox[\dimexpr\linewidth-\algorithmicindent][l]{\textbf{Calibration (offline):}}}
\State Collect queries $\mathbf{Q}$ and keys $\mathbf{K}$ on calibration data
\State $\bar{\mathbf{k}} \gets \sum_{j=1}^M \mathbf{k}_j/{M}$;\quad
       $\mathbf{M}_q \gets \mathbf{Q}^\top\mathbf{Q}$;\quad
       $\widetilde{\mathbf{S}}_k \gets \sum_{j=1}^M
       (\mathbf{k}_j-\bar{\mathbf{k}})^\top(\mathbf{k}_j-\bar{\mathbf{k}})$
\State Eigendecompose
       $\mathbf{M}_q^{1/2}\widetilde{\mathbf{S}}_k\mathbf{M}_q^{1/2}
       = \mathbf{E}\boldsymbol{\Lambda}\mathbf{E}^\top$
\State $\mathbf{R}_K \gets \mathbf{M}_q^{1/2}\mathbf{E}$;\quad
       $\mathbf{R}_K^{-\top} = \mathbf{M}_q^{-1/2}\mathbf{E}$
\State Find volume-equalized groups $\pi$ (Thm.~2), fold into $\mathbf{R}_K$
\State Train per-group codebooks $\{\mathcal{C}_\ell\}$ on
       $(\mathbf{K}-\mathbf{1}\bar{\mathbf{k}})\mathbf{R}_K$
\item[] \hspace{-2em} \colorbox{brickred!8}{\makebox[\dimexpr\linewidth-\algorithmicindent][l]{\textbf{Inference (online):}}}
\State Write: cache
       $\hat{\mathbf{K}} = \mathcal{Q}^{+}\bigl((\mathbf{k}_j-\bar{\mathbf{k}})\mathbf{R}_K\bigr)$
\State Attention: $\mathbf{S} = \operatorname{softmax}\bigl(
       \mathbf{Q}\,\mathbf{R}_K^{-\top}\,
       \mathcal{Q}^{-}(\hat{\mathbf{K}})^{\top}
       / \sqrt{d} \bigr)$
\end{algorithmic}
\end{algorithm}

\section{Experimental Details}
\label{app:experiments}

In this section, we detail our experimental setup, including metrics, decoding configuration, calibration, serving setup, benchmarks, and datasets. We also ablate the VQ and SQ in the value arm to justify our deployed method.

\subsection{Reported quantities}

\paragraph{Bits per element (BPE).} The average number of stored bits per
cached scalar entry, over keys and values and accounting for the full context, including
per-token scales, zero-points, and the BF16 sink-plus-recent band, and
excluding the per-model transforms and codebooks, whose contribution is
reported separately. The cache-rate accounting ($2.22$ for ours, $2.28$ for
OSCAR) is given in the main text.

The per-model transforms and codebooks are stored once and are independent of
context and batch. Per (layer, KV head) we store the three $d\times d$ transforms
$\mathbf{R}_K,\mathbf{R}_K^{-\top},\mathbf{R}_V$, the mean $\bar{\mathbf{k}}$
($d$ scalars), and $L=d/g$ key codebooks of $2^{gb}$ centroids in
$\mathbb{R}^{g}$, that is $3d^2 + d + L\,2^{gb}g$ scalars. For instance, with Qwen3-8B, for $d=128$,
$g=4$, $2^{gb}=256$, $L=32$ this is $49152 + 128 + 32768 \approx
8.2\times10^{4}$ scalars per (layer, KV head); at $16$-bit transforms and $8$-bit
codebooks it totals $\approx 0.30$\,Gbit for the full $36$-layer, $8$-KV-head
model. Divided by the $2\,n_\text{layer}\,n_\text{kv}\,d\,L_\text{ctx}\,B$
quantized cache scalars, a  $L_\text{ctx}=128$K sequence carries
$\approx 0.03$ bits per KV element; since the same transforms and codebooks serve
the entire batch and all requests, the amortized contribution falls below $0.01$
bits per KV element at the batch sizes we serve, negligible against the
$\approx 2.2$ cache rate.

\paragraph{Top-1 attention agreement.} The fraction of queries
whose highest-scoring key under $\hat{\mathbf{K}}$ is the same as under
$\mathbf{K}$. It measures whether quantization preserves the argmax of the
logits, which the MSE does not constrain directly.

\paragraph{Sink-plus-recent band.} The first $n_{\mathrm{sink}}$ and the most recent $n_{\mathrm{rec}}$
tokens are held in BF16 and excluded from quantization; $n_{\mathrm{sink}}$ and $n_{\mathrm{rec}}$ are fixed
constants, so the band's share of the cache decreases as the context grows (see Fig.~\ref{fig:band}).

\begin{figure}[!htbp]
\centering
\definecolor{slate}{HTML}{3B4B6B}
\definecolor{brick}{HTML}{9C3D3D}
\definecolor{paergreen}{HTML}{3B6B57}
\begin{tikzpicture}[
  band/.style={draw=gray!60, minimum height=0.65cm, inner sep=0pt,
               font=\small, text=white, anchor=west},
  note/.style={font=\small, text=gray!40!black},
  tick/.style={font=\normalsize, text=gray!40!black, anchor=north},
  x=1cm]

\node[band, fill=slate!95,     minimum width=1.2cm] (sink) at (0,0)   {BF16};
\node[band, fill=brick!95,     minimum width=5.4cm] (hist) at (1.2,0) {quantized (\method{})};
\node[band, fill=paergreen!95, minimum width=1.6cm] (rec)  at (6.6,0) {BF16};

\node[tick] at (0,   -0.4) {$1\vphantom{n_{\mathrm{sink}}}$};
\node[tick] at (1.2, -0.4) {$n_{\mathrm{sink}}$};
\node[tick] at (6.6, -0.4) {$t-n_{\mathrm{rec}}\vphantom{n_{\mathrm{sink}}}$};
\node[tick] at (8.2, -0.4) {$t\vphantom{n_{\mathrm{sink}}}$};

\draw[decorate, decoration={brace, amplitude=4pt}]
  (0,0.4) -- (1.2,0.4)
  node[midway, above=4pt, note] {sink};
\draw[decorate, decoration={brace, amplitude=4pt}]
  (1.2,0.4) -- (6.6,0.4)
  node[midway, above=4pt, note] {history (grows with $t$)};
\draw[decorate, decoration={brace, amplitude=4pt}]
  (6.6,0.4) -- (8.2,0.4)
  node[midway, above=4pt, note] {recent};
\end{tikzpicture}
\caption{Cache layout at decoding position $t$. The first
$n_{\mathrm{sink}}$ tokens (attention sinks) and the most recent
$n_{\mathrm{rec}}$ tokens are held in BF16 and excluded from
quantization; the tokens in between are stored in the \method{} format.
$n_{\mathrm{sink}}$ and $n_{\mathrm{rec}}$ are fixed
($n_{\mathrm{sink}}{=}64$, $n_{\mathrm{rec}}{=}256$), so the BF16
band's share of the cache vanishes as the context grows.}
\label{fig:band}
\end{figure}
\paragraph{Benchmarks and scoring.} Table~\ref{tab:datasets} lists, for each
benchmark, the number of items scored (\#\,items), the number of samples drawn
per prompt (Smp.), the generation cap in tokens (Gen), and the per-item metric.
In the main paper, the five generation benchmarks draw $5$ samples per prompt under a $32$K cap,
while RULER NIAH uses $3$ samples with a $128$-token answer cap. The per-item
metrics are as follows. \emph{RULER NIAH}: case-insensitive substring match,
with partial credit equal to the fraction of gold needles found. \emph{GPQA-Diamond}:
the answer letter is extracted with the simple-evals multiple-choice template and
scored by exact match against the gold option (A--D). \emph{HumanEval} and
\emph{LiveCodeBench v6}: pass@1, i.e., a sample is accepted only if it passes all
reference unit tests under sandboxed execution. \emph{AIME25} and \emph{MATH500}:
symbolic-equivalence checking of the final answer with the \texttt{math-verify}
library. Benchmark accuracy is the mean of this per-item score over samples,
reported with a bootstrap $95\%$ confidence interval.

\subsection{Decode-time breakdown (Fig.~\ref{fig:memory})}
Figure~\ref{fig:memory} reports measured GPU kernel time per decoding step for Qwen3-8B served with SGLang on $8\times$H100 80\,GB SXM GPUs using tensor parallelism of degree~8. We use a fixed context length of $16384$ and a BF16 KV cache. Profiling covers decode steps 6--8, after the complete $16384$-token prompt has been written to the cache, so every profiled
step reads the full cache. Decode runs under CUDA graphs. Each bar is the median over the three steps of the summed CUDA-profiler kernel durations; CPU launch overhead and gaps between kernels are excluded.

We assign kernels to the five categories shown in Fig.~\ref{fig:memory} according to their names. \emph{Parameter load} contains the linear and MLP GEMM kernels, whose time is dominated by accessing the model parameters in this regime. \emph{Activations} contains the remaining normalization, elementwise, and activation-processing kernels. \emph{KV load} contains the fused decode-attention kernels, which read the cached keys and values and compute the query-key products, softmax, and score-value products. \emph{KV store} contains the cache-write kernels, and \emph{all-reduce} contains the NCCL collectives.

The model parameter footprint is independent of batch size: the model has $15.1$\,GB of parameters in aggregate, and the same parameters are accessed once per decoding step regardless of the number of sequences in flight. Hence, the parameter-load bar remains nearly constant across batch sizes. KV-cache traffic, in contrast, scales with both batch size and context length. Each token contributes $512$ bytes per KV head in BF16, including its key and value. Under tensor parallelism of degree~8, the per-GPU cache footprint therefore grows from $0.30$\,GB at batch~1 to $19.33$\,GB at batch~64. Correspondingly, the KV-load bar in Fig.~\ref{fig:memory} increases from $2.2$ to $9.4$\,ms, or from $51\%$ to $76\%$ of the kernel time.

The KV-load category measures the duration of the fused attention kernels rather than memory traffic in isolation. These kernels stream the KV cache while performing the attention arithmetic. Accounting for reuse across grouped-query heads, decode attention has an arithmetic intensity of approximately $4.0$\,FLOP/byte, far below the H100 BF16 roofline ridge point of approximately $295$\,FLOP/byte. The fused attention kernel is therefore bandwidth-bound once sufficient
parallelism is available. At batch~1, its measured KV traffic
corresponds to only approximately $4\%$ of the $3.35$\,TB/s peak HBM bandwidth because a single $16$K-token sequence provides insufficient parallelism to occupy all 132 SMs. At batch~64, the achieved bandwidth rises to approximately $61\%$ of peak. Thus, at large batch sizes, streaming the KV cache dominates the measured GPU kernel time.
\subsection{Calibration}

Statistics are accumulated over the calibration prompts, per (layer,
KV head), in prefill only. The matrices $\mathbf{M}_q^{\pm 1/2}$ are formed
from the symmetric eigendecomposition of $\mathbf{M}_q$ with eigenvalues floored at
$10^{-30}$ before the $\pm1/2$ power; no additive ridge is applied. The
bases $\mathbf{R}_K$, $\mathbf{R}_K^{-\top}$, $\mathbf{R}_V$ are stored in float64 (the mean $\bar{\mathbf{k}}$ in float32) and cast to
bfloat16 when loaded into the serving engine.

Key codebooks are trained by $k$-means with random initialization, for at most $25$
iterations, on the $131072$ transformed key subvectors per (layer, KV head)
($8$ calibration sequences $\times\,16384$ averaged positions) after centering,
transformation, and grouping; empty clusters are reseeded each iteration from the
points farthest from their centroid. There is one codebook per group per (layer,
KV head), $256$ centroids at $g=4$, stored in float16 and snapped to fp8 offline;
each cached subvector is an $8$-bit index. Encoding is an exact nearest-neighbor
search in the Euclidean metric of the transform domain. The grouping sorts
entries by $\log \sigma_i^2$, taking $\sigma_i^2$ from the calibration
spectrum, and deals them to the $L$ groups in turn; the procedure is
deterministic. We detail the algorithmic training steps in Algorithms~\ref{alg:transform} and \ref{alg:codebooks}.

\begin{algorithm}[t]
\caption{Query-aware transform and grouping}
\label{alg:transform}
\begin{algorithmic}[1]
\Require $M$ token positions; at each position $j$,
         the post-RoPE key $\mathbf{k}_j$ of this KV head and the
         post-RoPE queries, shared across $H$ heads, $\mathbf{q}_j^{(1)},\dots,\mathbf{q}_j^{(H)}$
\Ensure  $\mathbf{R}_K$, $\mathbf{R}_K^{-\top}$, $\bar{\mathbf{k}}$
\State $\mathbf{M}_q \gets \sum_{j=1}^{M}\sum_{h=1}^{H}
       \mathbf{q}_j^{(h)\top}\mathbf{q}_j^{(h)}$
       \Comment{summed over the $H$ shared query heads (Appendix~\ref{sec:gqa})}
\State $\bar{\mathbf{k}} \gets {1}/{M}\sum_j \mathbf{k}_j$;\quad
       $\widetilde{\mathbf{S}}_k \gets \sum_j (\mathbf{k}_j-\bar{\mathbf{k}})^{\top}
       (\mathbf{k}_j-\bar{\mathbf{k}})$
\State $\mathbf{E}\boldsymbol{\Lambda}\mathbf{E}^{\top} \gets
       \operatorname{eig}(\mathbf{M}_q^{1/2}\widetilde{\mathbf{S}}_k\mathbf{M}_q^{1/2})$,
       $\lambda_1 \ge \cdots \ge \lambda_d$
\State $\mathbf{R}_K \gets \mathbf{M}_q^{1/2}\mathbf{E}$;\quad
       $\mathbf{R}_K^{-\top} \gets \mathbf{M}_q^{-1/2}\mathbf{E}$
\State $\pi(r) \gets r \bmod L$, $r = 0,\dots,d-1$ by decreasing $\lambda$
\State $\mathbf{R}_K \gets \mathbf{R}_K[:,\pi]$;\quad
       $\mathbf{R}_K^{-\top} \gets \mathbf{R}_K^{-\top}[:,\pi]$
       \Comment{folded in}
\end{algorithmic}
\end{algorithm}
\begin{algorithm}[t]
\caption{Codebook training (per layer, KV head)}
\label{alg:codebooks}
\begin{algorithmic}[1]
\Require keys $\{\mathbf{k}_j\}$, $\mathbf{R}_K$, $\bar{\mathbf{k}}$; rate $b$
\Ensure  codebooks $\{\mathcal{C}_\ell\}_{\ell=1}^{L}$
\For{$j = 1,\dots,M$}
  \State $\mathbf{r}_j \gets (\mathbf{k}_j-\bar{\mathbf{k}})\mathbf{R}_K$
  \State $\rho_j \gets \lVert \mathbf{r}_j \rVert_2/\sqrt{d}$;\quad
         $\mathbf{r}_j \gets \rho_j^{-1}\mathbf{r}_j$
         \Comment{Normalize}
\EndFor
\For{$\ell = 1,\dots,L$}
  \State $\mathcal{C}_\ell \gets k\text{-means}\bigl(\{\mathbf{r}_j[G_\ell]\}_j,\,
         2^{gb}\bigr)$
         \Comment{equal size for every group}
\EndFor
\end{algorithmic}
\end{algorithm}

The per-token scale $\rho_j$ in Algorithm~\ref{alg:codebooks} normalizes each
transformed key before the codebooks are trained. It is not part of the model of
Theorem~2, which treats the entries as independent Gaussians with fixed
variances $\sigma_i^2$.

\subsection{Serving and evaluation}
We run in a research fork based on SGLang v0.5.10 with custom decoding kernels.
Software versions for the two hardware setups are listed in
Table~\ref{tab:hparams}. The accuracy evaluations use the Triton attention
backend for both prefill and decode. Chunked prefill uses $4096$-token
chunks on the dense models and $8192$ on GPT-OSS. RULER NIAH uses $8$ subtasks (\texttt{niah\_single\_1/2/3},
\texttt{niah\_multikey\_1/2/3}, \texttt{niah\_multivalue},
\texttt{niah\_multiquery}); item counts, samples per prompt, generation
caps, and per-item metrics are collected in Table~\ref{tab:datasets}.
Baselines use their own reference calibration procedure, under the same
sink-plus-recent band and the same chunk size as ours. \emph{Naive INT2} denotes per-token INT2 applied directly to
keys and values with no transform and no sink/recent band; it collapses to $0.0$
on every model and task (main-text Table~\ref{tab:main}). For the OSCAR-basis rows of the
transform--quantizer ablation (main-text Table~\ref{tab:ablation}), codebooks are refit on that
basis with the same group size, codebook size, and training budget as ours; the
KLT comparison (main-text Fig.~\ref{fig:transform-analysis}) instead uses entropy-coded scalar quantization.

\subsection{Decoding kernel and read path}
\label{sec:decode-kernel}
The read path is a fused Triton kernel that never multiplies by the transform.
The synthesis $\mathbf{R}_K^{-\top}$ is folded into the query once per step: the
engine forms the modified query $\tilde{\mathbf{q}} = \mathbf{q}\,\mathbf{R}_K^{-\top}$
(a single $d\times d$ product, amortized over the whole cache), so scoring a
cached key reduces to the inner product of $\tilde{\mathbf{q}}$ with the
dequantized code $\mathcal{Q}^{-}(\hat{\mathbf{k}})$. For each of the $L$ groups
the stored $8$-bit index selects a $g$-dimensional centroid from that group's
codebook; the kernel accumulates the partial logit across groups and applies the
per-token scale $\rho_j$ recovered from metadata (one multiply). Because a shared
mean shifts every logit of a query by a constant, $\bar{\mathbf{k}}$ cancels in
the softmax (Sec.~\ref{sec:lemma-offset}) and is not re-added on the read path.
On the value side the INT2 code is dequantized in the OSCAR rotation basis, whose
inverse is absorbed into the output projection, so it adds no online matmul
either. Thus the per-element read cost is a table lookup and a scale, matching
scalar dequantization. We implement the read path as a two-stage split-KV decode kernel, with gather and dequantize fused with the attention
math in stage~1 and stage-2 split-combine; the split count follows the
engine's occupancy heuristic.

\paragraph{Value quantizer.} We keep SQ for the
values: replacing the value SQ with VQ barely changes downstream
accuracy. We compared two configurations, the decorrelating transform
$\mathbf{U}_S$ alone (as suggested by Cor.~1) and $\mathbf{U}_S$
followed by a Hadamard transform (closest to the deployed SQ value
path). Both stay within $2$ points of the deployed SQ
(Table~\ref{tab:vqv-ablation}). While VQ is slightly more accurate at
a slightly lower rate ($2.16$ vs.\ $2.22$ BPE), we keep SQ for
read-path simplicity.

\begin{table}[!htbp]
\small
\centering
\caption{VQ on the values under two transforms, on
RULER NIAH with Qwen3-8B, at fixed group size, codebook size, and rate.
Mean accuracy (\%) with CIs drawn from $3$ evaluation rollouts. SQ corresponds to our deployed method. BPE=$2.22$ for the SQ method, $2.16$ for the VQ method.}
\label{tab:vqv-ablation}
\setlength{\tabcolsep}{6pt}
\renewcommand{\arraystretch}{0.9}
\begin{tabular}{lcc}
\toprule
\textbf{Configuration} & \textbf{64K} & \textbf{128K} \\
\midrule
SQ (deployed) & \ci{76.4}{0.4} & \ci{75.4}{0.1} \\
VQ+$\mathbf{U}_S$ & \ci{77.8}{0.2} & \ci{76.1}{0.5}  \\
VQ+$\mathbf{U}_S$+Hadamard & \ci{76.4}{0.3} & \ci{76.7}{0.7}\\
\bottomrule
\end{tabular}
\end{table}

\paragraph{Implementation}
\label{sec:kernel-profiling}

We consider two implementations of the read path of Sec.~\ref{sec:decode-kernel}.
Both realize the same two-stage split-KV decode attention, in which
stage~1 fuses the codebook lookup, the per-token scale, and the attention
arithmetic, and stage~2 combines the splits; they share the cache layout
and the fp8 codebooks. The \emph{Triton} kernel is the reference
implementation, with launch configurations selected offline per batch size
and geometry. The \emph{CUDA} kernel is a hand-written stage~1 with a
warp-tiled codebook gather and fp32 accumulation; it reduces the
instruction count of the gather-and-dequantize loop, which lowers the
attention read at batch 128 on Qwen3-8B from $48.9$ to $36.4$\,ms per step
(Table~\ref{tab:kernel-profile-qwen}). The CUDA kernel is the one
measured in the main-text throughput comparison (main-text Fig.~\ref{fig:throughput_bars}).

\paragraph{Per-step kernel profiling}

\begin{table}[!htbp]
\small
\centering
\caption{Per-decoding-step kernel time on Qwen3-8B, one H100, 8192-token
contexts. Entries are ms per step with the share of the step in
parentheses (\%). The BF16 pool admits at most 38 concurrent requests at
this length.}
\label{tab:kernel-profile-qwen}
\setlength{\tabcolsep}{3pt}
\renewcommand{\arraystretch}{0.9}
\begin{tabular}{lrccccc}
\toprule
\textbf{Method} & \textbf{B} & \textbf{GEMM} & \textbf{Attn} & \textbf{Quant} & \textbf{Other} & \textbf{Total} \\
\midrule
\multirow{6}{*}{BF16}
 & 1   & 5.8 (57.1) & 3.8 (37.6)  & --        & 0.5 (5.2)  & 10.1 \\
 & 8   & 5.8 (54.0) & 4.4 (40.8)  & --        & 0.6 (5.2)  & 10.8 \\
 & 16  & 5.9 (37.9) & 9.0 (58.3)  & --        & 0.6 (3.8)  & 15.4 \\
 & 32  & 5.9 (24.5) & 17.5 (73.1) & --        & 0.6 (2.4)  & 24.0 \\
 & 64  & \multicolumn{5}{c}{\emph{capacity limited}} \\
 & 128 & \multicolumn{5}{c}{\emph{capacity limited}} \\
\midrule
\multirow{6}{*}{OSCAR}
 & 1   & 6.0 (72.7) & 1.3 (16.4)  & 0.0 (0.5) & 0.9 (10.4) & 8.2 \\
 & 8   & 6.1 (62.3) & 2.6 (26.9)  & 0.0 (0.5) & 1.0 (10.3) & 9.8 \\
 & 16  & 6.2 (52.8) & 4.5 (38.2)  & 0.1 (0.5) & 1.0 (8.5)  & 11.7 \\
 & 32  & 6.2 (39.9) & 8.3 (53.2)  & 0.1 (0.4) & 1.0 (6.5)  & 15.5 \\
 & 64  & 6.2 (27.1) & 15.7 (68.1) & 0.1 (0.4) & 1.0 (4.4)  & 23.1 \\
 & 128 & 6.4 (16.2) & 31.6 (80.5) & 0.1 (0.4) & 1.1 (2.9)  & 39.3 \\
\midrule
\multirow{6}{*}{\shortstack[l]{\method{}\\(Triton)}}
 & 1   & 5.8 (67.0) & 1.7 (19.5)  & 0.3 (3.4) & 0.9 (10.2) & 8.7 \\
 & 8   & 5.9 (50.5) & 4.1 (35.3)  & 0.5 (4.5) & 1.1 (9.7)  & 11.7 \\
 & 16  & 6.0 (40.6) & 6.8 (45.8)  & 0.8 (5.8) & 1.2 (7.9)  & 14.8 \\
 & 32  & 6.0 (28.1) & 12.6 (59.5) & 1.4 (6.8) & 1.2 (5.6)  & 21.2 \\
 & 64  & 6.1 (17.4) & 24.7 (70.7) & 2.8 (8.0) & 1.4 (4.0)  & 35.0 \\
 & 128 & 6.1 (9.9)  & 48.9 (78.7) & 5.3 (8.6) & 1.7 (2.8)  & 62.1 \\
\midrule
\multirow{6}{*}{\shortstack[l]{\method{}\\(CUDA)}}
 & 1   & 5.9 (64.8) & 2.0 (21.7)  & 0.3 (3.4) & 0.9 (10.0) & 9.0 \\
 & 8   & 5.9 (55.6) & 3.0 (28.1)  & 0.6 (5.2) & 1.2 (11.1) & 10.7 \\
 & 16  & 6.0 (45.2) & 5.2 (39.3)  & 0.9 (6.6) & 1.2 (8.9)  & 13.3 \\
 & 32  & 6.0 (32.8) & 9.5 (52.1)  & 1.5 (8.2) & 1.3 (6.9)  & 18.3 \\
 & 64  & 6.1 (21.1) & 18.6 (64.4) & 2.8 (9.7) & 1.4 (4.9)  & 28.8 \\
 & 128 & 6.1 (12.4) & 36.4 (73.3) & 5.3 (10.8) & 1.8 (3.6) & 49.7 \\
\bottomrule
\end{tabular}
\end{table}

\begin{table}[!htbp]
\small
\centering
\caption{Per-decoding-step kernel time on GPT-OSS-20B, same protocol as
Table~\ref{tab:kernel-profile-qwen}.}
\label{tab:kernel-profile-gptoss}
\setlength{\tabcolsep}{3pt}
\renewcommand{\arraystretch}{0.9}
\begin{tabular}{lrccccc}
\toprule
\textbf{Method} & \textbf{B} & \textbf{GEMM} & \textbf{Attn} & \textbf{Quant} & \textbf{Other} & \textbf{Total} \\
\midrule
\multirow{6}{*}{BF16}
 & 1   & 2.6 (65.5) & 0.7 (17.8) & --        & 0.7 (16.7) & 4.0 \\
 & 8   & 4.5 (73.2) & 0.9 (14.8) & --        & 0.7 (12.0) & 6.2 \\
 & 16  & 4.9 (70.1) & 1.3 (19.1) & --        & 0.8 (10.8) & 7.1 \\
 & 32  & 5.8 (63.7) & 2.5 (27.4) & --        & 0.8 (8.9)  & 9.1 \\
 & 64  & 6.5 (53.8) & 4.8 (39.2) & --        & 0.8 (7.0)  & 12.1 \\
 & 128 & \multicolumn{5}{c}{\emph{capacity limited}} \\
\midrule
\multirow{6}{*}{OSCAR}
 & 1   & 2.8 (66.0) & 0.5 (12.4) & 0.0 (0.9) & 0.9 (20.8) & 4.2 \\
 & 8   & 4.2 (68.5) & 0.9 (14.8) & 0.0 (0.7) & 1.0 (15.9) & 6.1 \\
 & 16  & 4.6 (67.4) & 1.2 (17.6) & 0.0 (0.6) & 1.0 (14.4) & 6.9 \\
 & 32  & 5.3 (60.5) & 2.4 (27.1) & 0.0 (0.6) & 1.0 (11.8) & 8.7 \\
 & 64  & 6.2 (52.7) & 4.4 (37.6) & 0.1 (0.6) & 1.1 (9.1)  & 11.8 \\
 & 128 & 7.6 (43.5) & 8.6 (49.2) & 0.1 (0.4) & 1.2 (6.9)  & 17.5 \\
\midrule
\multirow{6}{*}{\shortstack[l]{\method{}\\(Triton)}}
 & 1   & 2.7 (63.4) & 0.5 (12.7) & 0.1 (2.4) & 0.9 (21.5) & 4.3 \\
 & 8   & 3.3 (56.8) & 1.3 (22.5) & 0.2 (2.6) & 1.1 (18.1) & 5.9 \\
 & 16  & 3.6 (54.0) & 1.7 (25.9) & 0.2 (3.4) & 1.1 (16.7) & 6.6 \\
 & 32  & 4.2 (48.6) & 3.0 (34.8) & 0.3 (3.6) & 1.1 (13.0) & 8.7 \\
 & 64  & 5.2 (40.9) & 5.8 (45.2) & 0.6 (4.4) & 1.2 (9.5)  & 12.8 \\
 & 128 & 6.6 (33.7) & 10.5 (53.7) & 1.0 (5.1) & 1.5 (7.5) & 19.6 \\
\midrule
\multirow{6}{*}{\shortstack[l]{\method{}\\(CUDA)}}
 & 1   & 2.7 (62.2) & 0.6 (14.3) & 0.1 (2.4) & 0.9 (21.0) & 4.4 \\
 & 8   & 3.3 (59.8) & 1.0 (18.1) & 0.2 (2.8) & 1.1 (19.4) & 5.6 \\
 & 16  & 3.5 (54.0) & 1.7 (25.6) & 0.2 (3.3) & 1.1 (17.1) & 6.5 \\
 & 32  & 4.2 (48.8) & 2.9 (34.1) & 0.3 (3.8) & 1.2 (13.4) & 8.6 \\
 & 64  & 5.4 (42.1) & 5.6 (44.0) & 0.5 (4.2) & 1.2 (9.7)  & 12.7 \\
 & 128 & 6.5 (32.6) & 11.0 (55.2) & 1.0 (4.8) & 1.5 (7.3) & 20.0 \\
\bottomrule
\end{tabular}
\end{table}

We adopt the per-step profiling protocol of \citet{zhou2026oscar}: one
server per arm on a single H100, 8192-token contexts, batches 1 to 128,
and GPU kernel time per decoding step split by kernel name into
\emph{GEMM} (projections, MLP, and MoE experts), \emph{Attn} (the fused
attention read, including dequantization and the split combine),
\emph{Quant} (the write path: codebook encoder, per-step query
preparation, OSCAR pack), and \emph{Other} (norms, activations, rotary,
sampling, BF16-band writes). Profiling starts in steady state, after every
warmed request is decoding, and stops after exactly 64 forward steps;
per-step figures divide device time by the step count recovered from the
trace. Tables~\ref{tab:kernel-profile-qwen} and~\ref{tab:kernel-profile-gptoss}
report the breakdown. The kernel
sum matches the wall clock per step within $1\%$ in every cell, and
unassigned kernels are below $0.2\%$. New tokens enter the
BF16 recent band and are quantized in blocks of eight steps in every
quantized arm, so the scalar arm's Quant column is smaller than a per-step
fused quantizer would show. The BF16 and OSCAR columns reproduce the
profile reported by \citet{zhou2026oscar} within about $2\%$ at every
batch size, which puts the \method{} columns on a verified scale.

\paragraph{Qwen3-8B (Table~\ref{tab:kernel-profile-qwen}).} GEMM is flat
at about $6$\,ms for every arm: parameter load is unaffected by
quantization, so the step is a fixed GEMM cost plus an attention read that
grows with batch (the regime of main-text Fig.~\ref{fig:memory}). From batch 16 on, every
2-bit arm is faster wherever BF16 runs (at batch 8 the Triton arm's
write-path cost still outweighs the small read saving), and only the 2-bit
arms serve batches 64 and 128. At batch 128, \method{} (CUDA) pays
$1.15\times$ OSCAR's attention read ($36.4$ against $31.6$\,ms) plus
$5.3$\,ms of write-path cost, for a $1.26\times$ step ($49.7$ against
$39.3$\,ms); the write-path term grows from $0.3$\,ms at batch 1 and is
the larger contribution beyond batch 64. This is the kernel-level form of
the main-text statement that decoding is on par with the scalar baseline:
the difference is confined to the read and the write path.

\paragraph{GPT-OSS-20B (Table~\ref{tab:kernel-profile-gptoss}).} At this
context no 2-bit arm beats BF16 per step: only 12 of 24 layers are
quantized, the global-layer cache at 8192 tokens is small, and at $d{=}64$
the dequantizing read is bound by instruction issue rather than bandwidth,
so the write-path cost is not repaid. The MoE GEMMs dominate the step and
vary across arms by more than the totals differ (routing depends on the
token streams). What quantization buys here is the batch itself: the 2-bit
arms serve batch 128, which the BF16 pool cannot hold. The main-text gains
for this model come from longer contexts and larger admitted batches, not
from per-step speed at 8192 tokens.

\subsection{Sampling and statistical testing}
\label{sec:stats}
Generation is stochastic. For each prompt, we draw five samples on the
generation benchmarks and three samples on RULER NIAH. The compression
parameters (transforms, groupings, codebooks, the $k$-means
initialization, and the random comparison partition of Sec.~\ref{sec:vq}) are fitted
once, with fixed seeds, and held fixed across these samples; the
repetitions therefore characterize generation and evaluation variability,
not variability from refitting the compression method. Confidence
intervals are computed by bootstrap over samples ($10000$ resamples).

McNemar's test compares two methods evaluated on the same items when
each item's outcome is binary (correct/incorrect). For each item, the
pair of outcomes falls into one of four cells: both methods correct,
both incorrect, only method A correct, or only method B correct. Items
where the two methods agree carry no information about their
difference, so the test uses only the discordant items: with $n_{01}$
items that only A solves and $n_{10}$ that only B solves, it tests the
null hypothesis that a discordant item is equally likely to fall either
way, i.e., that the two methods have the same per-item error rate. We
use the exact binomial form: under the null, $n_{01} \sim
\mathrm{Binomial}(n_{01}{+}n_{10},\, 1/2)$.

We form binary outcomes by majority voting over the
samples drawn for a given item (5 on the generation benchmarks, 3 on
RULER NIAH), and run the test over the items that both compared
methods evaluate on. For the generation benchmarks the five task sets
are combined into a single test of $992$ paired items per model; on
RULER NIAH the test is run at $128$K on Qwen3-8B over $200$ paired
items. Tests are two-sided and reported without any further correction.

\subsection{Datasets and licenses}

We evaluate on six public benchmarks. \emph{RULER NIAH} \cite{hsieh2024ruler} is a
synthetic needle-in-a-haystack retrieval suite (Apache-2.0). \emph{GPQA-Diamond}
\cite{rein2023gpqa} is graduate-level multiple-choice science (CC BY 4.0).
\emph{HumanEval} \cite{chen2021evaluating} is Python code synthesis graded by
execution (MIT). \emph{LiveCodeBench v6} \cite{jain2025livecodebench} is contamination-controlled competitive programming (benchmark released under MIT; problem statements originate from competition platforms). \emph{AIME25} consists of the 2025 AIME competition problems, copyright © Mathematical Association of America, used here for research evaluation
\cite{maa2025aime}.
\emph{MATH500} \cite{lightman2024let, hendrycks2021measuring} is a $500$-problem
subset of the MATH dataset (MIT). The models are Llama-3.1-8B (Llama 3.1 Community
License) \cite{grattafiori2024llama}, Qwen3-8B / Qwen3-4B-Thinking-2507
(Apache-2.0) \cite{yang2025qwen3}, and GPT-OSS-20B (Apache-2.0) \cite{agarwal2025gpt}. All datasets and models are used for research
evaluation consistent with their licenses.

\subsection{Throughput protocol and tuning}
\label{sec:throughput-tuning}

The decode-throughput comparison (main-text Fig.~\ref{fig:throughput_bars}) runs one server per
arm on a single H100. Prompts are distinct random token streams with no
shared prefix. Before the measured pass, a warm-up pass prefills every
request's own prompt into the cache; the measured pass then reuses these
prefixes, so all requests enter decoding together and throughput is
computed over the decode window alone. Every arm runs
its best configuration: OSCAR takes the best of eight decode-kernel tile
configurations swept per cell, \method{} (the CUDA kernel,
Sec.~\ref{sec:kernel-profiling}) takes the best of its compiled variants
per cell, and BF16 runs the
engine's dense read with an uncapped memory-fraction pool. All arms share
the same chunk size, split-KV budget, and fp32 accumulation in the
attention read.

\subsection{Configuration summary}

Tables~\ref{tab:hparams} and~\ref{tab:datasets} collect the infrastructure,
serving, method, and per-benchmark settings described above.

\begin{table}[!htbp]
\small
\centering
\caption{Infrastructure, serving, and hyperparameters.}
\label{tab:hparams}
\setlength{\tabcolsep}{4pt}
\begin{tabular}{@{}l@{\hspace{6pt}}p{0.52\columnwidth}@{}}
\toprule
Setting & Value \\
\midrule
\multicolumn{2}{l}{\emph{Software \& hardware}}\\
SGLang & research fork based on v0.5.10 \\
GPUs (accuracy) & $12\times$A100 (40GB) \\
\quad Triton / PyTorch / CUDA (A100) & 3.5.1 / 2.9.1 / 12.8 \\
GPUs (throughput, profiling) & $1\times$H100 (80GB) \\
GPUs (decode-time breakdown) & $8\times$H100 (80GB) \\
\quad Triton / PyTorch / CUDA (H100) & 3.6.0 / 2.11.0 / 13.0 \\
\midrule
\multicolumn{2}{l}{\emph{Serving}}\\
Decode attention backend & Triton \\
Chunked-prefill size & 4096 tokens (8192 on GPT-OSS) \\
Sink / recent band $n_{\mathrm{sink}},n_{\mathrm{rec}}$ & 64 / 256 (BF16) \\
KV cache dtype & INT2, quant group 128 \\
 & \quad(GPT-OSS: per-row scale, $d{=}64$) \\
\midrule
\multicolumn{2}{l}{\emph{Transform \& quantizer}}\\
Head dim $d$ / group $g$ & 128 / 4 \\
Codebook size / rate & 256 / 2 bits per coord \\
Groups $L$ / grouping & 32 / volume-equalizing \\
K / V quantizer & VQ / per-token affine INT2 \\
$k$-means init / iters & random-distinct / 25 \\
Train.\ vectors / (layer, head) & 131072 ($8{\times}16384$) \\
Transform / codebook storage & float64${\to}$bf16 / fp16${\to}$fp8 \\
\method{} regularizer & eig.\ floor $10^{-30}$ (no ridge) \\
\midrule
\multicolumn{2}{l}{\emph{Calibration \& decoding}}\\
Calibration & 198 GPQA-Diamond, prefill-only \\
Statistics dtype & float64 \\
\multicolumn{2}{l}{\quad\emph{Sampling (SGLang serving; each model's released defaults)}}\\
\quad Qwen3-8B, Qwen3-4B-Think. & $T{=}0.6$, top-$p$ $0.95$, top-$k$ $20$ \\
\quad Llama-3.1-8B-Instruct & $T{=}0.6$, top-$p$ $0.9$, top-$k$ disabled \\
\quad GPT-OSS & $T{=}1.0$, top-$p$ $1.0$, top-$k$ disabled \\
Samples per prompt & 5 (3 for NIAH) \\
Confidence intervals & paired bootstrap 95\% ($10000$ resamples) \\
\bottomrule
\end{tabular}
\end{table}
\begin{table}[!htbp]
\small
\centering
\caption{Per-benchmark evaluation settings. Samples = samples per prompt; Gen =
max generated tokens.}
\label{tab:datasets}
\setlength{\tabcolsep}{3pt}
\begin{tabular}{lrrrl}
\toprule
Dataset & \# items & Smp. & Gen & Metric \\
\midrule
RULER NIAH & 800 / 200 & 3 & 128 & substring \\
\midrule
GPQA-Diamond & 198 & 5 & 32768 & letter (A--D) \\
HumanEval & 164 & 5 & 32768 & pass@1 exec.\ \\
LiveCodeBench v6 & 100 & 5 & 32768 & pass@1 exec.\ \\
AIME25 & 30 & 5 & 32768 & math-verify \\
MATH500 & 500 & 5 & 32768 & math-verify \\
\bottomrule
\end{tabular}
\end{table}

\section{Calibration ablation}
\label{app:calib}
We vary the calibration set and refit the transform, grouping, and codebooks (Table~\ref{tab:training_data}). Reducing from $198$ to $32$ GPQA-Diamond prompts ($40$K to $8.9$K tokens) changes RULER NIAH by $1.1$
points at $64$K and $1.1$ at $128$K, both upward. Calibrating on MMLU instead, a disjoint domain with $116$K tokens, gives $74.4$ and $75.4$, within $2.0$ and $0.0$ points of the default. GPQA accuracy under MMLU
calibration is $59.3$, against $57.2$ for BF16 (Table~\ref{tab:main}).

Table~\ref{tab:calib-accuracy} extends this comparison to the five reasoning and
coding benchmarks. The effect of the calibration domain is small and model-dependent. Relative to the deployed GPQA-Diamond calibration, MMLU raises the
Qwen3-8B mean by $2.0$ points but
lowers the Qwen3-4B-Thinking mean by $0.7$. Both stay within two
points of the deployed calibration and within $2.3$ points of BF16. For consistency with \cite{zhou2026oscar}, we use GPQA-Diamond in our main evaluations.

\begin{table}[!htbp]
\small
\centering
\caption{Calibration data ablation, Qwen3-8B. We vary the calibration set and token count; GPQA accuracy does not
benefit from calibrating on GPQA. Mean accuracy (\%) $\pm$ 95\% CI over
3 rollouts.}\label{tab:training_data}
\setlength{\tabcolsep}{6pt}
\renewcommand{\arraystretch}{0.8}
\begin{tabular}{llccc}
\toprule
\textbf{Calibration}  & \textbf{Tokens} & \textbf{GPQA} & \textbf{64K} & \textbf{128K} \\
\midrule
GPQA-198 & 40000 & \ci{55.1}{1.5} &\ci{76.4}{0.4} & \ci{75.4}{0.1}\\
GPQA-32 & 8945 & \ci{56.2}{1.4} &\ci{77.5}{0.6} & \ci{76.5}{0.1}\\
MMLU & 116340 & \ci{59.3}{0.4} & \ci{74.4}{0.3} & \ci{75.4}{0.6}\\
\bottomrule
\end{tabular}
\end{table}

\begin{table}[!htbp]
\small
\centering
\caption{Calibration domain ablation on \method{}, for
Qwen3-4B-Thinking-2507 and Qwen3-8B. Compressed rows are \method{} at
$2.22$ BPE; \emph{GPQA-198} is the deployed calibration, and \emph{MMLU} a disjoint domain ($116$K tokens). Entries
are mean\,$\pm$\,95\% CI with $5$ samples per prompt; ``Drop'' is the gap in the Mean column to the BF16
reference.}
\label{tab:calib-accuracy}
\setlength{\tabcolsep}{3pt}
\renewcommand{\arraystretch}{0.8}
\begin{tabular}{llccccccrr}
\toprule
\textbf{Model} & \textbf{Calibration} & \textbf{BPE} & \textbf{GPQA} & \textbf{HumanE} & \textbf{LCB v6} & \textbf{AIME25} & \textbf{MATH500} & \textbf{Mean} & \textbf{Drop} \\
\midrule
\multirow{3}{*}{\shortstack[l]{Qwen3-4B\\-Thinking-2507}}
 & BF16 & 16.00 & \ci{64.9}{0.8} & \ci{84.8}{0.5} & \ci{59.2}{0.9} & \ci{70.0}{1.5} & \ci{97.5}{0.2} & 75.3 & -- \\
 & GPQA-198 & 2.22 & \ci{62.9}{1.3} & \ci{84.3}{0.4} & \ci{56.6}{0.6} & \ci{67.3}{1.2} & \ci{97.4}{0.2} & 73.7 & $-$1.6 \\
 & MMLU & 2.22 & \ci{64.7}{1.2} & \ci{83.7}{1.6} & \ci{53.0}{3.8} & \ci{66.7}{8.3} & \ci{97.0}{0.5} & 73.0 & $-$2.3 \\
\midrule
\multirow{3}{*}{Qwen3-8B}
 & BF16 & 16.00 & \ci{57.2}{0.9} & \ci{91.7}{0.5} & \ci{57.6}{0.7} & \ci{70.0}{1.7} & \ci{97.0}{0.1} & 74.7 & -- \\
 & GPQA-198 & 2.22 & \ci{55.1}{1.5} & \ci{92.1}{0.2} & \ci{55.6}{0.6} & \ci{64.6}{2.5} & \ci{96.9}{0.2} & 72.9 & $-$1.8 \\
 & MMLU & 2.22 & \ci{57.5}{3.3} & \ci{91.5}{1.4} & \ci{60.0}{2.3} & \ci{68.7}{6.3} & \ci{96.7}{0.5} & 74.9 & $+$0.2 \\
\bottomrule
\end{tabular}
\end{table}

\section{Partition statistics}
\begin{figure}[!htbp]
\centering
\includegraphics[width=\textwidth]{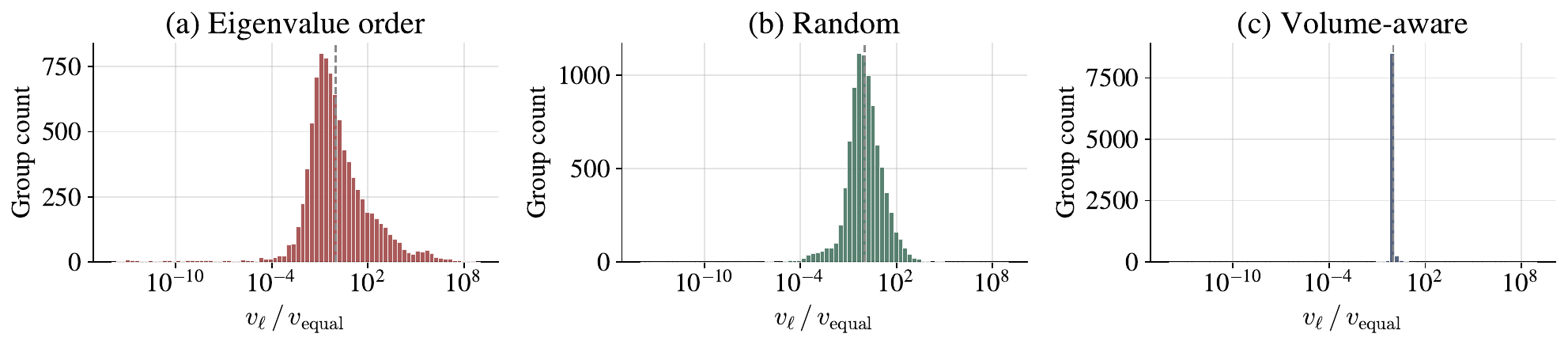}
\caption{Distribution of group volumes under the three partitions, averaged
over all (layer, KV head) pairs of Qwen3-8B at $g=4$. Each count is one group;
volumes are normalized by $v_{\mathrm{equal}}$, the common value they would take under a perfectly
volume-equalizing partition (dashed line). The horizontal axis is
logarithmic.}
\label{fig:volumes}
\end{figure}

The fixed-rate case of Theorem~2 bounds the distortion of a partition $\pi$ by
$C_g\,2^{-2b}\sum_\ell v_\ell(\pi)^{1/g} \ge D^*(b)$, with equality iff all group
volumes coincide. In theory, the gap is a property of the volume distribution
(cf.\ Sec.~\ref{app:hires}). Fig.~\ref{fig:volumes}
shows that distribution for the three partitions compared in Sec.~\ref{sec:experiments}. Volumes
are normalized by:
\begin{equation}
v_{\mathrm{equal}} = \left(\prod_{i=1}^{d}\sigma_i^2\right)^{g/d},
\end{equation}
the group volume under an exactly equalizing partition; hence, a partition
that equalizes exactly places all mass on $1$.

The three panels differ by orders of magnitude in spread. Grouping
consecutive entries in eigenvalue order (a) produces volumes spanning
roughly 20 decades, since the leading group collects the largest
$\sigma_i^2$ and the trailing group the smallest. A random partition (b) is
narrower, as each group draws entries from across the spectrum, but the
spread remains close to 3 decades. The volume-aware partition (c)
concentrates almost all groups with a small tail attributable to the greedy dealing
rule.

The ordering of the three spreads matches the ordering of the corresponding accuracies reported in Sec.~\ref{sec:experiments}, which is consistent with Theorem~2. We note that the
theorem is derived under the high-resolution model with independent Gaussian
coefficients, so the agreement is
evidence that the predicted ordering survives outside the regime of the derivation rather than a verification of the bound (cf.~Sec.~\ref{app:hires}).

\section{Analytical tools}
\subsection{Companding for non-difference distortions}
\label{sec:tc-companding}
This subsection states the high-resolution theory for distortion
measures $\rho(\cdot, \cdot)$ other than the MSE, following the companding framework of
\citet{linder1999high}, which covers locally quadratic measures whose
weight may be input-dependent. The criterion
in~\eqref{eq:qmse}, that is,
\begin{equation}
\label{eq:supp_kq}
\bigl\| \mathbf{Q}\mathbf{K}^\top -
      \mathbf{Q}\widehat{\mathbf{K}}^{\top} \bigr\|_F^2
    = \sum_{j=1}^{M} \, (\mathbf{k}_j - \widehat{\mathbf{k}}_j)\,
      \mathbf{M}_q\, (\mathbf{k}_j - \widehat{\mathbf{k}}_j)^\top,
\end{equation}
is a special case of this theory, as we will show next.

\paragraph{Locally quadratic distortions.}
Suppose $\rho(\cdot, \cdot)$ is three times continuously differentiable in its second
argument, vanishes exactly when its second argument equals the first argument, and has
positive definite second derivative at that point. Define the \emph{sensitivity
matrix}
\begin{equation}
  \bigl[\mathbf{M}(\mathbf{x})\bigr]_{ij}
  = \frac{1}{2}\,
    \frac{\partial^2 \rho(\mathbf{x},\hat{\mathbf{x}})}
         {\partial \hat{x}_i\,\partial \hat{x}_j}
    \Big|_{\hat{\mathbf{x}}=\mathbf{x}} .
  \label{eq:tc-sensitivity}
\end{equation}
Because the gradient vanishes at $\hat{\mathbf{x}} = \mathbf{x}$, a
second-order expansion gives
\begin{equation}
  \rho(\mathbf{x},\hat{\mathbf{x}})
  = (\mathbf{x}-\hat{\mathbf{x}})\,\mathbf{M}(\mathbf{x})\,
    (\mathbf{x}-\hat{\mathbf{x}})^\top
  + O\bigl(\|\mathbf{x}-\hat{\mathbf{x}}\|^3\bigr),
  \label{eq:tc-locally-quadratic}
\end{equation}
so at high resolution the distortion is a quadratic form with a possibly
input-dependent weight. An input-weighted quadratic measure
\begin{equation}
\rho(\mathbf{x},\hat{\mathbf{x}}) =
\|(\mathbf{x}-\hat{\mathbf{x}})\mathbf{W}(\mathbf{x})\|_2^2,
\end{equation}
is of this type,
with $\mathbf{M}(\mathbf{x}) =
\mathbf{W}(\mathbf{x})\mathbf{W}(\mathbf{x})^\top$. In our criterion~\eqref{eq:qmse}, the weight does not depend on the input,
$\mathbf{M}(\mathbf{x}) \equiv \mathbf{M}_q$, and the expansion
\eqref{eq:tc-locally-quadratic} is exact.

\paragraph{Companding quantizers.}
A compander applies an invertible map $h$ to the source, quantizes with a
lattice quantizer $\mathcal{Q}_{\mathcal{L}}$, and inverts:
\begin{equation}
  \mathbf{x} \;\longrightarrow\; h(\mathbf{x})
  \;\longrightarrow\; \mathcal{Q}_{\mathcal{L}}\bigl(h(\mathbf{x})\bigr)
  \;\longrightarrow\; h^{-1}
  \;\longrightarrow\; \hat{\mathbf{x}} .
  \label{eq:tc-compander}
\end{equation}
Write $\mathbf{J}(\mathbf{x})$ for the Jacobian of $h$, so that
$\mathrm{d}h = \mathrm{d}\mathbf{x}\,\mathbf{J}(\mathbf{x})$. The compander is a structured vector quantizer: all adaptation to the
source and to the distortion is carried by $h$, while the cells in the
companded domain are congruent, i.e., translates of the basic lattice
cell, identical in shape and volume. Mapped back through $h^{-1}$,
these identical cells become the non-uniform cells of the effective
quantizer in the source domain. A linear $h(\mathbf{x}) = \mathbf{x}\mathbf{R}$, with
$\mathbf{J} \equiv \mathbf{R}$ a typical transform coder. In this case, $\mathrm{d}h = \mathrm{d}\mathbf x \, \mathbf R$.

\paragraph{Asymptotic rate.}
Let $Z_g$ denote the normalized second moment of the basic cell of the
lattice. \citet{linder1999high} show that, for a source with finite differential entropy $H(\mathbf{x})$, the rate of the
compander, defined as the entropy $\mathcal{H}(D)$ of the quantizer
indices when the lattice is scaled to operate at distortion $D$,
satisfies
\begin{equation}
  \lim_{D\to 0}\Bigl( \mathcal{H}(D) + {g}/{2}\log_2 D \Bigr)
  = H(\mathbf{x})
  + \mathbb{E}\bigl[\log_2 |\det \mathbf{J}(\mathbf{x})|\bigr]
  + {g}/{2}\log_2\Bigl( g\,Z_g\,
    \mathbb{E}\bigl[\tr \boldsymbol{\Gamma}(\mathbf{x})\bigr]\Bigr),
  \label{eq:tc-lzz-thm1}
\end{equation}
where
\begin{equation}
  \boldsymbol{\Gamma}(\mathbf{x})
  = \mathbf{J}(\mathbf{x})^{-1}\,\mathbf{M}(\mathbf{x})\,
    \mathbf{J}(\mathbf{x})^{-\top},
  \label{eq:tc-gamma}
\end{equation}
measures the sensitivity as seen in the companded domain. 

\paragraph{The optimal compressor.}
Minimizing \eqref{eq:tc-lzz-thm1} over $h$ uses two inequalities. Since
$\boldsymbol{\Gamma}(\mathbf{x}) \succ 0$, the arithmetic--geometric mean
inequality gives $\tr\boldsymbol{\Gamma} \ge
g\,(\det\boldsymbol{\Gamma})^{1/g}$, with equality iff the eigenvalues of
$\boldsymbol{\Gamma}$ are all equal; Jensen's inequality then moves the
expectation inside the logarithm. Together
\cite[Thm.~2]{linder1999high}:
\begin{equation}
  \lim_{D\to 0}\Bigl( \mathcal{H}(D) + {g}/{2}\log_2 D \Bigr)
  \ge H(\mathbf{x}) + {g}/{2}\log_2\bigl(g\,Z_g\bigr)
  + {1}/{2}\,\mathbb{E}\bigl[\log_2 \det \mathbf{M}(\mathbf{x})\bigr],
  \label{eq:tc-lzz-thm2}
\end{equation}
with equality if and only if
\begin{equation}
  \mathbf{J}(\mathbf{x})\,\mathbf{J}(\mathbf{x})^\top
  = c\,\mathbf{M}(\mathbf{x})
  \quad\text{a.e., for some } c > 0 .
  \label{eq:tc-optimality}
\end{equation}
Three consequences:
\begin{enumerate}[label=(C\arabic*)]
\item \emph{The optimal compressor does not depend on the source
  distribution}, only on the distortion measure. This is the analogue, for
  locally quadratic distortions, of the fact that the optimal entropy-coded
  quantizer for MSE is uniform regardless of the source.
  \label{cons:source-free}
\item \emph{Condition \eqref{eq:tc-optimality} fixes $h$ only up to an
  orthogonal factor on the right}, since replacing $\mathbf{J}$ by
  $\mathbf{J}\mathbf{O}$ with $\mathbf{O}$ orthogonal leaves
  $\mathbf{J}\mathbf{J}^\top$ unchanged. The condition therefore constrains
  the stretch but not the rotation.
  \label{cons:orth-factor}
\item \emph{An orthogonal $h$ satisfies \eqref{eq:tc-optimality} only when
  $\mathbf{M} \propto \mathbf{I}$}, since $\mathbf{J}\mathbf{J}^\top =
  \mathbf{I}$ then forces $\mathbf{M} = c^{-1}\mathbf{I}$.
  \label{cons:orth-subopt}
\end{enumerate}
\paragraph{For the key transform.}
Take $\mathbf{M}(\mathbf{x}) \equiv \mathbf{M}_q$, constant. The compressor
is then linear, $h(\mathbf{x}) = \mathbf{x}\mathbf{R}$, and
\eqref{eq:tc-optimality} reads $\mathbf{R}\mathbf{R}^\top = c\,\mathbf{M}_q$. By
\ref{cons:orth-subopt}, orthogonal transforms
are suboptimal for the attention-aware criterion unless the queries are
isotropic; by \ref{cons:orth-factor}, Theorem~1 must resolve the
remaining orthogonal factor, which we do by minimizing the rank-$p$
reconstruction error.

\subsection{Softmax Perturbation Bound}
\label{sec:softmax-bound}

We prove that the logit error controls the attention weights (beginning of Sec.~\ref{sec:transform}).

\begin{propositionS}
\label{prop:lipschitz}
Let $\mathbf{Z}, \widehat{\mathbf{Z}} \in \mathbb{R}^{T \times T}$, $\mathbf{S} = \softmax_{\mathrm{row}}(\mathbf{Z}/\sqrt{d})$, and $\widehat{\mathbf{S}} = \softmax_{\mathrm{row}}(\widehat{\mathbf{Z}}/\sqrt{d})$. Then, $\|\mathbf{S} - \widehat{\mathbf{S}}\|_F
  \le {1}/({2\sqrt{d}})\, \|\mathbf{Z} - \widehat{\mathbf{Z}}\|_F$.
\end{propositionS}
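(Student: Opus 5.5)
The proof reduces to a row-by-row statement and then to a Jacobian bound. Because both the softmax and the Frobenius norm act row-wise, $\|\mathbf{S}-\widehat{\mathbf{S}}\|_F^2=\sum_i \|\mathbf{s}_i-\widehat{\mathbf{s}}_i\|_2^2$ and $\|\mathbf{Z}-\widehat{\mathbf{Z}}\|_F^2=\sum_i\|\mathbf{z}_i-\widehat{\mathbf{z}}_i\|_2^2$. It therefore suffices to show, for a single row, that
\[
\|\sigma(\mathbf{x})-\sigma(\mathbf{y})\|_2\le \tfrac12\|\mathbf{x}-\mathbf{y}\|_2,
\]
where $\sigma=\softmax$ on $\mathbb{R}^{T}$. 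We then apply this with $\mathbf{x}=\mathbf{z}_i/\sqrt d$ and $\mathbf{y}=\widehat{\mathbf{z}}_i/\sqrt d$, which contributes the factor $1/\sqrt d$, square, and sum over $i$.

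For the single-row bound we use the mean value theorem in integral form. Let $\mathbf{u}(t)=\mathbf{y}+t(\mathbf{x}-\mathbf{y})$. Then
\[
\sigma(\mathbf{x})-\sigma(\mathbf{y})=(\mathbf{x}-\mathbf{y})\int_0^1 \mathbf{J}(\mathbf{u}(t))\,dt,
\]
where $\mathbf{J}(\mathbf{u})=\operatorname{diag}(\mathbf{p})-\mathbf{p}^\top\mathbf{p}$ and $\mathbf{p}=\sigma(\mathbf{u})$. This matrix is symmetric. The result follows once we show $\|\mathbf{J}(\mathbf{u})\|_2\le 1/2$ for every $\mathbf{u}$, because the spectral norm of the integral is at most the integral of the spectral norms.

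The main obstacle is this spectral bound on $\mathbf{J}$. Since $\mathbf{J}$ is the covariance matrix of the one-hot random vector $\mathbf{e}_I$ with $I\sim\mathbf{p}$, it is positive semidefinite. Its largest eigenvalue equals $\max_{\|\mathbf{a}\|=1}\operatorname{Var}(a_I)$.

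A crude route via Gershgorin gives only $\lambda_{\max}\le \max_i 2p_i(1-p_i)\le 1/2$. This is already enough: row $i$ has diagonal entry $p_i(1-p_i)$ and off-diagonal absolute row sum $p_i\sum_{j\ne i}p_j=p_i(1-p_i)$, so every Gershgorin disc lies in $[0,2p_i(1-p_i)]\subseteq[0,1/2]$.

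As a cleaner cross-check, $\operatorname{Var}(a_I)\le \tfrac14(\max_i a_i-\min_i a_i)^2$ by Popoviciu's inequality, and $(\max a-\min a)^2\le 2\|\mathbf{a}\|_2^2$, which again gives $\lambda_{\max}\le 1/2$.

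The Gershgorin argument is the one to write out, since it is self-contained. Combining it with the integral form and the row-wise summation yields $\|\mathbf{S}-\widehat{\mathbf{S}}\|_F\le \frac{1}{2\sqrt d}\|\mathbf{Z}-\widehat{\mathbf{Z}}\|_F$, which is the claim.
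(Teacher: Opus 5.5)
Your proposal is correct and matches the paper's proof step for step: you reduce to single rows, bound the softmax difference along the segment $\mathbf{y}+t(\mathbf{x}-\mathbf{y})$ by the mean-value (integral) form, use $\|\operatorname{diag}(\mathbf{p})-\mathbf{p}^\top\mathbf{p}\|_2\le 1/2$, and absorb the $1/\sqrt{d}$ scaling before summing squares over rows. The one difference is that you prove the Jacobian bound yourself, and your Gershgorin argument (all discs lie in $[0,2p_i(1-p_i)]\subseteq[0,1/2]$, and the matrix is symmetric PSD) is valid, whereas the paper cites this bound as a known property of softmax.
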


\begin{proof}
Consider one row. The Jacobian of $\softmax$ at $\mathbf{z}$ is $\mathbf{J}(\mathbf{z}) = \operatorname{diag}(\mathbf{s}) - \mathbf{s}^\top \mathbf{s}$, $\mathbf{s} = \softmax(\mathbf{z})$, symmetric positive semidefinite with $\|\mathbf{J}(\mathbf{z})\|_2 \le 1/2$ \cite{gao2017properties}. Along the segment $\mathbf{z}(t) = (1-t)\hat{\mathbf{z}} + t\mathbf{z}$,
\begin{equation}
  \|\softmax(\mathbf{z}) - \softmax(\hat{\mathbf{z}})\|_2
   \le  \sup_{t\in[0,1]} \|\mathbf{J}(\mathbf{z}(t))\|_2\, \|\mathbf{z} - \hat{\mathbf{z}}\|_2
  \le 1/2 \, \|\mathbf{z} - \hat{\mathbf{z}}\|_2 .
\end{equation}
Applying this to each row of $\mathbf{Z}/\sqrt d$, $\widehat{\mathbf{Z}}/\sqrt d$ and summing squares over rows gives the claim.
\end{proof}

With $\mathbf{Z} = \mathbf{Q}\mathbf{K}^\top$ and $\widehat{\mathbf{Z}} = \mathbf{Q}\widehat{\mathbf{K}}^\top$, the attention-weight error is bounded by $1/(2\sqrt d) \, \|\mathbf Q\mathbf K^\top - \mathbf Q\widehat{\mathbf K}^\top\|_F$.

\subsection{Decoupling of the Output Error}
\label{sec:decoupling}

We justify the decomposition of Sec.~\ref{sec:transform}. Let $\widehat{\mathbf{S}} = \softmax_{\mathrm{row}}(\mathbf{Q}\widehat{\mathbf{K}}^\top/\sqrt d)$ be the scores computed from reconstructed keys and $\widehat{\mathbf{O}} = \widehat{\mathbf{S}}\widehat{\mathbf{V}}$ the output under joint key and value quantization.

\begin{propositionS}
\label{prop:decoupling}
\begin{equation}
  \|\mathbf{O} - \widehat{\mathbf{O}}\|_F
  \le {\|\mathbf{V}\|_2}/({2\sqrt d})\,
      \bigl\|\mathbf{Q}\mathbf{K}^\top - \mathbf{Q}\widehat{\mathbf{K}}^\top\bigr\|_F
  + \bigl\|\mathbf{S}(\mathbf{V} - \widehat{\mathbf{V}})\bigr\|_F
  + \bigl\|(\mathbf{S} - \widehat{\mathbf{S}})(\mathbf{V} - \widehat{\mathbf{V}})\bigr\|_F .
\end{equation}
\end{propositionS}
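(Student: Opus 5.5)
The bound follows from an exact algebraic decomposition of $\mathbf{O}-\widehat{\mathbf{O}}$, then the triangle inequality and Proposition~\ref{prop:lipschitz}. Write $\mathbf{E}_V = \mathbf{V}-\widehat{\mathbf{V}}$ and $\boldsymbol{\Delta} = \mathbf{S}-\widehat{\mathbf{S}}$. Since $\widehat{\mathbf{S}} = \mathbf{S}-\boldsymbol{\Delta}$ and $\widehat{\mathbf{V}} = \mathbf{V}-\mathbf{E}_V$,
\begin{equation*}
\widehat{\mathbf{S}}\widehat{\mathbf{V}} = (\mathbf{S}-\boldsymbol{\Delta})(\mathbf{V}-\mathbf{E}_V) = \mathbf{S}\mathbf{V} - \boldsymbol{\Delta}\mathbf{V} - \mathbf{S}\mathbf{E}_V + \boldsymbol{\Delta}\mathbf{E}_V .
\end{equation*}
This gives the exact identity $\mathbf{O}-\widehat{\mathbf{O}} = \boldsymbol{\Delta}\mathbf{V} + \mathbf{S}\mathbf{E}_V - \boldsymbol{\Delta}\mathbf{E}_V$, which is the first step and needs no approximation.

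The second step is to apply the triangle inequality for the Frobenius norm to the three terms. This yields $\|\boldsymbol{\Delta}\mathbf{V}\|_F + \|\mathbf{S}\mathbf{E}_V\|_F + \|\boldsymbol{\Delta}\mathbf{E}_V\|_F$. The second and third terms are already exactly the last two terms of the claim, since a sign does not change the norm.

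The third step handles the first term. I use the mixed-norm inequality $\|\mathbf{A}\mathbf{B}\|_F \le \|\mathbf{A}\|_F\|\mathbf{B}\|_2$, proved row by row: each row of $\mathbf{A}\mathbf{B}$ is $\mathbf{a}_i\mathbf{B}$, with $\|\mathbf{a}_i\mathbf{B}\|_2 \le \|\mathbf{a}_i\|_2\|\mathbf{B}\|_2$, and summing the squares gives the bound. This gives $\|\boldsymbol{\Delta}\mathbf{V}\|_F \le \|\mathbf{S}-\widehat{\mathbf{S}}\|_F\|\mathbf{V}\|_2$. I then invoke Proposition~\ref{prop:lipschitz} with $\mathbf{Z}=\mathbf{Q}\mathbf{K}^\top$ and $\widehat{\mathbf{Z}}=\mathbf{Q}\widehat{\mathbf{K}}^\top$, which gives $\|\mathbf{S}-\widehat{\mathbf{S}}\|_F \le \frac{1}{2\sqrt d}\|\mathbf{Q}\mathbf{K}^\top-\mathbf{Q}\widehat{\mathbf{K}}^\top\|_F$. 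Substituting yields the first term of the claim.

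There is no real obstacle. The only point needing care is the side on which the spectral norm acts: with row-vector convention, $\mathbf{V}$ multiplies on the right, so the row-wise argument is the correct one.

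Finally, I would remark that the cross term is second order in the quantization errors. Bounding it by $\|\boldsymbol{\Delta}\|_F\|\mathbf{E}_V\|_2$, it is a product of two vanishing quantities, which justifies dropping it in the high-resolution regime and recovers \eqref{eq:separation}.
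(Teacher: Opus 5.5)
Your proposal is correct and takes the same route as the paper. The paper likewise uses the exact three-term identity (obtained by adding and subtracting $\mathbf{S}\widehat{\mathbf{V}}$), the triangle inequality, the bound $\|(\mathbf{S}-\widehat{\mathbf{S}})\mathbf{V}\|_F \le \|\mathbf{S}-\widehat{\mathbf{S}}\|_F\|\mathbf{V}\|_2$, and Proposition~\ref{prop:lipschitz}. Your row-wise justification of the mixed-norm inequality fills in a detail the paper leaves implicit.
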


\begin{proof}
Adding and subtracting $\mathbf{S}\widehat{\mathbf{V}}$ gives the exact identity
\begin{equation}
  \mathbf{O} - \widehat{\mathbf{O}}
  = (\mathbf{S} - \widehat{\mathbf{S}})\mathbf{V}
  + \mathbf{S}(\mathbf{V} - \widehat{\mathbf{V}})
  - (\mathbf{S} - \widehat{\mathbf{S}})(\mathbf{V} - \widehat{\mathbf{V}}) .
\end{equation}
The triangle inequality, $\|(\mathbf{S}-\widehat{\mathbf{S}})\mathbf{V}\|_F \le \|\mathbf{S}-\widehat{\mathbf{S}}\|_F\|\mathbf{V}\|_2$, and Proposition~\ref{prop:lipschitz} give the claim.
\end{proof}

The first term is controlled by the key objective \eqref{eq:qmse}; the second is the value objective of Corollary~1 (Sec.~\ref{sec:proof-cor1}); the third is a product of the two quantization errors, hence second order. We do not claim joint optimality of the two-step design for the combined objective. Consistently with this decomposition, the value objective below treats the scores as computed from full-precision keys; the discrepancy from using $\widehat{\mathbf{S}}$ instead is absorbed by the second-order term.

\section{Optimal transforms}
\subsection{Lemma S1: Softmax Offset Invariance}
\label{sec:lemma-offset}
We prove in this section that adding a constant to all keys leaves the attention scores unaltered.

\begin{lemmaS}
For any $\mathbf{c} \in \mathbb{R}^{1\times d}$ and any query $\mathbf{q}$,
$\softmax_j\bigl(\mathbf{q}(\mathbf{k}_j-\mathbf{c})^\top\bigr)
= \softmax_j\bigl(\mathbf{q}\mathbf{k}_j^\top\bigr)$.
\end{lemmaS}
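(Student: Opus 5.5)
The plan is to expand the shifted logit and use the shift invariance of softmax. Fix a query $\mathbf{q}$ and the vector $\mathbf{c}$. By bilinearity of the inner product,
\begin{equation}
\mathbf{q}(\mathbf{k}_j-\mathbf{c})^\top = \mathbf{q}\mathbf{k}_j^\top - \mathbf{q}\mathbf{c}^\top .
\end{equation}
The term $a \doteq \mathbf{q}\mathbf{c}^\top$ is a scalar that depends on $\mathbf{q}$ and $\mathbf{c}$ but not on the key index $j$. So the shifted logit vector is the original logit vector minus the constant $a$ in every entry.

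Next, I would show that softmax is invariant to adding the same scalar to all entries. Write $z_j = \mathbf{q}\mathbf{k}_j^\top$. Then
\begin{equation}
\frac{e^{z_j - a}}{\sum_{j'} e^{z_{j'}-a}} = \frac{e^{-a}e^{z_j}}{e^{-a}\sum_{j'} e^{z_{j'}}} = \frac{e^{z_j}}{\sum_{j'} e^{z_{j'}}},
\end{equation}
where the common factor $e^{-a}>0$ cancels. Combining the two displays gives the claim entrywise in $j$. The same cancellation goes through unchanged with the $1/\sqrt d$ scaling of \eqref{eq:scores}, since that only rescales $a$.

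To support how the lemma is used in Sec.~\ref{sec:key_trans} and the read path of Sec.~\ref{sec:decode-kernel}, I would add two remarks.
\begin{enumerate}[label=(\roman*)]
\item Applying the argument row by row to $\mathbf{Q}$ shows that $\softmax_{\mathrm{row}}$ of $\mathbf{Q}(\mathbf{K}-\mathbf{1}\mathbf{c})^\top$ equals that of $\mathbf{Q}\mathbf{K}^\top$. Each row has its own constant $\mathbf{q}_t\mathbf{c}^\top$, and that constant is shared across all columns of the row.
\item Taking $\mathbf{c}=\bar{\mathbf{k}}$ shows that reconstructed keys $\widehat{\mathbf{k}}_j=\mathbf{r}_j\mathbf{R}_K^{-1}+\bar{\mathbf{k}}$ produce the same scores with or without re-adding $\bar{\mathbf{k}}$. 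Therefore centering the keys, and then quantizing the centered keys, costs nothing in attention.
\end{enumerate}

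There is no real obstacle here. The only point needing care is that the shift must be the same for every key attended by a given query: the constant $a$ depends on $\mathbf{q}$ but not on $j$. This is why a single global mean $\bar{\mathbf{k}}$ is permissible, while per-token offsets would not be.
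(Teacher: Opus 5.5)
Your proof is correct and is the same argument the paper gives: the offset $\mathbf{q}\mathbf{c}^\top$ does not depend on $j$, so the common factor $e^{-\mathbf{q}\mathbf{c}^\top}$ cancels between the numerator and denominator of the softmax. Your added remarks (the row-wise version, the $1/\sqrt{d}$ scaling, and the requirement that the shift be $j$-independent) match how the paper uses the lemma, including its later observation that a learned sink logit that does not carry the shift breaks this invariance.
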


\begin{proof}
Replacing each key by $\mathbf{k}_j - \mathbf{c}$ shifts every logit of the query $\mathbf{q}$ by the same amount $\mathbf{q}\mathbf{c}^\top$, which cancels:
\begin{equation}
  \frac{\exp\bigl(\mathbf{q}\,(\mathbf{k}_j-\mathbf{c})^\top\bigr)}
       {\sum_{j'}\exp\bigl(\mathbf{q}\,(\mathbf{k}_{j'}-\mathbf{c})^\top\bigr)} = \frac{\exp\bigl(\mathbf{q}\,\mathbf{k}_j^\top\bigr)}
         {\sum_{j'}\exp\bigl(\mathbf{q}\,\mathbf{k}_{j'}^\top\bigr)} .
\end{equation}
\end{proof}

By Lemma~S1, replacing every key $\mathbf{k}_j$ by
$\mathbf{k}_j - \mathbf{c}$ leaves the attention unchanged, so we may
quantize the shifted keys instead and minimize
\begin{equation}
  \min_{\mathbf{c},\,\mathbf{D},\,\{\mathbf{r}_j\}}
  \sum_{j=1}^{M}
  \bigl\| \mathbf{k}_j - \mathbf{c} - \mathbf{r}_j \mathbf{D}
  \bigr\|_{\mathbf{M}_q}^{2},
  \label{eq:offset-objective}
\end{equation}
which coincides with the original objective at $\mathbf{c} = \mathbf{0}$;
optimizing over $\mathbf{c}$ therefore never increases the error, and we
choose $\mathbf{c}$ to make the keys cheaper to compress.

\subsection{Theorem 1: Key Transform}
\label{sec:proof-thm1}

We find the transform that minimizes the low-rank reconstruction
$\mathbf M_q$-MSE given training examples. We write
$\mathbf{R}^{\dagger} = (\mathbf{R}^\top\mathbf{M}_q^{-1}\mathbf{R})^{-1}
\mathbf{R}^\top\mathbf{M}_q^{-1}$ for the $\mathbf{M}_q$-weighted pseudoinverse,
so that $\mathbf{R}\mathbf{R}^{\dagger}$ is the projector onto the column space of
$\mathbf{R}$ that is orthogonal in the $\mathbf{M}_q$ inner product.

\begin{thmIrestated}
Let $\widetilde{\mathbf{k}}_j = \mathbf{k}_j - \bar{\mathbf{k}}$,
$\widetilde{\mathbf{S}}_k = \sum_j \widetilde{\mathbf{k}}_j^\top\widetilde{\mathbf{k}}_j$,
and let
\[
  \mathbf{R}^\star_{K,p}
  = \argmin_{\mathbf{R}\,\in\,\mathbb{R}^{d\times p}}\,
  \sum_{j=1}^{M}\,
  \bigl\|\,\widetilde{\mathbf{k}}_j
  - \widetilde{\mathbf{k}}_j\,\mathbf{R}\mathbf{R}^{\dagger}\bigr\|_{\mathbf{M}_q}^{2}.
\]
Then $\mathbf{R}^\star_{K,p} = \mathbf{M}_q^{1/2}\,\mathbf{E}_{1:p}$, where
$\mathbf{E}\boldsymbol{\Lambda}\mathbf{E}^\top$ is the eigendecomposition of
$\mathbf{M}_q^{1/2}\,\widetilde{\mathbf{S}}_k\,\mathbf{M}_q^{1/2}$.
\end{thmIrestated}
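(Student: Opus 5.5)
The plan is to whiten the $\mathbf{M}_q$-weighted problem so that it becomes an ordinary (Euclidean) PCA problem. Then the Eckart--Young theorem, equivalently the KLT characterization in \eqref{eq:klt_optimal}, gives the answer directly.

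\textbf{Step 1: change variables.} Set $\mathbf{y}_j = \widetilde{\mathbf{k}}_j\mathbf{M}_q^{1/2}$. This uses the assumption (Appendix~\ref{app:calib-stats}) that $\mathbf{M}_q$ is positive definite, so $\mathbf{M}_q^{\pm 1/2}$ exist and are symmetric. For any row vector $\mathbf{a}$ we have $\|\mathbf{a}\|_{\mathbf{M}_q}^2 = \|\mathbf{a}\mathbf{M}_q^{1/2}\|_2^2$. The objective therefore becomes
\[
\sum_j \bigl\|\mathbf{y}_j - \mathbf{y}_j\,\mathbf{M}_q^{-1/2}\mathbf{R}\mathbf{R}^{\dagger}\mathbf{M}_q^{1/2}\bigr\|_2^2 .
\]

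\textbf{Step 2: identify the whitened projector.} Write $\mathbf{B} = \mathbf{M}_q^{-1/2}\mathbf{R}$ and assume $\mathbf{R}$ has full column rank $p$, which is needed for $\mathbf{R}^\dagger$ to exist. Substituting the definition of $\mathbf{R}^\dagger$ gives
\[
\mathbf{M}_q^{-1/2}\mathbf{R}\,(\mathbf{R}^\top\mathbf{M}_q^{-1}\mathbf{R})^{-1}\mathbf{R}^\top\mathbf{M}_q^{-1}\mathbf{M}_q^{1/2} = \mathbf{B}(\mathbf{B}^\top\mathbf{B})^{-1}\mathbf{B}^\top = \mathbf{P}_{\mathbf{B}},
\]
the Euclidean orthogonal projector onto the column space of $\mathbf{B}$. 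The map $\mathbf{R}\mapsto\mathbf{B}$ is a bijection on full-rank $d\times p$ matrices. The problem is thus exactly the rank-$p$ PCA problem $\min_{\mathbf{B}}\sum_j\|\mathbf{y}_j - \mathbf{y}_j\mathbf{P}_{\mathbf{B}}\|_2^2$.

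\textbf{Step 3: solve the PCA problem.} Because $\mathbf{P}_{\mathbf{B}}$ is symmetric idempotent, the cost equals $\tr(\mathbf{C}) - \tr(\mathbf{P}_{\mathbf{B}}\mathbf{C})$, where $\mathbf{C} = \sum_j \mathbf{y}_j^\top\mathbf{y}_j = \mathbf{M}_q^{1/2}\widetilde{\mathbf{S}}_k\mathbf{M}_q^{1/2}$. The next step is to maximize $\tr(\mathbf{U}^\top\mathbf{C}\mathbf{U})$ over $d\times p$ matrices $\mathbf{U}$ with orthonormal columns spanning $\operatorname{col}(\mathbf{B})$. By Ky Fan's maximum principle, or equivalently by the KLT optimality already cited for \eqref{eq:klt_optimal}, the maximum is $\sum_{i\le p}\lambda_i$. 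It is attained when $\operatorname{col}(\mathbf{B}) = \operatorname{col}(\mathbf{E}_{1:p})$, in particular at $\mathbf{B} = \mathbf{E}_{1:p}$. Mapping back gives $\mathbf{R}^\star_{K,p} = \mathbf{M}_q^{1/2}\mathbf{E}_{1:p}$.

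\textbf{Main obstacle.} The delicate point is interpreting the argmin. The objective depends on $\mathbf{R}$ only through $\operatorname{col}(\mathbf{R})$, since any $\mathbf{R}\mathbf{G}$ with $\mathbf{G}$ invertible gives the same value. The argmin is therefore a set, and the statement should be read as saying that $\mathbf{M}_q^{1/2}\mathbf{E}_{1:p}$ is a minimizer. It is the canonical representative: its whitened columns are orthonormal, and they are also $\mathbf{M}_q^{-1}$-orthonormal in the original coordinates. It is unique up to this equivalence when $\lambda_p > \lambda_{p+1}$. I would state this explicitly. I would also note that the resulting full transform $\mathbf{R}_K = \mathbf{M}_q^{1/2}\mathbf{E}$ satisfies $\mathbf{R}_K\mathbf{R}_K^\top = \mathbf{M}_q$, which is consistent with the compander condition \eqref{eq:tc-optimality}.
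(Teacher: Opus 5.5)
Your proof is correct and proves exactly the stated theorem. Its core is the same as the paper's final step: whiten by $\mathbf{M}_q^{1/2}$, reduce to Euclidean PCA on $\mathbf{M}_q^{1/2}\widetilde{\mathbf{S}}_k\mathbf{M}_q^{1/2}$, and apply Ky Fan. The surrounding route, however, differs.

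\textbf{The paper's route.} It does not start from the projected objective. It proves a more general claim: the reconstruction $\widehat{\mathbf{k}}_j = \mathbf{r}_j\mathbf{D} + \mathbf{c}$ has free codes, a free synthesis matrix $\mathbf{D}$ and a free offset $\mathbf{c}$, and the loss is the logit error itself. It then shows three things in turn.
\begin{enumerate}
\item The optimal codes are an $\mathbf{M}_q$-weighted least-squares projection.
\item The offset $\mathbf{c}=\bar{\mathbf{k}}$ is optimal, after choosing centered codes.
\item Whitening on the synthesis side, $\mathbf{B} = \mathbf{D}\mathbf{M}_q^{1/2}$, gives $\mathbf{D}^\ast = \mathbf{E}_{1:p}^\top\mathbf{M}_q^{-1/2} = \mathbf{R}^\dagger$, from which it reads off the analysis operator $\mathbf{R}=\mathbf{M}_q^{1/2}\mathbf{E}_{1:p}$.
\end{enumerate}

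\textbf{Your route.} You whiten the analysis side instead, $\mathbf{B}=\mathbf{M}_q^{-1/2}\mathbf{R}$. You then verify directly that $\mathbf{M}_q^{-1/2}\mathbf{R}\mathbf{R}^\dagger\mathbf{M}_q^{1/2}$ is the Euclidean projector $\mathbf{P}_{\mathbf{B}}$.

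\textbf{What each buys.} Your argument is shorter and uses nothing beyond the statement itself. The paper's version additionally justifies the restrictions built into the statement: projecting in the $\mathbf{M}_q$ geometry and centering at $\bar{\mathbf{k}}$ lose nothing relative to arbitrary rank-$p$ affine reconstructions.

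\textbf{Non-uniqueness.} You point out that the objective depends on $\mathbf{R}$ only through $\operatorname{col}(\mathbf{R})$, so the theorem's equality should be read as \emph{is a minimizer}. This is correct, and the paper leaves it implicit when it simply takes $\mathbf{B}^\ast=\mathbf{E}_{1:p}^\top$.

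\textbf{One small precision.} Orthonormal whitened columns (equivalently, $\mathbf{M}_q^{-1}$-orthonormal columns of $\mathbf{R}$) do not by themselves single out $\mathbf{E}_{1:p}$; a $p\times p$ orthogonal factor remains free. What selects the eigenbasis is that $\mathbf{B}^\top\mathbf{C}\mathbf{B}$ is diagonal, i.e., the transform coefficients are decorrelated with variances given by the $\lambda_i$. That property is irrelevant to the objective of this theorem, but the grouping analysis of Theorem~2 relies on it.
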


We prove a slightly more general statement. Rather than fixing the reconstruction
to the $\mathbf{M}_q$-orthogonal projection $\widetilde{\mathbf{k}}_j
\mathbf{R}\mathbf{R}^{\dagger}$, we leave the codes, the synthesis matrix, and a
shared offset free, and minimize the logit error~\eqref{eq:qmse} directly. In particular, we parameterize a rank-$p$ linear reconstruction with codes $\mathbf{r}_j \in \mathbb{R}^{1\times p}$, a shared synthesis matrix $\mathbf{D} \in \mathbb{R}^{p\times d}$ of full row rank, and a shared offset $\mathbf{c} \in \mathbb{R}^{1\times d}$:
\begin{equation}
  \widehat{\mathbf{k}}_j = \mathbf{r}_j\mathbf{D} + \mathbf{c},
\end{equation}
and minimize
\begin{equation}
  L(\mathbf{D}, \mathbf{c}, \{\mathbf{r}_j\})
  = \sum_{i=1}^{N}\sum_{j=1}^{M}
    \bigl( \mathbf{q}_i\mathbf{k}_j^\top - \mathbf{q}_i\widehat{\mathbf{k}}_j^\top \bigr)^2 .
  \label{eq:supp-key-objective}
\end{equation}
We show our result in 5 steps.

\paragraph{Step 1: reorder to expose the query second moment.}
With $\boldsymbol{\delta}_j = \mathbf{k}_j - \mathbf{c} - \mathbf{r}_j\mathbf{D}$,
\begin{equation}
\sum_{i,j} \bigl(\mathbf{q}_i\boldsymbol{\delta}_j^\top\bigr)^2
    = \sum_{j=1}^{M} \boldsymbol{\delta}_j
      \Bigl( \sum_{i=1}^{N} \mathbf{q}_i^\top\mathbf{q}_i \Bigr)
      \boldsymbol{\delta}_j^\top
    = \sum_{j=1}^{M} \|\boldsymbol{\delta}_j\|_{\mathbf{M}_q}^2 ,
  \label{eq:supp-key-weighted}
\end{equation}
with $\mathbf{M}_q = \mathbf{Q}^\top\mathbf{Q}$, which is \eqref{eq:qmse}. Invertibility of $\mathbf{M}_q$ requires the calibration queries to span $\mathbb{R}^d$ (in particular $N \ge d$), which holds for typical calibration sets; in practice we enforce positive-definiteness as in Appendix~\ref{app:calib-stats}.

\paragraph{Step 2: optimal codes for fixed $\mathbf{D}, \mathbf{c}$.}
The loss decouples across $j$. Setting the gradient of $\|\boldsymbol{\delta}_j\|_{\mathbf{M}_q}^2$ in $\mathbf{r}_j$ to zero,
\begin{equation}
  \mathbf{r}_j\,\mathbf{D}\mathbf{M}_q\mathbf{D}^\top
  = (\mathbf{k}_j - \mathbf{c})\,\mathbf{M}_q\mathbf{D}^\top
  \;\Longrightarrow\;
  \mathbf{r}_j^* = (\mathbf{k}_j - \mathbf{c})\,\mathbf{M}_q\mathbf{D}^\top
                   \bigl(\mathbf{D}\mathbf{M}_q\mathbf{D}^\top\bigr)^{-1},
  \label{eq:supp-key-codes}
\end{equation}
a generalized least-squares projection onto the row space of $\mathbf{D}$, measured in the $\mathbf{M}_q$ inner product.

\paragraph{Step 3: optimal offset.}
For fixed $\mathbf{D}$ and codes, $\nabla_{\mathbf{c}} L = -2\sum_j \boldsymbol{\delta}_j \mathbf{M}_q = \mathbf{0}$. Since $\mathbf{M}_q \succ 0$, the condition reduces to $\sum_j \boldsymbol{\delta}_j = \mathbf{0}$, i.e., $\mathbf{c}^* = \bar{\mathbf{k}} - \bar{\mathbf{r}}\mathbf{D}$ with $\bar{\mathbf{r}} = 1/M\sum_j \mathbf{r}_j$. Any choice satisfying this equation is optimal; we pick the one that centers the codes, $\bar{\mathbf{r}} = \mathbf{0}$, so that no rate is spent encoding the code mean $\mathbf{c}^* = \bar{\mathbf{k}}$. We write $\tilde{\mathbf{k}}_j = \mathbf{k}_j - \bar{\mathbf{k}}$.

\paragraph{Step 4: optimal synthesis.}
Substituting \eqref{eq:supp-key-codes},
$\mathbf{r}_j^*\mathbf{D} = \tilde{\mathbf{k}}_j \mathbf{P}$ with
$\mathbf{P} = \mathbf{M}_q\mathbf{D}^\top(\mathbf{D}\mathbf{M}_q\mathbf{D}^\top)^{-1}\mathbf{D}$,
the oblique projector onto the row space of $\mathbf{D}$ that is orthogonal in the $\mathbf{M}_q$ inner product ($\mathbf{P}^2 = \mathbf{P}$, and $\mathbf{P}\mathbf{M}_q$ is symmetric). Using these two identities,
$(\mathbf{I}-\mathbf{P})\mathbf{M}_q(\mathbf{I}-\mathbf{P})^\top = \mathbf{M}_q - \mathbf{P}\mathbf{M}_q$, and with $\widetilde{\mathbf{S}}_k = \sum_j \tilde{\mathbf{k}}_j^\top\tilde{\mathbf{k}}_j$,
\begin{equation}
  L^*(\mathbf{D})
  = \tr\bigl[\mathbf{M}_q\widetilde{\mathbf{S}}_k\bigr]
  - \tr\bigl[\mathbf{P}\mathbf{M}_q\widetilde{\mathbf{S}}_k\bigr].
\end{equation}
The first term is constant, so we maximize
$T(\mathbf{D}) = \tr\bigl[\mathbf{M}_q\mathbf{D}^\top(\mathbf{D}\mathbf{M}_q\mathbf{D}^\top)^{-1}\mathbf{D}\,\mathbf{M}_q\widetilde{\mathbf{S}}_k\bigr]$.

\paragraph{Step 5: change of variables and Ky Fan.}
Let $\mathbf{B} = \mathbf{D}\mathbf{M}_q^{1/2} \in \mathbb{R}^{p\times d}$, so $\mathbf{D}\mathbf{M}_q\mathbf{D}^\top = \mathbf{B}\mathbf{B}^\top$ and $\mathbf{M}_q\mathbf{D}^\top = \mathbf{M}_q^{1/2}\mathbf{B}^\top$. Then  $T = \tr\bigl[\boldsymbol{\Pi}_{\mathbf{B}}\,\mathbf{A}\bigr],
  \quad
  \mathbf{A} = \mathbf{M}_q^{1/2}\widetilde{\mathbf{S}}_k\mathbf{M}_q^{1/2},
  \quad
  \boldsymbol{\Pi}_{\mathbf{B}} = \mathbf{B}^\top(\mathbf{B}\mathbf{B}^\top)^{-1}\mathbf{B}$, where $\boldsymbol{\Pi}_{\mathbf{B}}$ is a rank-$p$ orthogonal projector and $\mathbf{A} \succeq 0$. Maximizing $\tr[\boldsymbol{\Pi}\mathbf{A}]$ over rank-$p$ orthogonal projectors is the classical PCA problem, solved by the projector onto the top-$p$ eigenvectors of $\mathbf{A}$ with maximum $\sum_{i\le p}\lambda_i(\mathbf{A})$ \cite{fan1949theorem}. With $\mathbf{A} = \mathbf{E}\boldsymbol{\Lambda}\mathbf{E}^\top$ and $\mathbf{E}_{1:p}$ the top-$p$ eigenvectors, take $\mathbf{B}^* = \mathbf{E}_{1:p}^\top$, i.e.,
\begin{equation}
  \mathbf{D}^* = \mathbf{E}_{1:p}^\top\mathbf{M}_q^{-1/2} = \mathbf{R}^\dagger .
\end{equation}
Then $\mathbf{D}^*\mathbf{M}_q\mathbf{D}^{*\top} = \mathbf{I}_p$, and \eqref{eq:supp-key-codes} simplifies to
\begin{equation}
  \mathbf{r}_j^* = \tilde{\mathbf{k}}_j\,\mathbf{M}_q\mathbf{D}^{*\top}
                 = \tilde{\mathbf{k}}_j\,\mathbf{M}_q^{1/2}\mathbf{E}_{1:p}
                 = \tilde{\mathbf{k}}_j\,\mathbf{R},
\end{equation}
with $\mathbf{R} = \mathbf{M}_q^{1/2}\mathbf{E}_{1:p} = \mathbf{R}^\star_{K,p}$. The
reconstruction is therefore $\widehat{\mathbf{k}}_j = \widetilde{\mathbf{k}}_j
\mathbf{R}\mathbf{R}^{\dagger} + \bar{\mathbf{k}}$, the projected form of the
theorem, and the optimal cost is $\sum_{i>p}\lambda_i(\mathbf{A})$. For $p = d$,
$\mathbf{E}$ is orthogonal and $\mathbf{R}\mathbf{R}^\dagger = \mathbf{I}$. \hfill$\blacksquare$

\begin{remark}
If $\mathbf{M}_q \propto \mathbf{I}$, the procedure reduces to ordinary mean-centered PCA on the keys.
\end{remark}

\begin{remark}
For $p = d$, the columns of $\mathbf{R}$ are not orthonormal in general; they satisfy $\mathbf{R}\mathbf{R}^\top = \mathbf{M}_q \not=\mathbf I$ whenever $\mathbf M_q \not \propto \mathbf I$. For $p < d$, $\mathbf{R}\mathbf{R}^\top = \mathbf{M}_q^{1/2}\mathbf{E}_{1:p}\mathbf{E}_{1:p}^\top\mathbf{M}_q^{1/2} \neq \mathbf{M}_q$.
\end{remark}

\subsection{Proposition 1: Generalized Parseval Relation}

Throughout, $\mathbf{R}_K = \mathbf{R}^\star_{K,d}$ denotes the full-rank ($p = d$)
transform of Theorem~1, which is invertible.

\begin{propIrestated}
Let $\mathbf r$ and $\hat{\mathbf{r}}$ be any two vectors, and $\mathbf k =
\mathbf{r}\mathbf{R}_K^{-1}$ and $\hat{\mathbf k} = \hat{\mathbf{r}}\mathbf{R}_K^{-1}$.
Then, $\bigl\| \mathbf{r} - \hat{\mathbf{r}} \bigr\|_2^2
  = \bigl\| \mathbf k - \hat{\mathbf{k}} \bigr\|_{\mathbf M_q}^2$.
\end{propIrestated}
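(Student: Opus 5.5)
The plan is a direct computation that rests on the single identity $\mathbf{R}_K\mathbf{R}_K^\top = \mathbf{M}_q$ for the full-rank transform of Theorem~1. We first establish this identity. With $p=d$, $\mathbf{R}_K = \mathbf{M}_q^{1/2}\mathbf{E}$, where $\mathbf{E}$ is the full orthogonal eigenvector matrix of the symmetric matrix $\mathbf{M}_q^{1/2}\widetilde{\mathbf{S}}_k\mathbf{M}_q^{1/2}$. Hence $\mathbf{E}\mathbf{E}^\top = \mathbf{I}$, and since $\mathbf{M}_q^{1/2}$ is symmetric,
\begin{equation*}
\mathbf{R}_K\mathbf{R}_K^\top = \mathbf{M}_q^{1/2}\mathbf{E}\mathbf{E}^\top\mathbf{M}_q^{1/2} = \mathbf{M}_q .
\end{equation*}
Invertibility of $\mathbf{R}_K$ follows from the standing assumption that $\mathbf{M}_q \succ 0$ (Appendix~\ref{app:calib-stats}). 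Explicitly, $\mathbf{R}_K^{-1} = \mathbf{E}^\top\mathbf{M}_q^{-1/2}$.

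Next we set $\boldsymbol{\delta} = \mathbf{r}-\hat{\mathbf{r}}$. By linearity, $\mathbf{k}-\hat{\mathbf{k}} = \boldsymbol{\delta}\mathbf{R}_K^{-1}$, so
\begin{equation*}
\|\mathbf{k}-\hat{\mathbf{k}}\|_{\mathbf{M}_q}^2 = \boldsymbol{\delta}\,\mathbf{R}_K^{-1}\mathbf{M}_q\mathbf{R}_K^{-\top}\boldsymbol{\delta}^\top .
\end{equation*}
Substituting $\mathbf{M}_q = \mathbf{R}_K\mathbf{R}_K^\top$ collapses the middle factor to $\mathbf{R}_K^{-1}\mathbf{R}_K\mathbf{R}_K^\top\mathbf{R}_K^{-\top} = \mathbf{I}$, which leaves $\boldsymbol{\delta}\boldsymbol{\delta}^\top = \|\mathbf{r}-\hat{\mathbf{r}}\|_2^2$. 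This is exactly the claim. Alternatively, one can plug in $\mathbf{R}_K^{-1} = \mathbf{E}^\top\mathbf{M}_q^{-1/2}$ directly and cancel $\mathbf{M}_q^{-1/2}\mathbf{M}_q\mathbf{M}_q^{-1/2} = \mathbf{I}$ and then $\mathbf{E}^\top\mathbf{E} = \mathbf{I}$.

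There is no real obstacle here. The only points needing care are:
\begin{itemize}
\item using the full $p=d$ transform, since for $p<d$ the identity $\mathbf{R}\mathbf{R}^\top = \mathbf{M}_q$ fails, as noted in the remark after Theorem~1;
\item relying on positive definiteness of $\mathbf{M}_q$, so that $\mathbf{M}_q^{\pm 1/2}$ exist and are symmetric.
\end{itemize}
The row-vector convention is used throughout, so the transposes must be placed consistently. The argument also makes clear that the result holds for any $\mathbf{R}$ with $\mathbf{R}\mathbf{R}^\top = \mathbf{M}_q$, independently of the orthogonal factor $\mathbf{E}$.
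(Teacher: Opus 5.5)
Your proof is correct and is essentially the paper's argument: both rest on $\mathbf{R}_K\mathbf{R}_K^\top = \mathbf{M}_q$, which follows from $\mathbf{E}\mathbf{E}^\top = \mathbf{I}$ at $p=d$. The only difference is direction: the paper expands $\|\mathbf{r}-\hat{\mathbf{r}}\|_2^2$ using $\mathbf{r}-\hat{\mathbf{r}} = (\mathbf{k}-\hat{\mathbf{k}})\mathbf{R}_K$, while you expand the $\mathbf{M}_q$-norm side using $\mathbf{R}_K^{-1}$. Your closing remark that only $\mathbf{R}\mathbf{R}^\top = \mathbf{M}_q$ matters is correct, and it shows the grouping permutation folded into $\mathbf{R}_K$ leaves the relation intact.
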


\begin{proof}
With $\mathbf{r} - \hat{\mathbf{r}} = (\mathbf{k} - \hat{\mathbf{k}})\mathbf{R}$,
\begin{equation}
  \|\mathbf{r} - \hat{\mathbf{r}}\|_2^2
  = (\mathbf{k} - \hat{\mathbf{k}})\,\mathbf{R}\mathbf{R}^\top(\mathbf{k} - \hat{\mathbf{k}})^\top ,
\end{equation}
and $\mathbf{R}\mathbf{R}^\top = \mathbf{M}_q^{1/2}\mathbf{E}\mathbf{E}^\top\mathbf{M}_q^{1/2} = \mathbf{M}_q$, since $\mathbf{E}$ is orthogonal for $p = d$.
\end{proof}

\subsection{Corollary 1: Value Transform}
\label{sec:proof-cor1}

\begin{corIrestated}
Let $\mathbf{o}_i = \mathbf{s}_i \mathbf{V}$ denote the attention
outputs on calibration data, and
$\mathbf{E}\boldsymbol{\Lambda}\mathbf{E}^\top$ the eigendecomposition
of $\mathbf{M}_o = \mathbf{V}^\top \mathbf{M}_s \mathbf{V}$. The minimizer of
$\| \mathbf{S}\mathbf{V} - \mathbf{S}\widehat{\mathbf{V}} \|_F^2$ over low-rank
approximations is $\mathbf{R}_V = \mathbf{E}_{1:p}$, with codes
$\mathbf{s}_j = \mathbf{v}_j \mathbf{R}_V$ and reconstruction
$\widehat{\mathbf{v}}_j = \mathbf{s}_j \mathbf{R}_V^\top$.
\end{corIrestated}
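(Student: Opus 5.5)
I plan to follow the same template as the proof of Theorem~1, now with the weighting on the token side instead of the feature side. First, I rewrite the objective so that it becomes a trace against $\mathbf{M}_o$. For a rank-$p$ linear reconstruction $\widehat{\mathbf{V}} = \mathbf{V}\mathbf{R}\mathbf{D}$, where $\mathbf{R}\in\mathbb{R}^{d\times p}$ and $\mathbf{D}\in\mathbb{R}^{p\times d}$, the error is
\[
\|\mathbf{S}\mathbf{V}-\mathbf{S}\widehat{\mathbf{V}}\|_F^2
= \tr\bigl[(\mathbf{I}-\mathbf{R}\mathbf{D})^\top \mathbf{V}^\top\mathbf{M}_s\mathbf{V}(\mathbf{I}-\mathbf{R}\mathbf{D})\bigr]
= \tr\bigl[(\mathbf{I}-\mathbf{R}\mathbf{D})^\top \mathbf{M}_o(\mathbf{I}-\mathbf{R}\mathbf{D})\bigr],
\]
using $\mathbf{M}_s=\mathbf{S}^\top\mathbf{S}$. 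Equivalently, this is $\sum_i \|\mathbf{o}_i - \mathbf{o}_i\mathbf{R}\mathbf{D}\|_2^2$ with $\mathbf{o}_i=\mathbf{s}_i\mathbf{V}$. In this form the problem is ordinary (uncentered) PCA on the attention outputs, with Euclidean weight in feature space. This is the key contrast with Theorem~1, where the feature-side weight was $\mathbf{M}_q$.

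Next, I fix $\mathbf{R}$ and optimize the synthesis $\mathbf{D}$. Since the loss is quadratic in $\mathbf{D}$, the normal equations give $\mathbf{D}^* = (\mathbf{R}^\top\mathbf{M}_o\mathbf{R})^{-1}\mathbf{R}^\top\mathbf{M}_o$, assuming $\mathbf{R}^\top\mathbf{M}_o\mathbf{R}$ is invertible. Substituting back, $\mathbf{R}\mathbf{D}^*$ is the $\mathbf{M}_o$-orthogonal projector onto the column space of $\mathbf{R}$. The residual is then $\tr[\mathbf{M}_o]-\tr[\mathbf{M}_o\mathbf{R}(\mathbf{R}^\top\mathbf{M}_o\mathbf{R})^{-1}\mathbf{R}^\top\mathbf{M}_o]$. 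Setting $\mathbf{B}=\mathbf{M}_o^{1/2}\mathbf{R}$, the subtracted term becomes $\tr[\boldsymbol{\Pi}_{\mathbf{B}}\mathbf{M}_o]$, where $\boldsymbol{\Pi}_{\mathbf{B}}$ is a rank-$p$ orthogonal projector.

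Then I apply Ky Fan \cite{fan1949theorem}, exactly as in Step~5 of Theorem~1. The maximum of $\tr[\boldsymbol{\Pi}\mathbf{M}_o]$ is $\sum_{i\le p}\lambda_i$, and it is attained when $\boldsymbol{\Pi}=\mathbf{E}_{1:p}\mathbf{E}_{1:p}^\top$. The remaining step is to pick a representative $\mathbf{R}$ that achieves this optimum. I take $\mathbf{R}=\mathbf{E}_{1:p}$. Because $\mathbf{E}_{1:p}$ spans an $\mathbf{M}_o$-invariant subspace, the $\mathbf{M}_o$-orthogonal projector onto its span coincides with the Euclidean projector $\mathbf{E}_{1:p}\mathbf{E}_{1:p}^\top$. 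Concretely, $\mathbf{D}^* = (\boldsymbol{\Lambda}_{1:p})^{-1}\mathbf{E}_{1:p}^\top\mathbf{E}\boldsymbol{\Lambda}\mathbf{E}^\top=\mathbf{E}_{1:p}^\top$. This yields codes $\mathbf{v}_j\mathbf{R}_V$, reconstruction $\mathbf{s}_j\mathbf{R}_V^\top$, and optimal cost $\sum_{i>p}\lambda_i(\mathbf{M}_o)$. The transform is orthogonal because the weight $\mathbf{M}_s$ acts on tokens rather than on features.

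I expect the main obstacle to be the degenerate case where $\mathbf{R}^\top\mathbf{M}_o\mathbf{R}$ is singular, i.e., $\mathbf{M}_o$ has zero eigenvalues among its top $p$. I would handle it with pseudoinverses, or note that such directions contribute nothing to the cost. I also need to argue briefly that no offset helps, unlike the keys. Lemma~S1 does not apply to values, and the objective is posed without centering, so I would state that the minimization is over linear (uncentered) maps, matching the corollary's use of the second moment rather than the covariance.
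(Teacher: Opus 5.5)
Your proof is correct, but it runs the alternating minimization in the opposite order from the paper, and over a slightly smaller class.

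\textbf{The paper's route.} The paper leaves the codes free, fixes the synthesis $\mathbf{D}_V$, and solves for the codes first. Because $\mathbf{M}_s$ acts on the token axis, the normal equations admit the $\mathbf{M}_s$-independent solution $\mathbf{V}\mathbf{D}_V^\top(\mathbf{D}_V\mathbf{D}_V^\top)^{-1}$. The reconstruction is therefore a Euclidean projector $\mathbf{P}$ onto the row space of $\mathbf{D}_V$, and Ky Fan applies directly to $\tr[\mathbf{P}\mathbf{M}_o]$. The only matrix ever inverted is $\mathbf{D}_V\mathbf{D}_V^\top$, which is invertible by full row rank.

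\textbf{Your route.} You fix a linear encoder $\mathbf{R}$ and solve for the decoder, which reintroduces a feature-side weight: $\mathbf{R}\mathbf{D}^*$ is an $\mathbf{M}_o$-orthogonal projector. You then need $\mathbf{M}_o^{1/2}$, invertibility of $\mathbf{R}^\top\mathbf{M}_o\mathbf{R}$ (the source of your degenerate case), and the invariant-subspace observation to recover $\mathbf{E}_{1:p}^\top$. What this buys is the transparent reading of the loss as $\sum_i\|\mathbf{o}_i-\mathbf{o}_i\mathbf{R}\mathbf{D}\|_2^2$, i.e., uncentered PCA of the outputs, parallel to Theorem~1. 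What it costs is generality: as written, it certifies optimality only among codes of the form $\mathbf{s}_j=\mathbf{v}_j\mathbf{R}$, whereas the paper's ``low-rank approximations'' allow arbitrary codes.

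\textbf{A one-line fix.} Eckart--Young removes both costs. For any $\widehat{\mathbf{V}}=\mathbf{S}_c\mathbf{D}_V$ with $\mathbf{D}_V\in\mathbb{R}^{p\times d}$, the matrix $\mathbf{S}\widehat{\mathbf{V}}$ has rank at most $p$. Hence $\|\mathbf{S}\mathbf{V}-\mathbf{S}\widehat{\mathbf{V}}\|_F^2\ge\sum_{i>p}\lambda_i(\mathbf{V}^\top\mathbf{S}^\top\mathbf{S}\mathbf{V})=\sum_{i>p}\lambda_i(\mathbf{M}_o)$. The choice $\widehat{\mathbf{V}}=\mathbf{V}\mathbf{E}_{1:p}\mathbf{E}_{1:p}^\top$ attains this bound with no inversion at all.

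\textbf{The offset.} Drop the plan to argue that no offset helps; read literally, it is false. Each row of $\mathbf{S}$ sums to one, so $\mathbf{S}(\mathbf{V}-\mathbf{1}\mathbf{c}_V)=\mathbf{S}\mathbf{V}-\mathbf{1}\mathbf{c}_V$. A shared value offset thus passes through attention exactly, and allowing one can only lower the error (it amounts to centering the outputs). Your fallback is the right resolution: the corollary is posed over linear, uncentered reconstructions, which is why $\mathbf{M}_o$ is a second moment rather than a covariance.
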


Let $\boldsymbol{\alpha}_i \in \mathbb{R}^{1\times M}$ denote the $i$th row of $\mathbf{S}$, so the attention output for calibration query $i$ is $\mathbf{o}_i = \boldsymbol{\alpha}_i\mathbf{V} \in \mathbb{R}^{1\times d}$. We parameterize a rank-$p$ linear reconstruction with codes $\mathbf{s}_j \in \mathbb{R}^{1\times p}$ and synthesis $\mathbf{D}_V \in \mathbb{R}^{p\times d}$ of full row rank, $\widehat{\mathbf{v}}_j = \mathbf{s}_j\mathbf{D}_V$, and minimize the output distortion
\begin{equation}
  L(\mathbf{D}_V, \{\mathbf{s}_j\})
  = \sum_{i=1}^{N} \bigl\| \boldsymbol{\alpha}_i\mathbf{V} - \boldsymbol{\alpha}_i\widehat{\mathbf{V}} \bigr\|_2^2 .
  \label{eq:supp-value-objective}
\end{equation}

\paragraph{Step 1: reorder to expose the score second moment.}
With $\boldsymbol{\delta}_j = \mathbf{v}_j - \mathbf{s}_j\mathbf{D}_V$ and $\boldsymbol{\Delta} \in \mathbb{R}^{M\times d}$ stacking the $\boldsymbol{\delta}_j$ as rows,
\begin{equation}
\sum_{i} \Bigl\| \sum_{j} \alpha_{ij}\,\boldsymbol{\delta}_j \Bigr\|_2^2
    = \sum_{j,j'} (\mathbf{M}_s)_{jj'}\,\boldsymbol{\delta}_j\boldsymbol{\delta}_{j'}^\top
    = \tr\bigl[\boldsymbol{\Delta}^\top\mathbf{M}_s\boldsymbol{\Delta}\bigr],
\end{equation}
with $\mathbf{M}_s = \mathbf{S}^\top\mathbf{S} = \sum_i \boldsymbol{\alpha}_i^\top\boldsymbol{\alpha}_i$. Attention reads values through score-weighted sums (mixing tokens).

\paragraph{Step 2: optimal codes for fixed $\mathbf{D}_V$.}
Writing $\boldsymbol{\Delta} = \mathbf{V} - \mathbf{S}_c\mathbf{D}_V$ with $\mathbf{S}_c \in \mathbb{R}^{M\times p}$ stacking the codes gives
\begin{equation}
  \mathbf{M}_s\,\mathbf{S}_c\,\mathbf{D}_V\mathbf{D}_V^\top
  = \mathbf{M}_s\,\mathbf{V}\mathbf{D}_V^\top .
\end{equation}
A solution independent of $\mathbf{M}_s$ is
\begin{equation}
  \mathbf{S}_c^* = \mathbf{V}\mathbf{D}_V^\top\bigl(\mathbf{D}_V\mathbf{D}_V^\top\bigr)^{-1},
  \label{eq:supp-value-codes}
\end{equation}
the ordinary Euclidean projection of each $\mathbf{v}_j$ onto the row space of $\mathbf{D}_V$: the weight lives on the token axis, orthogonal to the head-space axis along which the projection acts, so the projection geometry is standard.\footnote{If $\mathbf{M}_s$ is invertible, \eqref{eq:supp-value-codes} is the unique solution. Invertibility is not required for what follows: the optimal basis depends on $\mathbf{M}_s$ only through the $d\times d$ matrix $\mathbf{M}_o = \mathbf{V}^\top\mathbf{M}_s\mathbf{V}$, which has rank $\min(d, N)$ regardless of the cache length $M$.}

\paragraph{Step 3: optimal synthesis.}
Substituting, $\mathbf{S}_c^*\mathbf{D}_V = \mathbf{V}\mathbf{P}$ with $\mathbf{P} = \mathbf{D}_V^\top(\mathbf{D}_V\mathbf{D}_V^\top)^{-1}\mathbf{D}_V$, a standard rank-$p$ orthogonal projector. Then
\begin{equation}
\tr\bigl[(\mathbf{I}-\mathbf{P})\,\mathbf{V}^\top\mathbf{M}_s\mathbf{V}\,(\mathbf{I}-\mathbf{P})\bigr]
  = \tr[\mathbf{M}_o] - \tr[\mathbf{P}\mathbf{M}_o],
\end{equation}
with $\mathbf{M}_o = \mathbf{V}^\top\mathbf{M}_s\mathbf{V}$. Maximizing $\tr[\mathbf{P}\mathbf{M}_o]$ over rank-$p$ orthogonal projectors is again the PCA problem of Sec.~\ref{sec:proof-thm1}, Step 5: with $\mathbf{M}_o = \mathbf{E}\boldsymbol{\Lambda}\mathbf{E}^\top$, the maximizer is $\mathbf{P}^* = \mathbf{E}_{1:p}\mathbf{E}_{1:p}^\top$, i.e., $\mathbf{D}_V^* = \mathbf{E}_{1:p}^\top$ and, by \eqref{eq:supp-value-codes}, $\mathbf{s}_j^* = \mathbf{v}_j\mathbf{E}_{1:p}$. This is Corollary~1 with $\mathbf{R}_V = \mathbf{E}_{1:p}$; since $\mathbf{E}_{1:p}$ has orthonormal columns, the synthesis is the transpose and no normalization is needed.

\paragraph{Second moment of the attention outputs.}
Although $\mathbf{M}_o$ was defined through the $M\times M$ matrix $\mathbf{M}_s$, it is the empirical second moment of the attention outputs:
\begin{equation}
\mathbf{V}^\top\Bigl(\sum_i \boldsymbol{\alpha}_i^\top\boldsymbol{\alpha}_i\Bigr)\mathbf{V}
  = \sum_{i=1}^{N} (\boldsymbol{\alpha}_i\mathbf{V})^\top(\boldsymbol{\alpha}_i\mathbf{V})
  = \sum_{i=1}^{N} \mathbf{o}_i^\top\mathbf{o}_i .
\end{equation}
We never form $\mathbf{M}_s$ on the token axis, which grows with context length; we accumulate outer products of attention-output vectors in head space, a $d\times d$ matrix. \hfill$\blacksquare$

\begin{remark}
No offset appears in Corollary~1, but centering the values is also free: since $\mathbf{S}\mathbf{1} = \mathbf{1}$, a shared value offset satisfies $\mathbf{S}(\mathbf{V} - \mathbf{1}\mathbf{c}_V) = \mathbf{S}\mathbf{V} - \mathbf{1}\mathbf{c}_V$, so it can be subtracted before quantization and added back to the output exactly.
\end{remark}

\subsection{Query Statistics under Grouped-Query Attention}
\label{sec:gqa}

In grouped-query attention (GQA) \cite{ainslie2023gqa}, one KV head is shared by $H$ query heads with queries $\mathbf{Q}^{(1)},\dots,\mathbf{Q}^{(H)}$. The logit error for the shared keys is
\begin{equation}
  \sum_{h=1}^{H}\bigl\|\mathbf{Q}^{(h)}\mathbf{K}^\top - \mathbf{Q}^{(h)}\widehat{\mathbf{K}}^\top\bigr\|_F^2
  =  \sum_{j=1}^{M} (\mathbf{k}_j - \widehat{\mathbf{k}}_j)
    \Bigl(\sum_{h=1}^{H}\mathbf{M}_q^{(h)}\Bigr)
    (\mathbf{k}_j - \widehat{\mathbf{k}}_j)^\top ,
\end{equation}
with $\mathbf{M}_q^{(h)} = \mathbf{Q}^{(h)\top}\mathbf{Q}^{(h)}$. This has the same form as \eqref{eq:qmse} with $\mathbf{M}_q = \sum_h \mathbf{M}_q^{(h)}$, so we accumulate the query second moment over the heads sharing each KV head.

\section{Quantizer design: grouping and rate allocation}
\subsection{Theorem 2: Grouping}
\label{sec:proof-thm2}

\begin{thmIIrestated}
Under the model~\eqref{eq:zador}, for any partition $\pi$ the allocation
minimizing $D(\pi,\{b_\ell\})$ subject to $\sum_\ell b_\ell = Lb$ is
\[
  b^{*}_\ell(\pi) \;=\; b \;+\; \frac{1}{2g}\,
  \log_2\Big({v_\ell(\pi)}\big/{\prod_m v_m(\pi)^{1/L}}\Big),
\]
which spends more bits on groups of larger volume. The resulting optimal
distortion, $D^{*}(b) = C_gL\,2^{-2b}\bigl(\prod_{i=1}^d\sigma_i^2\bigr)^{1/d}$,
is the same for every $\pi$, assuming that rates can be the arbitrary real values
from~\eqref{eq:alloc}.
\end{thmIIrestated}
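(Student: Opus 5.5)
We treat this as classical reverse water-filling without the nonnegativity constraint. Drop the $(1+o(1))$ factor as licensed by the model \eqref{eq:zador}. The objective is then $D(\pi,\{b_\ell\}) = C_g\sum_{\ell=1}^{L} 2^{-2b_\ell}\, v_\ell(\pi)^{1/g}$, and the only constraint is the linear budget $\sum_\ell b_\ell = Lb$ with $b_\ell\in\mathbb{R}$.

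Each summand is strictly convex in $b_\ell$, so the problem is a convex program with one affine constraint. We therefore form the Lagrangian $C_g\sum_\ell 2^{-2b_\ell}v_\ell^{1/g} + \mu(\sum_\ell b_\ell - Lb)$. Its stationarity conditions read $2\ln 2\, C_g\, 2^{-2b_\ell}v_\ell^{1/g} = \mu$ for every $\ell$. So at the optimum every group contributes the same distortion, $D_\ell = D^*/L$. Taking $\log_2$ gives $b_\ell = \tfrac{1}{2g}\log_2 v_\ell + \text{const}$. The constant is fixed by summing over $\ell$ and imposing the budget, which yields $\text{const} = b - \tfrac{1}{2gL}\sum_m\log_2 v_m$. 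This is exactly \eqref{eq:alloc}, and it is visibly increasing in $v_\ell$.

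An alternative that avoids Lagrange multipliers, and also certifies global optimality directly, is the AM--GM inequality. Write $\sum_\ell 2^{-2b_\ell}v_\ell^{1/g} \ge L\bigl(\prod_\ell 2^{-2b_\ell}v_\ell^{1/g}\bigr)^{1/L} = L\,2^{-2b}\bigl(\prod_\ell v_\ell\bigr)^{1/(gL)}$, using the budget in the exponent. Equality holds iff all terms $2^{-2b_\ell}v_\ell^{1/g}$ are equal, which reproduces the allocation above. I would present this route as the main proof and mention the Lagrangian only as the way the formula is found.

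For the value of the minimum, we use the identity noted before the theorem, $\prod_\ell v_\ell(\pi) = \prod_{i=1}^d\sigma_i^2$, which holds because the $G_\ell$ partition $\{1,\dots,d\}$. Together with $gL = d$, this turns the bound into $D^*(b) = C_g L\,2^{-2b}(\prod_i\sigma_i^2)^{1/d}$. The right-hand side contains no trace of $\pi$, so the optimal distortion is the same for every partition.

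There is no real obstacle. The only care needed is bookkeeping: we must state that real-valued rates are allowed (possibly negative for tiny volumes), so no KKT inequality constraints are active. We also must note that $C_g$ is group-independent, as stated in the appendix definitions, so that it factors out of the AM--GM step.
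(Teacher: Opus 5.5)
Your proposal is correct and follows essentially the same argument as the paper: the paper forms the same Lagrangian, invokes convexity to make the first-order conditions sufficient, equalizes the per-group distortions, and fixes the constant through the budget. It then substitutes back using $\prod_\ell v_\ell(\pi)=\prod_i\sigma_i^2$ with $Lg=d$. Your AM--GM certificate for the variable-rate part is a minor variant that yields global optimality and uniqueness directly without appealing to convexity; the paper uses that same AM--GM step only for the fixed-rate claim $\sum_\ell v_\ell^{1/g}\ge L(\prod_i\sigma_i^2)^{1/d}$.
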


We prove this statement together with the fixed-rate claim that follows
Theorem~2 in the main text: under $b_\ell \equiv b$, the distortion of a partition
$\pi$ is $C_g\,2^{-2b}\sum_\ell v_\ell(\pi)^{1/g} \ge D^{*}(b)$, with equality iff
$v_1(\pi) = \cdots = v_L(\pi)$.

By the model~\eqref{eq:zador} the total distortion is $D(\pi,\{b_\ell\}) =
C_g\sum_\ell 2^{-2b_\ell} v_\ell(\pi)^{1/g}$; the constant $C_g>0$ is common to all
groups, so it leaves the minimizing $\pi$ and $\{b_\ell\}$ unchanged and only
rescales the optimal value. We therefore solve
\begin{equation}
  \min_{\pi}\;\min_{\{b_\ell\}:\,\sum_\ell b_\ell = Lb}\;
  \sum_{\ell=1}^{L} 2^{-2b_\ell}\, v_\ell(\pi)^{1/g} ,
  \label{eq:supp-grouping}
\end{equation}
and restore $C_g$ in the optimal distortion at the end.

\paragraph{Optimal allocation and distortion.}
For fixed $\pi$, each term $2^{-2b_\ell}v_\ell^{1/g}$ is convex in $b_\ell$ and the constraint is affine, so first-order conditions are sufficient. Stationarity of the Lagrangian
$\mathcal{L} = \sum_\ell 2^{-2b_\ell}v_\ell^{1/g} + \mu(\sum_\ell b_\ell - Lb)$
gives $2\ln 2 \cdot 2^{-2b_\ell}v_\ell^{1/g} = \mu$ for all $\ell$: the optimal allocation equalizes per-group distortions. Solving and enforcing the constraint,
\begin{equation}
  b_\ell^* = b + \frac{1}{2g}\log_2 \bigg(
  {v_\ell(\pi)}\big/\prod_m v^{1/L}_m(\pi)\bigg),
\end{equation}
which is \eqref{eq:alloc}. Substituting back, every term equals $2^{-2b}\bigl(\prod_m v_m(\pi)\bigr)^{1/(Lg)}$, so the inner minimum is
\begin{equation}
  L\,2^{-2b}\Bigl(\prod_\ell v_\ell(\pi)\Bigr)^{1/(Lg)}
  = L\,2^{-2b}\Bigl(\prod_{i=1}^{d}\sigma_i^2\Bigr)^{1/d},
\end{equation}
where the last equality uses $\prod_\ell v_\ell(\pi) = \prod_{i=1}^{d}\sigma_i^2$ for every partition and $Lg = d$. The value is independent of $\pi$; restoring the common factor, the optimal distortion is $D^* = C_g\,L\,2^{-2b}\bigl(\prod_{i=1}^{d}\sigma_i^2\bigr)^{1/d}$, proving the theorem.

\paragraph{Fixed rates.}
Under $b_\ell \equiv b$, the distortion is $C_g\,2^{-2b}\sum_\ell v_\ell(\pi)^{1/g}$. By the arithmetic--geometric mean inequality applied to the nonnegative numbers $v_\ell^{1/g}$,
\begin{equation}
  \sum_{\ell=1}^{L} v_\ell^{1/g}
  \ge L\, \Bigl(\prod_\ell v_\ell^{1/g}\Bigr)^{1/L}
  = L\, \Bigl(\prod_{i=1}^{d}\sigma_i^2\Bigr)^{1/d},
\end{equation}
with equality iff all $v_\ell$ coincide, so the fixed-rate distortion is at least $C_g\,L\,2^{-2b}(\prod_i\sigma_i^2)^{1/d} = D^*$. The equal-rate distortion attains the unconstrained optimum on volume-equalizing partitions. \hfill$\blacksquare$

\subsection{Low-Rate Version of the Grouping Bound}
\label{sec:low-rate}

The remark in Sec.~\ref{sec:vq} removes the high-resolution assumption for independent Gaussian groups. Let $\mathbf{r}_{G_\ell} \in \mathbb{R}^{1\times g}$ have independent Gaussian entries with variances $\{\sigma_i^2\}_{i\in G_\ell}$, and let $\mathcal{Q}_\ell$ be \emph{any} quantizer with $gb_\ell$ bits. The rate of a fixed-rate quantizer upper-bounds the mutual information between source and reconstruction, so the distortion is bounded below by the distortion-rate function of the source. For an independent Gaussian vector, reverse water-filling \cite{berger1971rate} gives, when the entry-level distortion $D/g$ is below $\min_{i\in G_\ell}\sigma_i^2$,
\begin{equation}
  R(D) = \sum_{i\in G_\ell} 1/2 \, \log_2\frac{g\, \sigma_i^2}{D}
       = \frac{g}{2}\log_2\frac{g\, v_\ell^{1/g}}{D} .
\end{equation}
Setting $R(D) \le gb_\ell$ and solving for $D$,
\begin{equation}
  \mathbb{E}\bigl\|\mathbf{r}_{G_\ell} - \mathcal{Q}_\ell(\mathbf{r}_{G_\ell})\bigr\|_2^2
  \ge g\, v_\ell^{1/g}\, 2^{-2b_\ell} .
\end{equation}
The dependence on the partition is only through $v_\ell^{1/g}$, so the fixed-rate argument of Sec.~\ref{sec:proof-thm2} applies to this lower bound.

\section{High-Resolution Model: Empirical Test}
\label{app:hires}

High-resolution quantization theory is often used as a design principle for compressors deployed at finite rates \cite{sullivan1998rate}. The accuracy of the approximation depends on whether the quantization cells are small relative to variations in the source density and distortion measure, which can hold even at modest nominal rates. We emphasize, in any case, that our results are asymptotic rather than finite-rate guarantees. We test, at and around the deployed rate, the specific predictions that determine our design.

We focus on Theorem~2, which rests on the high-resolution distortion model~\eqref{eq:zador},
\begin{equation}
D_\ell(b)
=
C_g \,2^{-2b}\,v_\ell(\pi)^{1/g},
\label{eq:hires-model}
\end{equation}
while the method operates at $b=2$, where the asymptotics are not
guaranteed to hold. For a group $G_\ell$ of $g$ transform entries, a partition $\pi$,
and a rate $b$, we define the empirical distortion as
\begin{equation}
\widehat{D}_\ell(b)
= 
\mathbb{E}_{\mathrm{test}}
\left[
  \left\|
    \mathbf{r}_{G_\ell}
    -
    \mathcal{Q}_\ell(\mathbf{r}_{G_\ell})
  \right\|_2^2
\right].
\label{eq:hires-empirical-distortion}
\end{equation}
We estimate this quantity using a $2^{gb}$-entry $k$-means codebook
trained on training-split coefficients and evaluated on held-out
coefficients. We vary $b$ unless otherwise stated and set $g=4$ (matching the deployed method). Here
$\mathbb{E}_{\mathrm{test}}$ denotes the empirical mean over held-out
coefficients, and $\widehat{D}_{\ell,\mathrm{train}}$ denotes the same
estimate on the training split.

We evaluate~\eqref{eq:hires-model} using parameters estimated from the
training data. The group volume is
\begin{equation}
v_\ell(\pi)
=
\prod_{i\in G_\ell}\sigma_i^2,
\label{eq:hires-group-volume}
\end{equation}
where the coordinate variances $\sigma_i^2$ are estimated on the
training split. Let $b_0=2.5$. We estimate the common constant once as
\begin{equation}
\widehat{C}_g
=
2^{2b_0}
\operatorname*{median}_{\ell}
\left({
  \widehat{D}_{\ell,\mathrm{train}}(b_0)
}/{
  v_\ell(\pi)^{1/g}
}\right),
\label{eq:hires-constant}
\end{equation}
where the median is taken over the groups of the equalizing partition
and all tested (layer, KV head) pairs. We hold $\widehat{C}_g$ fixed across all held-out rates, groups, and
partitions. In the empirical comparison, we therefore evaluate
\begin{equation}
D_\ell(b)
=
\widehat{C}_g\,2^{-2b}\,v_\ell(\pi)^{1/g}.
\label{eq:hires-fitted-model}
\end{equation}

We also report the volume-normalized empirical distortion ${\widehat{D}_\ell(b)}/{v_\ell(\pi)^{1/g}}$, which the model predicts to be approximately common across groups at a
given rate:
\begin{equation}
{D_\ell(b)}/{v_\ell(\pi)^{1/g}}
=
\widehat{C}_g\,2^{-2b}.
\label{eq:hires-normalized-model}
\end{equation}
Here the hat denotes an empirical estimate and is unrelated to the
reconstruction notation of Appendix~\ref{app:notation}.

We test three predictions. First, at each rate, the volume-normalized
distortions ${\widehat{D}_\ell(b)}/{v_\ell(\pi)^{1/g}}$ should approximately agree across groups. Second, across
rates, the volume-normalized distortion should follow the scaling
$2^{-2b}$. Third, at the deployed rate $b=2$, the measured total
distortion $\sum_{\ell=1}^{L}\widehat{D}_\ell(b)$ and the theoretical
partition criterion $\sum_{\ell=1}^{L}v_\ell(\pi)^{1/g}$ should order partitions in the same way.

\subsection{Setup}
We used Qwen3-8B, Qwen3-4B-Thinking-2507, and Llama-3.1-8B post-RoPE keys at three (layer, KV head) pairs across the model depth (layer 1, head 0; layer 17, head 3; layer 35, head 7 for the Qwen3 models, and layer 1, head 0; layer 15, head 3; layer 31, head 7 for the 32-layer Llama-3.1-8B), giving $3 \times L = 96$ groups per partition per rate at $d=128$, $g=4$. Coefficients are $\mathbf{r} = (\mathbf{k}-\bar{\mathbf{k}})\mathbf{R}_K$ with the deployed transform. Per-token RMS normalization was off for this study; it isolates the unnormalized transform coefficients analyzed in Theorem~2. Throughout, $b$ denotes the rate for the key branch in bits per entry. For Qwen3 we pooled three model captures (GPQA-198, MMLU, GPQA-32), each 8 sequences of 16384 tokens; examples 0-5 of each capture formed the training pool, from which we retained $2^{18}-1 = 262143$ coefficient vectors, and examples 6-7 formed the held-out set (98304 tokens); the training and held-out sets contain disjoint sequences from each capture. For Llama-3.1-8B, we used a single LongBench capture of 80 variable-length prompts split 0-23 / 24-47 (262144 training, 98304 held-out vectors). Codebooks were initialized by sampling $2^{gb}$ codewords uniformly without replacement from the training vectors, followed by 25 Lloyd iterations; empty cells were reseeded to the training points worst served by the current codebook. We consider rates $b \in \{1.0, 1.5, 2.0, 2.5\}$, i.e.\ $2^{gb} \in \{16, 64, 256, 1024\}$ codewords. In testing the partitions, we consider the three methods in Appendix~\ref{app:notation}: equalizing (deployed), random, and variance-sorted.

\subsection{Agreement across groups}
At each rate, dividing each group's distortion by $v_\ell^{1/g}$ removes
most of the variation across the 96 groups  (Fig.~\ref{fig:hires}a--b). The volume term
accounts for the bulk of the per-group differences, which is the component of the model that the partition argument of Theorem~2 relies on.

\subsection{Rate dependence}
We fit the generalized rate law $2^{-\alpha b}$ to the volume-normalized held-out distortions at $b\in\{1,1.5,2,2.5\}$ and obtain $\alpha=1.816$ for Qwen3-8B, $\alpha=1.828$ for Qwen3-4B-Thinking-2507, and $\alpha=1.878$ for Llama-3.1-8B, compared with the high-resolution value $\alpha=2$. If we compute the exponent between adjacent rates, which is more relevant for an asymptotic law, we obtain $1.796$, $1.813$, and $1.840$ on held-out data for the intervals $1{\to}1.5$, $1.5{\to}2$, and $2{\to}2.5$ on Qwen3-8B, increasing monotonically over the tested range, a trend consistent with approaching the asymptotic value $2$. The fitted constants agree closely across the three models, $\widehat{C}_g=1.518$, $1.507$, and $1.489$ respectively, as expected for a quantity that should depend only on $g$ and the source shape. As we observe in Fig.~\ref{fig:hires}a,~\ref{fig:hires}d, and~\ref{fig:hires}g, with $\widehat{C}_g$ estimated from training data at the top rate ($b=2.5$), the measured and predicted curves are closest at that rate and separate toward lower rates.

Because these distortions are measured on held-out coefficients, the
reported deviation includes both finite-rate effects and any residual
suboptimality of the trained vector quantizers; we do not attempt to
separate these effects.

\subsection{Partition comparison}
At $b=2$, Table~\ref{tab:hires-grouping} shows the measured and predicted per-group distortion over the 96 groups for each partition. The predicted criterion ranks the three partitions in the same order as the held-out distortion, with the deployed equalizing partition performing best. The model is quantitatively accurate for the equalizing and random partitions, but overestimates the distortion of the variance-sorted partition by $81.5\%$.

\begin{table}[!htbp]
\small
\centering
\caption{Partition comparison at $b=2$ for Qwen3-8B. Measured and predicted mean
per-group distortion over 96 groups. Predictions use~\eqref{eq:zador} with
$\widehat{C}_g=1.518$.}
\label{tab:hires-grouping}
\begin{tabular}{lccc}
\toprule
Partition & Measured & Predicted & Pred./meas. \\
\midrule
Equalizing     & 0.739 & 0.803 & 1.087 \\
Random         & 0.941 & 1.060 & 1.125 \\
Variance-sorted & 2.610 & 4.740 & 1.815 \\
\bottomrule
\end{tabular}
\end{table}

\paragraph{Residual dependence.}
We observe that our model provides worse predictions for the variance-sorted partition. Beyond existing finite-rate approximation errors, we argue that this effect is consistent with a stronger violation of the independent-Gaussian assumption. The transform decorrelates the entry at second order but
does not make them independent; the sorted partition places adjacent entries together, and those groups exhibit
substantially greater dependence as measured by Shannon total correlation
$\mathrm{TC} = \sum_j h_j - h_{\text{joint}}$
(Table~\ref{tab:hires-tc}, Kozachenko--Leonenko $k$-NN joint-entropy
estimates and $m$-spacing marginal estimates on training coefficients). We remark that total correlation indicates a
violation of the independence assumption but does not quantify its effect on distortion. Finite-sample entropy estimates may yield slightly negative estimated total correlations, explaining why a reported mean can be below the corresponding median.

\begin{table}[!htbp]
\small
\centering
\caption{Total correlation per 4-coordinate group for Qwen3-8B on training
coefficients.}
\label{tab:hires-tc}
\begin{tabular}{lccc}
\toprule
Grouping & Mean & Median & q90 \\
\midrule
Equalizing & 0.0290 & 0.0647 & 0.1077 \\
Random     & 0.0419 & 0.0643 & 0.1488 \\
Sorted     & 0.2284 & 0.0906 & 0.4488\\
\bottomrule
\end{tabular}
\end{table}

\begin{figure}[!htbp]
\centering
\includegraphics[width=\textwidth]{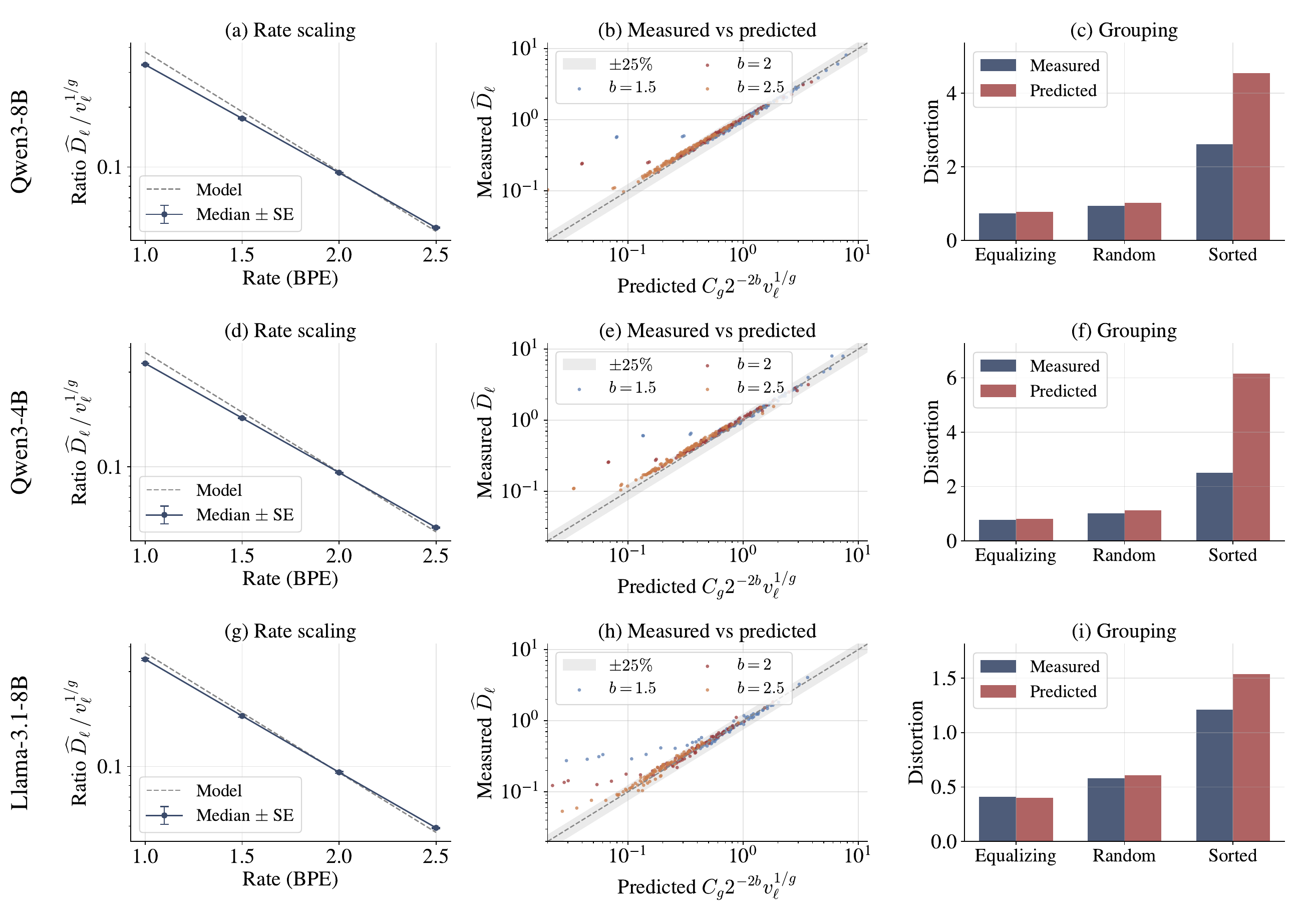}
\caption{High-resolution model test, 96 groups
pooled over three (layer, KV head) pairs of Qwen3-8B (a-c), Qwen3-4B-Thinking-2507 (d-f), and Llama-3.1-8B (g-i). (a, d)~Median
volume-normalized distortion $\widehat{D}_\ell / v_\ell^{1/g}$ against rate, with
the model curve $\widehat{C}_g 2^{-2b}$ tied to the median at $b=2.5$; the curves
are closest at the anchor and separate at low rate, where the local exponent is
below 2. (b, e)~Per-group measured distortion $\widehat{D}_\ell$ against the model $D_\ell$ at
$b \in \{1.5, 2, 2.5\}$; the band is $\pm 25\%$ around the identity.
(c, f)~Measured and predicted per-group distortion for the three
partitions at $b=2$; the ordering is preserved.}
\label{fig:hires}
\end{figure}

\section{Further evaluations}
RULER NIAH measures exact-match retrieval. To test whether the
comparison transfers beyond that setting, we evaluate on two suites: LongBench
(Sec.~\ref{sec:supp_longbench}), a multitask suite of QA,
summarization, and code completion, and LooGLE
(Sec.~\ref{sec:supp_loogle}), long-document QA and summarization with
longer inputs and a larger item count. 
\subsection{LongBench}
\label{sec:supp_longbench}
We evaluate on LongBench \cite{bai2023longbench} tasks (Qasper,
QMSum, MultiNews, TREC, TriviaQA, SAMSum, LCC, RepoBench-P) using the full test sets (200 examples per task; 500 for LCC
and RepoBench-P) and greedy decoding (default for this benchmark),
so runs are deterministic. Prompt KV entries are quantized during
prefill at 2.28 BPE for OSCAR and 2.22 BPE for \method{}, while KV entries generated during decoding remain in BF16;
layer~0 is kept in full precision for every arm. OSCAR and \method{}
differ only on the key quantizer: both share the same SQ value path. \method{} transforms and codebooks are calibrated on the same GPQA-Diamond subset we used in the main paper.

\paragraph{Qwen3-8B} We let both OSCAR and NOVA-KV use the sink-plus-recent band ($n_{\mathrm{sink}}{=}64$,
$n_{\mathrm{rec}}{=}256$). We also ablate the effect of this band: the
half-band rows reduce it to $n_{\mathrm{sink}}{=}32$,
$n_{\mathrm{rec}}{=}128$ for both methods. Since decoding is
deterministic, we assess separability with a paired bootstrap over
items (10000 resamples, percentile 95\% CIs); CIs on the mean row are
computed by resampling items within each task and averaging the eight
per-task means with equal weight, so each task counts 1/8 regardless
of test-set size. Per-task intervals are uncorrected for multiple
comparisons and are provided as descriptive analyses; the equal-weight
average is the primary comparison. 

\begin{table}[!htbp]
\small
\centering
\caption{LongBench scores per task for Qwen3-8B. Full band is
$(n_{\mathrm{sink}}, n_{\text{rec}}) = (64, 256)$; half band is $(32, 128)$.
BF16 is band-independent. Best compressed method per task and band
setting in \textbf{bold}.}
\label{tab:longbench_scores}
\begin{tabular}{l c c >{\columncolor{oursred}}c c >{\columncolor{oursred}}c}
\toprule
 & & \multicolumn{2}{c}{Full band} & \multicolumn{2}{c}{Half band} \\
\cmidrule(lr){3-4} \cmidrule(lr){5-6}
Task & BF16 & OSCAR & \cellcolor{oursred}\method{} & OSCAR & \cellcolor{oursred}\method{} \\
\midrule
Qasper      & 44.27 & \textbf{43.72} & 39.65          & \textbf{41.63} & 39.96 \\
QMSum       & 24.67 & 23.28          & \textbf{23.72} & 22.87          & \textbf{23.58} \\
MultiNews   & 24.92 & \textbf{24.40} & 23.70          & \textbf{24.29} & 23.34 \\
TREC        & 41.50 & \textbf{49.50} & 46.00          & 36.50          & \textbf{50.00} \\
TriviaQA    & 90.53 & 87.85          & \textbf{89.74} & 88.35          & \textbf{89.74} \\
SAMSum      & 40.00 & \textbf{39.75} & 37.73          & \textbf{39.78} & 37.66 \\
LCC         & 64.90 & 60.79          & \textbf{63.52} & 60.83          & \textbf{62.85} \\
RepoBench-P & 60.05 & 50.93          & \textbf{61.75} & 49.87          & \textbf{61.38} \\
\midrule
Mean        & 48.86 & 47.53          & \textbf{48.23} & 45.52          & \textbf{48.56} \\
\bottomrule
\end{tabular}
\end{table}
\begin{table}[!htbp]
\small
\centering
\caption{Paired bootstrap comparison, gap between \method{} and OSCAR, per task and
averaged. Entries are $\Delta$ with 95\% CI;
$^{*}$ marks CIs excluding zero.}
\label{tab:longbench_bootstrap}
\begin{tabular}{lcc}
\toprule
Task & Full band & Half band \\
\midrule
Qasper      & $-4.07\;[-7.63, -0.61]^{*}$  & $-1.67\;[-5.73, 2.30]$ \\
QMSum       & $+0.44\;[-0.31, 1.17]$       & $+0.71\;[0.04, 1.40]^{*}$ \\
MultiNews   & $-0.70\;[-1.12, -0.28]^{*}$  & $-0.95\;[-1.39, -0.50]^{*}$ \\
TREC        & $-3.50\;[-10.50, 3.50]$      & $+13.50\;[7.00, 20.50]^{*}$ \\
TriviaQA    & $+1.90\;[-0.77, 4.83]$       & $+1.40\;[-1.90, 4.75]$ \\
SAMSum      & $-2.02\;[-3.36, -0.70]^{*}$  & $-2.12\;[-3.57, -0.71]^{*}$ \\
LCC         & $+2.73\;[1.05, 4.41]^{*}$    & $+2.02\;[0.25, 3.80]^{*}$ \\
RepoBench-P & $+10.82\;[8.51, 13.19]^{*}$  & $+11.51\;[9.13, 13.86]^{*}$ \\
\midrule
Average      & $+0.70\;[-0.40, 1.79]$       & $+3.05\;[1.90, 4.20]^{*}$ \\
\bottomrule
\end{tabular}
\end{table}

\begin{table}[!htbp]
\small
\centering
\caption{Paired bootstrap comparison against the BF16 reference, using equal-weight average over the eight tasks. $^{*}$ marks CIs
excluding zero.}
\label{tab:longbench_bf16}
\begin{tabular}{lcc}
\toprule
Arm & Average $\Delta$ & 95\% CI \\
\midrule
OSCAR, full band   & $-1.33$ & $[-2.33, -0.29]^{*}$ \\
\method{}, full band & $-0.63$ & $[-1.47, 0.26]$ \\
OSCAR, half band   & $-3.34$ & $[-4.40, -2.24]^{*}$ \\
\method{}, half band & $-0.29$ & $[-1.26, 0.65]$ \\
\bottomrule
\end{tabular}
\end{table}

LongBench contexts are shorter than the regime where the methods
separate on RULER NIAH, so the differences in average score are small. First, against BF16
(Table~\ref{tab:longbench_bf16}), we detect no significant difference between \method{} and the uncompressed reference at both band
settings, while OSCAR is significantly below it at both. Second, at
the full band, \method{}'s $+0.70$ advantage over OSCAR in average score is not
separable from zero (Table~\ref{tab:longbench_bootstrap}). Third, at the half band the average-score gap becomes separable
($+3.05$): halving the BF16 band costs OSCAR $2.0$ points on average
but leaves \method{} almost unchanged, indicating that \method{} is less dependent on protecting sink and recent tokens.

\paragraph{GPT-OSS-20B.} To check whether the OSCAR failure on GPT-OSS
(cf.~Sec.~\ref{sec:gptoss-dissect}) is specific to RULER NIAH's
exact-match retrieval, we separately evaluate the same eight LongBench
tasks on GPT-OSS-20B using the same configuration we used for Qwen3-8B (in this case, we use the full protection band, $n_{\mathrm{sink}}=64, n_{\text{rec}} = 256$). 
 Table~\ref{tab:longbench-gptoss}
reports every task. OSCAR collapses on \emph{every} task (question
answering, summarization, classification, and code completion alike) for a mean of $7.51$ against BF16's $36.76$, an $80\%$ relative drop. \method{} stays within $7\%$ of BF16 ($34.24$), consistent
with the Qwen3-8B result in Table~\ref{tab:longbench_scores}.

\begin{table}[!htbp]
\small
\centering
\caption{LongBench \cite{bai2023longbench} scores per task for GPT-OSS-20B. Best
compressed method per task in \textbf{bold}.}
\label{tab:longbench-gptoss}
\begin{tabular}{lcc>{\columncolor{oursred}}c}
\toprule
Task & BF16 & OSCAR & \cellcolor{oursred}\method{} \\
\midrule
Qasper      & 36.92 & 4.84  & \cellcolor{oursred}\textbf{31.58} \\
QMSum       & 19.62 & 6.37  & \cellcolor{oursred}\textbf{19.32} \\
MultiNews   & 20.64 & 10.04 & \cellcolor{oursred}\textbf{18.89} \\
TREC        & 17.00 & 2.50  & \cellcolor{oursred}\textbf{11.00} \\
TriviaQA    & 76.58 & 2.67  & \cellcolor{oursred}\textbf{72.07} \\
SAMSum      & 28.18 & 3.22  & \cellcolor{oursred}\textbf{25.75} \\
LCC         & 50.82 & 17.57 & \cellcolor{oursred}\textbf{51.69} \\
RepoBench-P & 44.29 & 12.84 & \cellcolor{oursred}\textbf{43.63} \\
\midrule
Mean        & 36.76 & 7.51  & \cellcolor{oursred}\textbf{34.24} \\
\bottomrule
\end{tabular}
\end{table}

\subsection{LooGLE}
\label{sec:supp_loogle}

We evaluate on LooGLE \cite{li2023loogle} with Qwen3-8B ($1101$
items), scored by ROUGE-L, under the same serving configuration and
band as the LongBench evaluation (Sec.~\ref{sec:supp_longbench});
decoding is greedy, so runs are deterministic. Separability is
assessed with the same paired bootstrap over items (10000 resamples,
percentile 95\% CIs).

\begin{table}[!htbp]
\small
\centering
\caption{LooGLE ROUGE-L for Qwen3-8B over $n{=}1101$ items, with
paired bootstrap differences (95\% CIs; $^{*}$ marks CIs excluding
zero).}
\label{tab:loogle}
\begin{tabular}{lc}
\toprule
Arm & ROUGE-L \\
\midrule
BF16               & 31.50 \\
OSCAR              & 26.38 \\
\rowcolor{oursred}
\method{}            & 28.81 \\
\midrule
\multicolumn{2}{l}{\emph{Paired differences}} \\
\method{} $-$ OSCAR  & $+2.43\;[1.07, 3.78]^{*}$ \\
\method{} $-$ BF16   & $-2.69\;[-3.79, -1.62]^{*}$ \\
OSCAR $-$ BF16     & $-5.12\;[-6.60, -3.69]^{*}$ \\
\bottomrule
\end{tabular}
\end{table}

\paragraph{Discussion.}
\method{} scores $2.43$ ROUGE-L above OSCAR, separable from zero, and
$2.69$ below the BF16 reference, roughly half of OSCAR's $5.12$-point
degradation. Unlike on LongBench, where \method{} was statistically
indistinguishable from BF16, the larger item count in this dataset resolves all pairs. LooGLE complements RULER NIAH by measuring graded long-document QA and
summarization rather than exact-match retrieval.

\section{GPT-OSS-20B: Quantizing a Hybrid-Attention Cache}
\label{sec:gptoss-hybrid}

As attention computation and KV-cache size become the dominant cost of
long-context generation, recent architectures increasingly adopt hybrid
designs in which only a fraction of the layers attend globally over the full
context, and the remaining layers use a cheaper mechanism. The mechanism
varies across models: sliding-window attention
\cite{gemmateam2025gemma3,agarwal2025gpt}, gated delta networks \cite{yang2025gated},
or state-space layers \cite{lieber2024jamba}. The consequence for KV
quantization is the same in each case: only the global layers' cache grows
with context, so the cache that a quantizer sees is smaller, the information
it holds is denser, and there is less redundancy, limiting potential compression gains. GPT-OSS-20B \cite{agarwal2025gpt} is such a model and differs from
Llama-3.1-8B and Qwen3-8B in every property relevant to KV
quantization (Table~\ref{tab:gptoss-arch}): only 12 of its 24 layers attend
globally (the other 12 see a 128-token window), heads are half as wide, keys
are not bounded by QK-norm, every head carries a learned sink logit that
competes with tokens in the softmax, and the feed-forward blocks route each
token to 4 of 32 experts. Its growing per-token cache is $5$--$6\times$
smaller than the dense models'. We quantize only the 12 global-attention
layers; the sliding-window layers stay BF16, since their cache is bounded by
the window and does not grow with context. In this setting, \method{} is the
only 2-bit method (among the methods we tested) that remains effective (Table~\ref{tab:niah}). This
section describes the serving changes the architecture requires, analyzes
why the scalar baselines fail, and measures the rate at which scalar
quantization recovers.

\begin{table}[!htbp]
\small
\centering
\caption{Architecture properties relevant to KV quantization. ``KV
bytes/token'' counts the BF16 cache that grows with context (global-attention
layers).}
\label{tab:gptoss-arch}
\setlength{\tabcolsep}{3pt}
\renewcommand{\arraystretch}{0.9}
\begin{tabular}{lccc}
\toprule
 & \textbf{Llama-3.1} & \textbf{Qwen3-8B} & \textbf{GPT-OSS} \\
\midrule
layers (global / total) & 32 / 32 & 36 / 36 & 12 / 24 \\
sliding window & -- & -- & 128 (12 layers) \\
head dim.\ & 128 & 128 & 64 \\
heads (Q / KV) & 32 / 8 & 32 / 8 & 64 / 8 \\
QK-norm & no & yes & no \\
attention sink & no & no & learned \\
feed-forward & dense & dense & MoE (4 of 32) \\
KV bytes/token & 131072 & 147456 & 24576 \\
\bottomrule
\end{tabular}
\end{table}

\subsection{Attention sinks in the quantized serving path}
\label{sec:gptoss-sinks}

The learned sink enters the softmax as an extra logit per head that
contributes to the normalizer but has no value vector. Three changes are
required in the quantized read path.

\paragraph{Split-KV normalization.} In the two-stage decode kernel
(Sec.~\ref{sec:decode-kernel}), each KV split reduces independently. The sink
term $\exp(s_h - m)$ is added to the normalizer exactly once, after the
cross-split combine in stage~2, not per split.

\paragraph{Mean centering and offset invariance.} \method{} stores
centered keys and drops the constant $\mathbf{q}^\top\bar{\mathbf{k}}$
from the logits, which is harmless under a softmax
(Lemma~S1, Sec.~\ref{sec:lemma-offset}). With a sink the invariance fails:
token logits carry the offset but the learned sink logit does not, so the
sink's softmax share would be rescaled by
$\exp(\mathbf{q}^\top\bar{\mathbf{k}})$. The kernel therefore shifts the sink
logit by the same per-query constant (one $d$-dimensional dot product per KV
head per step), which restores exact equivalence.

\paragraph{Chunked prefill.} New chunks are scored against the dequantized
prefix by a flash-attention kernel \cite{dao2022flashattention} with no sink support; its output and
log-sum-exp are rescaled afterwards to fold the sink into the normalizer.

\subsection{On the OSCAR and QuaRot failures}
\label{sec:gptoss-dissect}

On GPT-OSS-20B, OSCAR collapses on retrieval (Table~\ref{tab:niah}) and
QuaRot collapses on every task, while \method{} stays within
a few points of BF16 at the same rate. We test whether this is an
integration artifact or a property of the model.

\paragraph{Serving stack.} We run the model under plain HuggingFace and overwrite
the cache with the same codec the pool applies. On a 10-needle greedy retrieval probe at 1K context,
GPT-OSS drops from 4 needles (its BF16 control on this harness) to 1 with
keys and values quantized, and to 0 with only keys quantized; Llama-3.1-8B retrieves
10 of 10 both in BF16 and with OSCAR under the identical procedure. Since the
value-side OSCAR codec is shared with \method{}, which is unaffected, the
failure is specific to SQ of the keys.

\paragraph{Reconstruction fidelity.} We measure the
OSCAR codec on real captured keys and queries in four bases
(Table~\ref{tab:gptoss-fidelity}). GPT-OSS keys \emph{reconstruct better}
than Llama's in the served bases ($9.1$ vs.\ $8.2$\,dB in the
Hadamard basis, $8.8$ vs.\ $8.1$\,dB in the calibrated one), and the rows
are not outlier-dominated there: the mean excess kurtosis of a key row is
$-0.3$ on GPT-OSS and $-0.1$ on Llama in the calibrated basis (versus
$20.8$ and $7.2$ in the identity basis, where both models are heavy-tailed
and both SNRs collapse). The same key error nevertheless produces
$3.5$--$3.9\times$ the relative attention-logit error on GPT-OSS, and the
ratio persists across all four bases.
The architecture concentrates all long-range information in 12 narrow-head
layers, so each cached element carries more task information and the logits
tolerate less noise: one scale per (token, head) at 2 bits
sits above GPT-OSS's noise tolerance and below Llama's. The MoE
feed-forward adds an amplification path: the perturbed attention output
enters the router, whose top-$k$ selection is a discrete function of the
hidden state. In a teacher-forced probe (24 decode steps at 1K context,
production codec), KV quantization changed the top-4 expert set at $70\%$
of (token, layer) router decisions, an error mode a dense feed-forward does
not exhibit.

\begin{table}[!htbp]
\small
\centering
\caption{OSCAR key quantization on GPT-OSS-20B (G) and Llama-3.1-8B (L):
reconstruction SNR of the keys and relative attention-logit error
$\|\mathbf{q}^\top(\mathbf{K}-\hat{\mathbf{K}})\|/\|\mathbf{q}^\top\mathbf{K}\|$,
per basis. Post-RoPE keys and queries captured from GPQA prompts (4 prompts
for the SNR statistics, 8 for the logit error), averaged over all quantized
layers and KV heads; per-(token, head) scale with the production percentile
clip ($0.96$). ``Calibrated'' is each model's served rotation.}
\label{tab:gptoss-fidelity}
\setlength{\tabcolsep}{4pt}
\renewcommand{\arraystretch}{0.9}
\begin{tabular}{lcccc}
\toprule
 & \multicolumn{2}{c}{\textbf{Key SNR (dB)}} & \multicolumn{2}{c}{\textbf{Logit error}} \\
\cmidrule(lr){2-3}\cmidrule(lr){4-5}
\textbf{Basis} & G & L & G & L \\
\midrule
calibrated & 8.8 & 8.1 & 0.64 & 0.17 \\
Hadamard   & 9.1 & 8.2 & 0.59 & 0.16 \\
random     & --  & --  & 0.69 & 0.17 \\
identity   & 3.0 & 5.8 & 0.89 & 0.44 \\
\bottomrule
\end{tabular}
\end{table}

\paragraph{QuaRot.} The QuaRot rows in
Table~\ref{tab:niah} already include the protective BF16
sink-plus-recent band and still sit at $0.0$. Without the band, i.e.\
unclipped min--max QuaRot-INT2 on every token from position~0, GPT-OSS produces
degenerate output at every prompt length we probed, from $90$ to
$8000$ tokens (greedy completions; served and reproduced in the
HuggingFace simulation), while Llama only degrades under the identical
procedure. On this model the unclipped scalar codec has no usable
operating point, with or without the band.

\subsection{Scalar quantization recovers at higher rates}
\label{sec:gptoss-recovery}

If information density is the correct explanation, a modest amount of
additional rate should restore the scalar codec. We simulate the OSCAR codec
bit-exactly on the write path of a BF16 server (values are dequantized
before storage; no memory is saved, and every token is quantized) and vary
the rate (Table~\ref{tab:gptoss-bits}). One additional bit takes the codec
from collapse to functional, and 4 bits matches BF16 on retrieval. Finer
scale granularity at 2 bits does not substitute, despite a higher rate than
INT3: the deficit is resolution per coordinate, not outlier range. \method{}
reaches the required fidelity at $2.22$ BPE through vector quantization
(NIAH-8K $89.6$, Table~\ref{tab:niah}), consistent with the main-text
result that, among the techniques we tested, it is the only 2-bit method that remains effective on this
model.

\begin{table}[!htbp]
\small
\centering
\caption{Scalar (OSCAR-codec) rate ladder on GPT-OSS-20B, simulated
bit-exactly in serving. ``Bits'' is the payload width of every stored key
and value coordinate; ``groups'' is the number of min--max scale/zero-point
pairs per (token, head) row of $d{=}64$ coordinates (1 is the served
configuration). Single greedy rollout; NIAH-8K on the 200-prompt
balanced subset (25 per subtask), MATH-500 on 100 prompts. BPE charges BF16
scale and zero-point per group ($b + 0.5g$).}
\label{tab:gptoss-bits}
\setlength{\tabcolsep}{4pt}
\renewcommand{\arraystretch}{0.9}
\begin{tabular}{ccccc}
\toprule
\textbf{Bits} & \textbf{Groups} & \textbf{BPE} & \textbf{NIAH-8K} & \textbf{MATH-500} \\
\midrule
2 & 1 & 2.5 & 0.8 & 4 \\
2 & 4 & 4.0 & 27.0 & 15 \\
3 & 1 & 3.5 & 82.8 & 73 \\
4 & 1 & 4.5 & 98.3 & 85 \\
\multicolumn{2}{c}{BF16} & 16 & 99.0 & 92 \\
\bottomrule
\end{tabular}
\end{table}

\section{A Chunked-Prefill Dequantization Leak}
\label{sec:gptoss-chunked-prefill}

Long prompts are prefilled in chunks rather than in one pass, bounding
peak prefill memory and letting the scheduler interleave a long prompt's
prefill with other requests' decoding. In a mixed-precision KV pool, each
chunk's keys and values are quantized as they are written, so a later
chunk of the same prompt attends over earlier chunks' dequantized rows
instead of their original values -- a leak that single-pass prefill does
not have.

The leak can be closed by keeping an exact copy of a request's own
in-flight rows until its final chunk completes, after which cached
content matches the quantized-on-write baseline exactly. This is not
free: while a request's own prefill is in progress, its in-flight rows
occupy close to BF16-sized cache rather than the compressed footprint,
trading part of the memory saving for correctness during that window.

Table~\ref{tab:chunked-prefill} reports RULER NIAH with the leak present
and closed. It is real and grows with context (Llama OSCAR: $+4.5$ at
$8$K, $+16.6$ at $128$K), and closing it recovers most of the gap for
\method{} and for Llama's OSCAR. GPT-OSS's OSCAR does not recover:
$2$--$11$ across all five lengths versus a floor of $0$ with the leak
present. The leak is real but does not explain the collapse
(Sec.~\ref{sec:gptoss-dissect}).

\begin{table}[!htbp]
\small
\centering
\caption{RULER NIAH accuracy with the chunked-prefill leak present
$\to$ closed. One seed, $25$ prompts per NIAH subtask ($200$ total).
OSCAR and \method{} on GPT-OSS-20B, and, as a control, Llama-3.1-8B.}
\label{tab:chunked-prefill}
\setlength{\tabcolsep}{4pt}
\renewcommand{\arraystretch}{0.9}
\begin{tabular}{lcccc}
\toprule
 & \multicolumn{2}{c}{\textbf{GPT-OSS-20B}} & \multicolumn{2}{c}{\textbf{Llama-3.1-8B}} \\
\cmidrule(lr){2-3}\cmidrule(lr){4-5}
Length & OSCAR & \method{} & OSCAR & \method{} \\
\midrule
8K   & $0.50 \to 11.38$ & $89.75 \to 92.75$ & $82.25 \to 86.75$ & $92.88 \to 95.00$ \\
16K  & $0.00 \to 4.50$  & $79.62 \to 92.75$ & $78.62 \to 86.12$ & $91.75 \to 94.50$ \\
32K  & $0.00 \to 2.38$  & $80.00 \to 90.62$ & $80.50 \to 86.38$ & $88.62 \to 93.25$ \\
64K  & $0.00 \to 4.25$  & $69.25 \to 85.00$ & $74.38 \to 85.12$ & $87.25 \to 93.38$ \\
128K & $0.00 \to 2.00$  & $54.50 \to 67.88$ & $31.12 \to 47.75$ & $54.25 \to 65.75$ \\
\bottomrule
\end{tabular}
\end{table}




\bibliographystyle{plainnat}
\bibliography{refs}

\end{document}